\documentclass{article}
\usepackage[utf8]{inputenc} % allow utf-8 input
\usepackage[T1]{fontenc}    % use 8-bit T1 fonts

\usepackage{url}            % simple URL typesetting
\usepackage{booktabs}       % professional-quality tables
\usepackage{amsfonts}       % blackboard math symbols
\usepackage{nicefrac}       % compact symbols for 1/2, etc.
\usepackage{microtype}      % microtypography
\usepackage{amsmath}
\usepackage[normalem]{ulem}
\usepackage[dvipsnames]{xcolor}
\usepackage{multirow}
\usepackage{array}
\usepackage{lmodern}

\usepackage{algorithm}
\usepackage{algpseudocode}

\usepackage[shortlabels]{enumitem}
\usepackage[font=small,labelfont=bf]{caption}
\usepackage{subcaption}
\usepackage[percent]{overpic}
\usepackage[margin=1.2in]{geometry}
\usepackage{amssymb, amsmath}
\usepackage{amsthm}
\usepackage{dsfont}
\usepackage{stmaryrd}
\usepackage{caption}
\usepackage[colorlinks=true, linkcolor=blue, citecolor=blue]{hyperref}
\usepackage{wrapfig}

\usepackage[linecolor=blue!60!,backgroundcolor=blue!10!,textwidth=2.5cm,textsize=scriptsize,disable=true]{todonotes}

\usepackage{color} 

\definecolor{airforceblue}{rgb}{0.36, 0.54, 0.66}

\input{macrogilles_main_modif_icml.sty}

\usepackage[authoryear,round]{natbib}
\usepackage{xcolor}

\newcommand{\HFF}{\cH_\Gamma}
\newcommand{\Spx}{\cS}
\newcommand{\omvect}{{\bm{\om}}}
\newcommand{\estmu}{\wh{\mu}}
\newcommand{\muNE}{\estmu}

\newcommand{\sigmac}{\sigma_{\texttt{c}}}

\newcommand{\MMD}{\mathrm{MMD}}

\newcommand{\comm}[3][inline]{\todo[#1, size=\tiny]{#2: #3}}

\newcommand{\batodo}[2][noinline]{\comm[color=magenta,#1]{BL}{#2}}
\newcommand{\abtodo}[2][noinline]{\comm[color=green,#1]{AB}{#2}}

\theoremstyle{plain}
\newtheorem{theorem}{Theorem}[section]
\newtheorem{proposition}[theorem]{Proposition}
\newtheorem{lemma}[theorem]{Lemma}
\newtheorem{corollary}[theorem]{Corollary}
\theoremstyle{definition}
\newtheorem{definition}[theorem]{Definition}
\newtheorem{assumption}[theorem]{Assumption}
\theoremstyle{remark}
\newtheorem{remark}[theorem]{Remark}
\newtheorem{example}[theorem]{Example}

\title{
	\vspace{-3.5em}
	\noindent\rule[0.5ex]{\linewidth}{1pt} 
	{\LARGE Adaptive Personalized Federated Learning \\ via Multi-task Averaging of Kernel Mean Embeddings} \
	\noindent\rule[0.5ex]{\linewidth}{1pt}\vspace{-0.5em}
}

\author{
	{\large Jean-Baptiste Fermanian\textsuperscript{\textdagger}} \qquad 
	{\large Batiste Le Bars*} \qquad 
	{\large Aurélien Bellet\textsuperscript{\textdagger}} \\[0.4cm]
	\textsuperscript{\textdagger} {\normalsize PreMeDICal, Inria, Idesp, Inserm, University of Montpellier} \\[0.1cm]
	* {\normalsize Univ. Lille, Inria, CNRS, Centrale Lille, UMR 9189, CRIStAL, F-59000 Lille} \\[0.1cm]
}
\date{}

\begin{document}

\maketitle

\begin{abstract}
   Personalized Federated Learning (PFL) enables a collection of agents to collaboratively learn individual models without sharing raw data. We propose a new PFL approach in which each agent optimizes a weighted combination of all agents' empirical risks, with the weights learned from data rather than specified a priori. The novelty of our method lies in formulating the estimation of these collaborative weights as a kernel mean embedding estimation problem with multiple data sources, leveraging tools from multi-task averaging to capture statistical relationships between agents. This perspective yields a fully adaptive procedure that requires no prior knowledge of data heterogeneity and can automatically transition between global and local learning regimes. By recasting the objective as a high-dimensional mean estimation problem, we derive finite-sample guarantees on local excess risks for a broad class of distributions, explicitly quantifying the statistical gains of collaboration. To address communication constraints inherent to federated settings, we also propose a practical implementation based on random Fourier features, which allows one to trade communication cost for statistical efficiency. Numerical experiments validate our theoretical results.
\end{abstract}

\section{Introduction}

Despite the growing volume of data collected worldwide, many application domains, such as medicine and ecology, remain fundamentally data-limited. In such contexts, a major scientific challenge is to learn effective models from data originating from multiple sources that are often heterogeneous and biased, yet share enough similarities to be jointly exploited. 
In healthcare, they may arise from different hospitals observing distinct patient populations or operating under variations in medical devices and clinical protocols \citep{rieke2020future,xu2021federated,nguyen2022federated}. % dasaradharami2023comprehensive,moshawrab2023reviewing,shahin2024federated}.
Similar challenges also appear in astrophysics, where data are collected across instruments, wavelengths, or angular resolutions that vary between observation centers \citep{elmahallawy2022asyncfleo,razmi2022board,chen2022satellite}.

When all data are directly accessible, learning from heterogeneous sources is commonly framed as multi-task learning \citep{caruana1997multitask,zhang2021survey}. We instead consider a more challenging and increasingly relevant setting in which data are distributed across multiple data owners (e.g.~hospitals), hereafter referred to as \emph{agents}, that seek to collaborate without sharing their raw data. Such constraints may arise because data are too sensitive to be shared or due to transmission costs, bandwdith or storage limitations, institutional constraints, or legal barriers. Over the past few years, this setting has been addressed within the framework of Federated Learning (FL, \citealp{kairouz2021advances}). Early FL methods primarily focused on learning a single global model that performs well on average across all decentralized data. However, due to the heterogeneity that naturally arises across agents in decentralized environments, this one-size-fits-all objective has quickly been recognized as a limitation. To address this issue, the paradigm of \emph{Personalized} FL (PFL) has emerged. Similarly to multi-task learning, PFL aims to learn a agent-specific models while still enabling collaboration between them. The central challenge is then to manage inter-agent heterogeneity and control the bias induced by leveraging data from other agents.

Numerous methods have been proposed to address this problem (see Section~\ref{sec:related_works} for an overview). Most of them enable collaboration by assuming some structure among agents, and in some cases even require that this structure is known—for example, that all local models are close to a global model, that agents form a fixed number of clusters, or that each model can be expressed as a linear combination of a local and a shared global model. These assumptions, however, are often violated in practice, limiting the methods' effectiveness. More generally, existing approaches are largely heuristic and provide no generalization guarantees demonstrating a statistical benefit of collaboration over learning in isolation. In this work, we address these gaps by proposing an approach that requires no prior knowledge of agents' heterogeneity, automatically adapts to their underlying structure, and comes with generalization guarantees that quantify the advantage of collaboration. More precisely:

\begin{itemize}
\item  We formulate the PFL problem as learning a mixture of the agents' data distributions. Assuming the loss lies to a \emph{Reproducing Kernel Hilbert Space} (RKHS), we link the excess risk to the \textit{Maximum Mean Discrepancy} (MMD) between a target agent's distribution and the estimated mixture (Eq.~\ref{eq:approx_err_to_excess_risk} and Lemma \ref{lem:excessrisk_mmd}). Learning the mixture weights by minimizing the MMD then amounts to aggregating the \emph{Kernel Mean Embeddings} (KMEs) of the agents' distributions.

\item Since KMEs are high-dimensional means, we leverage the Q-aggregation mean estimation method of \citet{blanchard2024estimation} to estimate the mixture weights. A novel theoretical analysis (Theorem \ref{prop:kme_aggregation} and Corollary~\ref{thm:kernel}) demonstrates the resulting statistical gains in terms of the excess risk evaluated on the target agent's distribution.

\item As sharing KMEs directly would typically require sharing all the raw data, conflicting with federated learning requirement, we propose a practical method based on \emph{random Fourier features}, for which we derive theoretical guarantees quantifying the trade-off between communication costs and statistical efficiency (Theorem~\ref{cor:kmerff}).

\item We validate our approach empirically on synthetic and real-world data, showing that it effectively adapts to the heterogeneity across agents. 
\end{itemize}
\section{Preliminaries}

\textbf{Notations.} For an integer $B \in \mathbb{N}$, let $\intr{B} := \set{ 1,\ldots,B}$. We denote by $\Spx_B:= \set[1]{\omvect \in [0,1]^B : \sum_{i=1}^{B} \omega_i =1}$ the $B$-simplex. For $a,b\in \mbr$, $a\vee b := \max(a,b)$.

\subsection{Setting and Objective}
\label{sec:setting}
\abtodo{un peu bizarre $B$ pour les agents, en plus on utilise $k$ comme indice, peut être au moins utiliser $K$?}\batodo{+1}\todo{pas de souci pour changer, faut juste le faire dans toutes les preuves aussi}
We consider a setting with $B$ agents, each having access to its own local dataset. The dataset of an agent $k \in \intr{B}$, denoted by
$Z_\bullet^{(k)} = \set[1]{ Z_i^{(k)}}_{i=1}^{n_k}$, consists of $n_k$ i.i.d.~samples drawn from a distribution $\mbp_k$ with support in $\cZ$. We assume no prior similarity between $\mbp_k$ and the distributions of other agents. Our goal is to leverage this multiplicity of sources to improve the model learned for a target agent, say $k=1$, beyond what could be achieved using only its local data. This kind of setting is referred as \textit{all-for-one} \citep{even2022sample}. Formally, we study the excess risk of a learned model~$\widehat{\theta}$: \looseness=-1
\begin{equation}\label{eq:def_gen_error}
\e{ \cR_1(\widehat{\theta})} - \cR_1(\theta^*_1)\;,
\end{equation}
where $\cR_k(\theta) = \ee{Z\sim\mbp_k}{\ell_\theta(Z)}$ denotes the risk associated with a loss function $\ell_\theta : \cZ \to\mbr$ and $\theta^*_k \in \argmin_{\theta \in \Theta} \cR_k(\theta)$ is an optimal model for agent $k$.

In a classical learning setting, we cannot directly minimize the population risk $\cR_1$, so a standard approach is to minimize its empirical counterpart using the local dataset $Z_\bullet^{(1)}$. The empirical risk is defined as $\widehat{\cR}_1(\theta) = \frac{1}{n_1}\sum_{i=1}^{n_1}\ell_\theta(Z_i^{(1)})$, and the excess risk~\eqref{eq:def_gen_error} of its minimizer is well studied, typically scaling at $\mathcal{O}(n_1^{-\nicefrac{1}{2}})$ (see e.g. \citealp{bach2024learning}). 
In this work, we address the PFL problem by minimizing a weighted empirical risk instead:
\begin{equation}\label{eq:weighted-risks}
\textstyle\widehat{\cR}_{\wh{\omvect}}(\theta) := \sum_{k=1}^{B}\widehat{\omega}_k \widehat{\cR}_k(\theta)\;,
\end{equation}
where the weights $\widehat{\omvect}\in\Spx_B$ are themselves to be learned. Our goal is to show that minimizing this weighted risk can improve the excess risk compared to minimizing $\widehat{\cR}_1$. We emphasize that, given the weights, our focus is not on minimizing~\eqref{eq:weighted-risks} in a FL setting, many algorithms exist for that, but on computing the weights in $\widehat{\omvect}$ and deriving statistical bounds on the excess risk \eqref{eq:def_gen_error}. Recall that, if the optimization is exact, i.e., $\widehat{\theta} \in \arg\min_\theta \widehat{\mathcal{R}}_{\widehat{\boldsymbol{\omega}}}(\theta)$, the excess risk is controlled by (twice) the generalization error:\looseness=-1
\begin{equation}\label{eq:approx_err_to_excess_risk}
\e{ \cR_{\widehat{\omvect}}(\widehat{\theta})} - \cR_1(\theta^*_1) \leq 2\e[2]{ \sup_{\theta \in \Theta} \abs[1]{\wh{\cR}_{\wh{\omvect}}(\theta) - \cR_1(\theta)}}\,,
\end{equation}
which is the central quantity analyzed in this work. 
\begin{remark}[Optimization error]
In the case of imperfect minimization of $\wh{\cR}_{\widehat{\omvect}}$, an additional optimization error term, typically dependent on the regularity of the loss $\theta\mapsto\ell_\theta$ and the number of iterations, would be added to the generalization error in~\eqref{eq:approx_err_to_excess_risk}. However, controlling this is beyond the scope of this work, which focuses on the statistical error.
\end{remark}

\subsection{RKHS and KME}\label{ssec:rkhsandkme}

We briefly recall some tools from kernel methods, which are central to our approach. A kernel $\kappa : \cZ \times \cZ \to \mbr$ is positive definite if for any finite sequences $a_i \in \mbr$ and $z_i \in \cZ$, $\sum_{ij} a_ia_j \kappa(z_i,z_j)\geq 0$. For any positive definite kernel, there exists a unique Hilbert space $\cH \subset \set{f,f: \cZ \to \mbr}$ such that for any $h \in \cH$ and $z\in \cZ$, $\inner{h, \kappa(z,\cdot) }_\cH = h(z)$ \citep{aronszajn1950theory}. This space is called a RKHS and corresponds to the completion of the span of $\set{\phi_\kappa(z): z\in \cZ}$, where $\phi_\kappa(z)(\cdot) = \kappa(z,\cdot)$ is called the feature map.

For a distribution $\mbp$ on $\cZ$, the KME $\mu_\mbp\in\cH$ \citep{smola2007hilbert} is defined as
\begin{equation}\label{eq:def_kme}
\mu_\mbp(\cdot) =  \ee{Z\sim \mbp}{ \phi_\kappa(Z)} = \ee{Z\sim \mbp}{ \kappa(Z,\cdot)}\,. 
\end{equation}
Then, for any function $h \in \cH$, $\ee{Z\sim \mbp}{ h(Z)} = \inner{ h, \mu_\mbp}_\cH$. 
KMEs allow defining a distance between distributions, the MMD \citep{gretton2012kernel}:% defined as 
\begin{equation}\label{eq:MMD_def}
\MMD_\kappa(\mbp,\mbq) = \norm{\mu_\mbp - \mu_\mbq}_\cH 
= \sup_{h\in \cH: \norm{h}_\cH=1} \ee{\substack{Z\sim \mbp, \\W\sim \mbq}}{ h(Z) - h(W)}.
\end{equation}
Computing the MMD only requires computing scalar products between KMEs, which can easily be estimated. Indeed, one can notice that
$\inner{\mu_\mbp,\mu_\mbq}_\cH = \ee{Z\sim \mbp,W\sim \mbq}{\kappa(Z,W)}$,
which is estimated by averaging kernel evaluations across pairs of samples from each distribution. In particular, the KME of $\mbp$ can be approximated by 
$\muNE_\mbp(\cdot) = \frac{1}{n} \sum_{i=1}^{n} \kappa(Z_i,\cdot)$,
where $(Z_i)_{i=1}^n$ are i.i.d. samples from $\mbp$. 
The mean squared error of this approximation corresponds to the MMD between the true and empirical distributions:
\begin{equation}\label{eq:mse_kme}
\e{\MMD^2_\kappa(\mbp,\widehat{\mbp})} = \e{\norm{
		\widehat{\mu}_\mbp - \mu_\mbp}^2_\cH} = \frac{\tr \Sigma(\mbp)}{n}\,,
		\end{equation}
		where $\Sigma(\mbp) : \cH \to \cH$ is the covariance operator of the pushforward of $\mbp$ into $\cH$ by $\phi_\kappa$ (see Definition~\ref{def:covop}). Note
		that the trace and operator norm of such operator are defined similarly to those in finite dimension (see Definition~\ref{def:trace_opnorm}).

		\subsection{Random Fourier Features}\label{sec:RFF}
		
		Computing inner products between KMEs can be costly for large datasets. A standard approach to reduce this cost is to approximate the RKHS $\cH$ using Random Fourier Features (RFF, \citealp{rahimi2007random,rahimi2008uniform,rahimi2008weighted}). 
		
		Let $\kappa : \cZ \times\cZ \to \mbr$ be a translation-invariant positive definite kernel (i.e. $\kappa(x,y) = \kappa(x-y)$). By Bochner's theorem \citep{bochner2005harmonic}, $\cH$ with feature map $\phi_\kappa$ can be approximated by a finite-dimensional RKHS $\HFF\subset \mbr^D$ with mapping $\phi_\Gamma$, using a distribution $p$ on $\cZ$. Specifically, draw $w_s\sim p$, $b_s \sim \cU([0,2\pi])$ for $s\in \intr{D}$, and define:
		\begin{equation}
\phi_\Gamma(z) = \textstyle\sqrt{\frac{2}{D}} \paren[1]{ \cos( \inner{ z, w_s}+ b_s )}_{s=1}^D\in\HFF.
\end{equation}
This mapping preserves the kernel in expectation:
\begin{equation}
\kappa(z,z') \! =\!\!\inner{\phi_\kappa(z) ,\! \phi_\kappa(z')}_\cH \!\! =\!  \ee{\Gamma}{ \inner{ \phi_\Gamma(z) ,\! \phi_\Gamma(z')}_{\mbr^D}}\,,
\end{equation}
where the expectation is taken over the random parameters $\Gamma=((w_s,b_s))_{s=1}^D$.
Although elements of $\HFF$ are represented as vectors in $\mbr^D$, they define functions from $\cZ$ to $\mbr$: if $h \in \HFF$, then
$h(z) = \inner{ h, \phi_\Gamma(z)}_{\mbr^D}$. Other properties of these objects are detailed in Appendix~\ref{APPsec:RFF}.

\section{Related Work}\label{sec:related_works}

\textbf{Personalized federated learning.} Addressing agent heterogeneity in FL through personalization has attracted significant interest in recent years, leading to a variety of approaches \citep{kulkarni2020survey,smith2017federated,sattler2020clustered,mansour2020three,t2020personalized,tan2022towards,cui2022collaboration,wu2023personalized}.

Many approaches rely on strong assumptions about the form of heterogeneity. Meta-learning, model interpolation, and fine-tuning methods, for instance, assume that a global model or shared representation provides a good starting point for all agents \citep{chen2018federated,DBLP:journals/corr/abs-1912-00818,fallah2020personalized,li2021ditto,deng2020adaptive,NEURIPS2020_187acf79}. Cluster-based methods assume that agents belong to a fixed number of clusters and learn one model per cluster \citep{sattler2020clustered,ghosh2020efficient,marfoq2021federated}. Kernel-based methods similarly assume shared structure, either a shared model with personalized combinations of base kernels \citep{m2022personalized} or a shared kernel with personalized models \citep{achituve2021personalized}. Such assumptions limit adaptability: if the underlying structure does not match these priors, these methods can fail entirely.\looseness=-1

To overcome these limitations, a more flexible line of work learns similarity or collaboration weights between agents, allowing adaptation to diverse heterogeneity patterns. For example, \citet{zantedeschi2020fully} regularize local objectives with pairwise weights $w_{ij}|\theta_i-\theta_j|$ that are learned jointly with the models, while CoBo \citep{hashemi2024cobo} learns such weights dynamically via bilevel optimization. In a related direction, \citet{pmlr-v258-kharrat25a} construct a collaboration graph that guides each client in selecting suitable collaborators. Our approach builds on this paradigm, focusing on the computation of collaboration weights in a statistically grounded manner.\looseness=-1

Despite extensive work, few methods provide generalization guarantees. Most focus on the convergence of optimization algorithms on training data, without quantifying the excess risk over the agents' population distributions. Some obtain bounds under stochastic optimization with fresh samples at each iteration \citep{even2022sample,scaman2024minimax,philippenko2025adaptive}, limiting applicability and preventing multiple passes over datasets of varying sizes. Moreover, these bounds typically assume oracle knowledge of collaboration weights or consider weights learned for very specific models, such as linear regression. A few works, like \citet{ding2022collaborative}, provide finite-sample generalization guarantees, but only for theoretical weights rather than weights learned in practice. In contrast, our approach establishes excess risk guarantees for weights actually estimated from data, without structural assumptions, explicitly demonstrating the statistical benefit of collaboration.\\

\noindent\textbf{High-dimensional multiple mean estimation.}
Our contribution builds on the classical problem of multiple mean estimation. Stein's paradox famously showed that the standard empirical averages are suboptimal for estimating multiple means, and that shrinkage estimators can improve estimation accuracy, especially in high-dimensional settings \citep{stein1956inadmissibility,james1961estimation,efron1972empirical,george1986minimax}. More recently, this problem has been revisited in multi-task learning \citep{martinez2013multi,feldman2014revisiting,duan2023adaptive} and in the estimation of KMEs \citep{muandet2014kernel}. From a high-dimensional perspective, \citet{marienwald2021high}, extended in \citet{blanchard2024estimation}, propose a KME aggregation method, which we leverage in this work. As detailed below, we relate the estimation of the weighted risk~\eqref{eq:weighted-risks} to learning a mixture over agents. We then use KME aggregation to estimate the corresponding weights, and quantify the resulting improvement.

\section{Personalized Learning as High-Dimensional Mean Estimation}

In this section, we reformulate the PFL objective introduced in Section~\ref{sec:setting}, namely, learning the weights $\widehat{\omvect}$ to minimize the excess risk, as a high-dimensional mean estimation problem in a RKHS with multiple data sources. To the best of our knowledge, this is the first work establishing a formal connection between these two problems, which enables us to transfer algorithms with strong statistical guarantees from the latter setting to PFL. We emphasize that our focus here is on a general approach for learning the weights in $\widehat{\omvect}$; the practical implementation in a federated learning context is deferred to Section~\ref{sec:rff}. Proofs are provided in Section~\ref{APPsec:proofs}.

\subsection{Controlling Generalization with MMD}
 
Recall from Eq.~\eqref{eq:weighted-risks} that our goal is to leverage the data distributions of all agents to learn a better model for a target agent, say agent~$1$, by estimating weights $\widehat{\omvect} \in \Spx_B$. 
Equivalently, the local empirical distribution $\widehat{\mbp}_1$ is replaced by a mixture of empirical distributions: 
\begin{equation} 
\textstyle\widehat{\mbp}(\widehat{\omvect})=\sum_{k=1}^{B} \widehat{\omega}_k \widehat{\mbp}_k.
\end{equation} 
The weights $\widehat{\omega}\in\Spx_B$ are chosen so that this mixture better approximates the target distribution $\mbp_1$ than $\hat {\mbp}_1$ alone. It remains to find a metric measuring this approximation, while controlling the target generalization error~\eqref{eq:approx_err_to_excess_risk}. To this aim, we assume that the loss functions $\ell_\theta$ belongs, up to a constant, to some RKHS $\cH$ (Assumption~\ref{ass:lossinrkhs}).

\begin{assumption}\label{ass:lossinrkhs}
For a kernel $\kappa$, for all $\theta \in \Theta$, $\exists c_\theta \in \mbr$ and $h_\theta \in \cH$ such that $\ell_\theta(z) = c_\theta + h_\theta(z)$ for $z
\in \cZ$.
\end{assumption}
The constant $c_\theta$  allows to cover a broader class of loss functions, including constant losses that are not contained in standard RKHSs such as the one induced by the Gaussian kernel.\looseness=-1

\begin{example}[Linear regression]\label{ex:linear_regression_rkhs}
The ridge loss $\ell_\theta: z \mapsto\paren{\inner{\alpha,x} + \beta -y }^2 + \lambda \norm{\theta}^2$, where $\theta =(\alpha,\beta)\in \mbr^{d+1}$ and $z=(x,y) \in \mbr^{d+1}$ satisfies Assumption~\ref{ass:lossinrkhs} for the polynomial kernel $\kappa(z,z') = \paren[1]{ \inner{z,z'}+1}^2$ with $c_\theta = \lambda \norm{\theta}^2$.%\batodo{Faire ref à preuve} 
\end{example}

This allows us to control the generalization error in function of the MMD (Eq.~\ref{eq:MMD_def}) of the two distributions $\widehat{\mbp}(\widehat{\omvect})$ and $\mbp_1$. 

\begin{lemma}\label{lem:excessrisk_mmd}
Under Assumption~\ref{ass:lossinrkhs}, for any learned weights $\widehat{\omega}$, we have:
\begin{equation}\label{eq:generror_to_mmd}
	\!	\e[2]{ \sup_{\theta \in \Theta} \abs[1]{\wh{\cR}_{\wh{\omvect}}(\theta) \!-\! \cR_1(\theta)}}
	\!\leq\! R_\Theta \e[2]{\MMD_\kappa\paren[1]{\mbp_1,\widehat{\mbp}(\widehat{\omvect})}},
\end{equation}
where $R_\Theta = \sup_{\theta} \norm{h_\theta}_\cH$.
Moreover, if for some $r>0$, $ \set{h\in \cH : \norm{h}_\cH =r}\subset \set{ h_\theta}_{\theta \in \Theta}$ , then:
\begin{equation}\label{eq:naive_lowerbound}
	\e[2]{ \sup_{\theta \in \Theta} \paren{\widehat{\cR}_1(\theta) - \cR_1(\theta)}^2} \geq r^2 \frac{\tr \Sigma_1}{n_1}\,,
\end{equation}
where $\Sigma_1 = \Sigma(\mbp_1)$ is the covariance of $\mbp_1$ in $\cH$.
\end{lemma}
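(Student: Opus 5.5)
For the upper bound~\eqref{eq:generror_to_mmd}, I will fix $\theta\in\Theta$ and write both risks through Assumption~\ref{ass:lossinrkhs}. Since $\widehat{\omvect}\in\Spx_B$, the weights sum to one, so the constant $c_\theta$ contributes $\sum_k \widehat{\omega}_k c_\theta = c_\theta$ to $\wh{\cR}_{\wh{\omvect}}(\theta)$. It also contributes $c_\theta$ to $\cR_1(\theta)$, so it cancels in the difference. This cancellation is exactly why the simplex constraint matters.

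For the $h_\theta$ part, I will use the reproducing property, $\ee{Z\sim\mbp}{h(Z)}=\inner{h,\mu_\mbp}_\cH$. It applies equally to empirical measures, since $\frac1{n_k}\sum_i h(Z_i^{(k)}) = \inner{h,\muNE_{\mbp_k}}_\cH$. By linearity, $\wh{\cR}_{\wh{\omvect}}(\theta)-\cR_1(\theta) = \inner{h_\theta, \sum_k\widehat{\omega}_k\muNE_{\mbp_k} - \mu_{\mbp_1}}_\cH$, and $\sum_k\widehat{\omega}_k\muNE_{\mbp_k}$ is the KME of the mixture $\widehat{\mbp}(\widehat{\omvect})$. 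Cauchy--Schwarz then bounds the absolute value by $\norm{h_\theta}_\cH\,\MMD_\kappa(\mbp_1,\widehat{\mbp}(\widehat{\omvect}))\le R_\Theta\,\MMD_\kappa(\mbp_1,\widehat{\mbp}(\widehat{\omvect}))$. This bound is uniform in $\theta$, so I can take the supremum and then the expectation. Measurability of the supremum is harmless here, because the supremum is bounded by a measurable quantity.

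For the lower bound~\eqref{eq:naive_lowerbound}, I will use the same identity with $\widehat{\omvect}=e_1$, which gives $\widehat{\cR}_1(\theta)-\cR_1(\theta)=\inner{h_\theta,\muNE_{\mbp_1}-\mu_{\mbp_1}}_\cH$. Because the sphere of radius $r$ is contained in $\{h_\theta\}$, the supremum of the square is at least $\sup_{\norm{h}_\cH=r}\inner{h,\muNE_{\mbp_1}-\mu_{\mbp_1}}^2 = r^2\norm{\muNE_{\mbp_1}-\mu_{\mbp_1}}_\cH^2$. Equality of that supremum holds by choosing $h$ aligned with the difference; if the difference is zero, the bound holds trivially. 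Taking expectations and invoking~\eqref{eq:mse_kme} gives $r^2\tr\Sigma_1/n_1$.

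No step is a real obstacle. The only points to watch are the role of $\sum_k\widehat{\omega}_k=1$ in cancelling $c_\theta$, and that the lower bound uses the squared supremum, so that~\eqref{eq:mse_kme} applies directly without needing Jensen.
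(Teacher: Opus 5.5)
Your proposal is correct and follows the paper's proof essentially step for step. After $c_\theta$ cancels, you write the difference as $\langle h_\theta,\sum_k\widehat{\omega}_k\widehat{\mu}_k-\mu_1\rangle_{\cH}$ and bound it by $\|h_\theta\|_{\cH}$ times the MMD; the paper does the same via the variational definition of the MMD, which is equivalent to your Cauchy--Schwarz step. For the lower bound you choose $h_\theta$ on the radius-$r$ sphere aligned with $\widehat{\mu}_1-\mu_1$ and apply \eqref{eq:mse_kme}, exactly as the paper does, and your explicit use of $\sum_k\widehat{\omega}_k=1$ and handling of the case $\widehat{\mu}_1=\mu_1$ are details the paper leaves implicit.
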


This lemma, combined with Eq.~\eqref{eq:approx_err_to_excess_risk}, shows that the control of the MMD distance between the mixture and the target distribution directly controls the excess risk.
The lower bound~\eqref{eq:naive_lowerbound} indicates that to improve upon the naive local estimator, the mixture must achieve an MMD of at most $\sqrt{\tr \Sigma_1/n_1}$; otherwise, training solely on the local empirical distribution is preferable.\\

\textbf{Link with high-dimensional multiple mean estimation.} As discussed in Section~\ref{ssec:rkhsandkme}, the MMD corresponds to the distance between KMEs (Eq.~\ref{eq:MMD_def}). Since the KME of a mixture of distributions is the convex combination of the individual KMEs, controlling the MMD in \eqref{eq:generror_to_mmd} reduces to:
\begin{equation}
\label{eq:mmd-kme}
\e[2]{\MMD^2_\kappa\paren[1]{\mbp_1,\widehat{\mbp}(\widehat{\omvect})}} = \e[2]{\norm[2]{ \sum_{k=1}^{B} \widehat{\omega}_k \widehat{\mu}_k - \mu_1}^2_\cH}\,,
\end{equation}
where $\widehat{\mu}_k = \widehat{\mu}_{\mbp_k}$ is the empirical KME of agent $k$. In conclusion, Eq.~\eqref{eq:mmd-kme} indicates that finding the weights $\widehat{\omvect}$ minimizing the upper-bound in Lemma~\ref{lem:excessrisk_mmd} is equivalent to estimating the KME $\mu_1$ with the aggregated empirical KMEs. Those objects being high-dimensional means, we transformed our initial objective to a high-dimensional mean estimation problem with multiple data sources. In the next section, we leverage recent work of \citet{blanchard2024estimation} that have tackled this problem in general settings.

\subsection{Learning the Mixture Weights by Q-Aggregation}

To estimate the mixture weights $\wh{\omvect}$, we adopt the Q-aggregation method of \citet{blanchard2024estimation}, originally developed for multiple estimation of high-dimensional means, which may in particular correspond to KMEs of different distributions. We emphasize that while this method is not new, the theoretical results that follow are novel as they are derived for the specific KME estimation setting.

The key insight behind this approach is that determining whether two high dimensional means are close to each other is often easier than estimating them precisely \citep{baraud2002non,blanchard2018minimax,blanchard2019nonasymptotic}. In the infinite-dimensional setting, the ``high-dimensional effect'' is measured through a notion of \emph{effective dimension} $d^e(\mbp)$ of the distribution: 
\begin{equation}
d^e(\mbp) := \frac{\tr \Sigma(\mbp)}{\norm{ \Sigma(\mbp)}_{op}}\,, \; d_1^e := d^e(\mbp_1),
\end{equation} 

This notion is also referred to as intrinsic dimension \citep{hsu2012tail,tropp2015introduction} or effective rank \citep{koltchinskii2016asymptotics}. For isotropic distributions, it coincides with the actual support dimension, but it remains well-defined in infinite-dimensional settings. Intuitively, $d^e(\mbp)$ quantifies the degrees of freedom of the distribution.

The Q-aggregation method is detailed in Algorithm~\ref{alg:Qaggregation}. It relies on an unbiased estimation $\widehat{L}_1(\omvect)$ of the mean squared error  (in our case, the MMD) of the convex aggregation of empirical means (in our case, the KMEs, see Eq.~\ref{eq:mmd-kme}). The weights $\widehat{\omvect}$ are then obtained by minimizing this empirical error with a penalization term that accounts for the high-dimensional effect, depending on the (estimated) covariance $\widehat{\Sigma}_1$ of the target distribution. Intuitively, this penalty ensures the error is not underestimated and is controlled uniformly. 
Algorithm~\ref{alg:Qaggregation} is stated for a general Hilbert space $\cH$ and can be implemented whenever scalar products in $\cH$ are computable, which holds both in finite dimension and in RKHS. The optimization reduces to quadratic form $\omega^T\wh{A}\omega+ \inner{\beta,\omega}$ over the simplex $\Spx_B$, where $\wh{A}$ is the Gram matrix of the vectors $\wh{\nu}_k - \wh{\nu}_1$. This minimization can performed, for example, via exponential gradient descent \citep{kivinen1997exponentiated}. For clarity, additional implementation details are deferred to Appendix~\ref{APPsec:qaggreg_details}.

\begin{algorithm}[h]
\caption{Q-aggregation method}
\label{alg:Qaggregation}
\begin{algorithmic}
	\State {\bfseries Input:} Hilbert space $(\cH,\inner{\cdot,\cdot}_\cH)$, empirical means $\widehat{\nu}_k$ of each agent, dataset $(\Phi_i^{(1)})_{i=1}^{n_1}$ of targeted agent $1$, bound on data $M$, $C_Q,C_P>0$
	\State {\bf Let $\wh{\Sigma}_1$ be the empirical covariance operator:}\\ 
	$\widehat{\Sigma}_1(\cdot)=\frac{1}{n_1-1} \sum_{i=1}^{n_1}\inner[1]{ \Phi_i^{(1)} - \widehat{\nu}_1, \cdot}_\cH (\Phi_i^{(1)} - \widehat{\nu}_1)$
	\State  {\bf Compute empirical error:} for $\omvect \in \Spx_B$\\
	$\wh{L}_1(\omvect) := \norm[1]{ \sum_{k=1}^{B} \omega_k \widehat{\nu}_k - \widehat{\nu}_1 }^2_\cH + 2\om_1 \frac{\tr \widehat{\Sigma}_1}{n_1}$
	\State {\bf Compute penalization terms:} for $\omvect \in \Spx_B$\\
	$\widehat{Q}_1(\omvect) := \frac{1}{\sqrt{n_1}}\sum_{k=2}^{B} \omega_k  \inner[1]{\widehat{\nu}_1 - \widehat{\nu}_k, \widehat{\Sigma}_1( \widehat{\nu}_1 - \widehat{\nu}_k)}^{\nicefrac{1}{2}}_\cH $\\
	$\wh{P}_1(\omvect) := \frac{M}{n_1} \sum_{k=2}^B \omega_k \norm{\widehat{\nu}_k-\widehat{\nu}_1}_\cH$
	\State {\bf Compute weights:}\\
	$\widehat{\omvect} \in \arg\min_{\omvect \in \Spx_B}
	\paren{\wh{L}_1(\omvect) + C_Q \wh{Q}_1(\omvect) + C_P \wh{P}_1(\omvect)}$
	\State {\bf Output} Return $\widehat{\omvect}$
\end{algorithmic}
\end{algorithm}

\citet{blanchard2024estimation} showed that this algorithm achieves an optimal trade-off, as restated in Theorem~\ref{thm:oracle_ineq_BS}. In the context of KME estimation, their result can be further refined,
%\batodo{Briefly discuss what this refinement brought? \JB{c'est juste plus lisible, cf appendix}}
see Theorem~\ref{prop:kme_aggregation} below. This refinement is possible because, unlike the general high-dimensional mean estimation setting, where the mean and covariance of a distribution are separate degrees of freedom, % can be disconnected, 
in the KME setting the distance between the covariances of two distributions is controlled by the distance between their respective KMEs (see Lemma~\ref{lem:bound_trace}).\looseness=-1

\begin{theorem}\label{prop:kme_aggregation}
Let $\cH$ be a RKHS with a kernel bounded by $M=1$, $u_0 := 2\log (Bn_1)$, and $\widehat{\omvect}$ be the output of Algorithm~\ref{alg:Qaggregation} for $\widehat{\nu}_k = \muNE_k$ the empirical KMEs, $\Phi_i^{(1)} = \phi_\kappa(Z_i^{(1)})$ the dataset of agent~$1$ injected in $\cH$ and $C_Q^2,C_P > C_0u_0$ for some absolute constant $C_0$. Then, for any set of agents $V$ that includes agent $1$, we have:
\begin{equation}\label{eq:mse_aggreg}
	\e[2]{\MMD^2\paren[1]{ \widehat{\mbp}(\widehat{\omvect}), \mbp_1}} \leq  \brac[2]{\Delta^2_V + \frac{\tr \Sigma_1+2 \Delta_V}{n_V}}
	+ \frac{Cu_0}{\sqrt{n_1}} \paren[3]{\sqrt{\frac{|V|-1}{n_V}}\vee \frac{1}{\sqrt{n_1}}} \paren[2]{ \frac{\tr \Sigma_1}{\sqrt{d^e_1}}\vee\frac{u_0}{\sqrt{n_1} }}\,,
\end{equation}
where $C>0$ is some absolute constant depending on $C_0$,
\begin{equation}\label{eq:def_deltaV_nV}
	\Delta_V = \sup_{k\in V} \MMD(\mbp_1,\mbp_k), \; \text{and} \; n_V = \sum_{k\in V} n_k\,.
\end{equation}
\end{theorem}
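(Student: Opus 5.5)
The plan is to derive the statement as a specialization of the general oracle inequality of \citet{blanchard2024estimation} (restated as Theorem~\ref{thm:oracle_ineq_BS}). We instantiate it with $\widehat{\nu}_k = \muNE_k$ and $\nu_k = \mu_k$. The covariance operators are $\Sigma_k = \Sigma(\mbp_k)$, and the data $\Phi_i^{(k)} = \phi_\kappa(Z_i^{(k)})$ are bounded by $M=1$.

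\textbf{Step 1: reduce to the general oracle inequality.} By \eqref{eq:mmd-kme}, the left-hand side equals $\e{\norm{\sum_k \widehat{\omega}_k \muNE_k - \mu_1}^2_\cH}$. This is exactly the risk controlled by the general theorem. That theorem bounds it, for any subset $V \ni 1$, by an oracle term plus a remainder:
\begin{itemize}
\item the oracle term is the risk of uniform averaging over $V$ with weights proportional to $n_k$, namely $\norm{\sum_{k\in V}\frac{n_k}{n_V}\mu_k-\mu_1}^2+\sum_{k\in V}\frac{n_k\tr\Sigma_k}{n_V^2}$;
\item the remainder involves $u_0$, $\norm{\Sigma_1}_{op}$, $\tr\Sigma_1$, and possibly the other covariances $\Sigma_k$ for $k \in V$.
\end{itemize}
The bias part of the oracle term is at most $\Delta_V^2$, by convexity of the norm and the definition of $\Delta_V$.

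\textbf{Step 2: remove dependence on other agents' covariances.} This is where the KME structure enters, through Lemma~\ref{lem:bound_trace}. Since $\tr\Sigma(\mbp) = \e{\kappa(Z,Z)} - \norm{\mu_\mbp}^2$, we have
\[
\abs{\tr\Sigma_k - \tr\Sigma_1} \le \abs{\e_{\mbp_k}\kappa(Z,Z)-\e_{\mbp_1}\kappa(Z,Z)} + \abs{\norm{\mu_k}^2-\norm{\mu_1}^2}.
\]
The second term is at most $2\norm{\mu_k-\mu_1}$, because $\norm{\mu}\le 1$. The diagonal term is problematic: in general it is not controlled by the MMD. For translation-invariant kernels it vanishes; otherwise I would rely on Lemma~\ref{lem:bound_trace}, which I expect provides a bound of the type $\tr\Sigma_k \le \tr\Sigma_1 + 2\MMD(\mbp_1,\mbp_k)$. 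Either way, this yields the variance term $(\tr\Sigma_1 + 2\Delta_V)/n_V$. Similarly, I would use $\norm{\Sigma_k - \Sigma_1}_{op} \lesssim \Delta_V$ to replace any operator norms $\norm{\Sigma_k}_{op}$ appearing in the remainder by $\norm{\Sigma_1}_{op}$, up to terms absorbed by $u_0/\sqrt{n_1}$.

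\textbf{Step 3: simplify the remainder.} The general remainder should look like
\[
\frac{u_0}{\sqrt{n_1}}\sqrt{\frac{|V|-1}{n_V}}\sqrt{\tr\Sigma_1\norm{\Sigma_1}_{op}} + \frac{u_0^2}{n_1}(\cdots).
\]
I would rewrite $\sqrt{\tr\Sigma_1\norm{\Sigma_1}_{op}} = \tr\Sigma_1/\sqrt{d^e_1}$. The terms proportional to $M/n_1$ that come from the $\wh{P}_1$ penalty (with $M=1$) are grouped, using $a+b\le 2(a\vee b)$, into the product of maxima appearing in \eqref{eq:mse_aggreg}.

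The main obstacle is Step 2: the general theorem's remainder and variance terms involve all $\Sigma_k$, $k\in V$, and it is essential that these be controlled by $\Sigma_1$ and $\Delta_V$ alone, which is what allows the final bound to involve only target quantities. I would first try to bound $\tr\Sigma_k$ directly via the identity above, and bound $\norm{\Sigma_k}_{op}\le\norm{\Sigma_1}_{op}+\norm{\Sigma_k-\Sigma_1}_{op}$ by controlling the operator difference. The bound on the operator difference is the step where I am least sure the argument works without further assumptions.
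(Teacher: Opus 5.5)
Your architecture matches the paper's: specialize Theorem~\ref{thm:oracle_ineq_BS}, plug in a comparator $\omega$ supported on $V$, and use Lemma~\ref{lem:bound_trace} for $\tr\Sigma_k$. But Step~1 rests on a wrong picture of that oracle inequality, and this leaves a real gap. The inequality is not ``risk of uniform averaging over $V$ plus a remainder depending only on covariances''. It bounds the risk by $\min_{\omega\in\Spx_B}\big[R_1(\omega)+C\sqrt{u_0}\,Q(\omega)+\frac{Cu_0}{n_1}\sum_{k\ge 2}\omega_k\big(\|\mu_k-\mu_1\|_\cH+\sqrt{\tr\Sigma_k/n_k}\big)\big]$ plus $\omega$-free terms, and both penalties depend on $\omega$ \emph{and on the biases} $\|\mu_k-\mu_1\|_\cH$. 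Use the natural bound $Q(\omega)\le\sqrt{\|\Sigma_1\|_{op}/n_1}\,\sum_{k\ge2}\omega_k\big(\|\mu_k-\mu_1\|_\cH+\sqrt{\tr\Sigma_k/n_k}\big)$. With your comparator $\omega_k=n_k/n_V$ on $V$, you are then left with terms of order $\sqrt{u_0\|\Sigma_1\|_{op}/n_1}\,\Delta_V$, $u_0\Delta_V/n_1$, and $\sqrt{u_0\|\Sigma_1\|_{op}/n_1}\,\sqrt{\Delta_V(|V|-1)/n_V}$; the last comes from $\sqrt{\tr\Sigma_k}\le\sqrt{\tr\Sigma_1+2\Delta_V}$. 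None of these is dominated by the second term of \eqref{eq:mse_aggreg}, which contains no $\Delta_V$. Nor can they be absorbed into $\Delta_V^2$, whose constant is exactly $1$. When $n_V\gg n_1$, your bias $(1-n_1/n_V)^2\Delta_V^2$ leaves essentially no slack, and AM--GM only gives $(1+\epsilon)\Delta_V^2$.

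The missing idea is a comparator that shrinks toward the local estimate, by an amount governed by $\Delta_V$ versus $\sqrt{\tr\Sigma_1/n_1}$. Let $a=\Delta_V^2+\frac{\tr\Sigma_1+2\Delta_V}{n_V-n_1}$ and $b=\frac{\tr\Sigma_1}{n_1}$. Take $\omega_1=\frac{a}{a+b}$, $\omega_k=(1-\omega_1)\frac{n_k}{n_V-n_1}$ for $k\in V\setminus\{1\}$, and $\omega_k=0$ outside $V$. Then $R_1(\omega)\le(1-\omega_1)^2a+\omega_1^2b=\frac{ab}{a+b}\le\Delta_V^2+\frac{\tr\Sigma_1+2\Delta_V}{n_V}$, so the main term is unchanged. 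Meanwhile $(1-\omega_1)\Delta_V\le\frac{b\,\Delta_V}{\Delta_V^2+b}\le\sqrt{\tr\Sigma_1/n_1}$ and $(1-\omega_1)\Delta_V/\sqrt{\tr\Sigma_1}\le\sqrt{(n_V-n_1)/(n_1n_V)}$. Combine these two estimates with $\sqrt{\tr\Sigma_1+2\Delta_V}\le\sqrt{\tr\Sigma_1}+\Delta_V/\sqrt{\tr\Sigma_1}$ and $\sum_{k\in V\setminus\{1\}}\sqrt{n_k}\le\sqrt{(|V|-1)(n_V-n_1)}$. Using $\tr\Sigma_1\le 1\le u_0$, every bias-dependent penalty is then dominated by the second term of \eqref{eq:mse_aggreg}. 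Two smaller remarks on Step~2. First, you never need $\|\Sigma_k-\Sigma_1\|_{op}$: since $\tr(\Sigma_1\Sigma_k)\le\|\Sigma_1\|_{op}\tr\Sigma_k$, only $\|\Sigma_1\|_{op}$ appears. Second, your worry about the diagonal is justified. Lemma~\ref{lem:bound_trace} assumes $\kappa(x,x)\equiv M^2$ (e.g.\ translation-invariant kernels), and without it the MMD does not control $\tr\Sigma_k-\tr\Sigma_1$. So the theorem implicitly needs this assumption, not mere boundedness.
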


\textbf{Discussion.} Theorem~\ref{prop:kme_aggregation} demonstrates the adaptivity of Algorithm~\ref{alg:Qaggregation}: the learned mixture achieves a near-optimal bias-variance trade-off, with the bias determined by the distance $\Delta_V$ of the selected distributions to the target $\mbp_1$, and the variance controlled by the combined sample size $n_V$ of the selected agents. Since the bound~\eqref{eq:mse_aggreg} holds for any subset $V$, it is in particular valid for the optimal set of agents minimizing it. The ratio $|V|/n_V$ of the selected agents also appears in the bound, reflecting the balance between the number of agents and their data size. Intuitively, the method improves estimation whenever enough agents have distributions close to the target ($\Delta_V$ small) and sufficient data to contribute, yielding a mixture reduces the error below the error $\tr \Sigma_1 / n_1$ (Eq.~\ref{eq:mse_kme}) of the naive local estimate.

More generally, the bound is always at least as good as the naive estimator, up to lower-order terms (second part of Eq.~\ref{eq:mse_aggreg}). These terms are effectively of smaller order in high dimension ($d_1^e$ large) and for a sufficiently large number of local data points, typically $u_0 \simeq \log B \leq \sqrt{n_1}$. Otherwise, the penalization may lead the method to only consider the local data. Below, we illustrate this with an example of agent structure and the resulting performance gains.

\begin{example}[Identical agents]
Suppose there a subset $V$ of agents whose distributions are identical to $\mbp_1$ and who each have at least $n_1$ points, then:
\begin{equation*}%\label{eq:example_mmd_bound}
	\e[2]{ \MMD^2\paren[1]{ \widehat{\mbp}(\widehat{\omvect}), \mbp_1}}
	\leq  \frac{\tr \Sigma_1}{n_V}
	+ \frac{Cu_0}{n_1}  \paren[2]{ \frac{\tr \Sigma_1}{\sqrt{d^e_1}}\vee\frac{u_0}{\sqrt{n_1} }}.
\end{equation*}
In this case, the aggregation achieves performance comparable to an oracle that directly selects these agents. The improvement is capped by a factor $\sqrt{\min(d_1^e,n_1)}$. Conversely, if $V = \{1\}$, i.e. no other agents have the same distribution, the first term dominates, recovering the naive estimation error of order $\tr\Sigma_1/n_1$. %up to smaller order additional terms.
\end{example}

\subsection{Controlling the Excess Risk of the Estimator}\label{key}
As outlined above, learning the mixture weights is only the first step of our approach. The second step is to learn the model $\widehat{\theta} \in \arg\min \sum\widehat{\omega}_k \widehat{\cR}_k(\theta)$ that minimizes the weighted empirical risk. 
Corollary~\ref{thm:kernel} provides excess risk guarantees for this procedure, assuming $\widehat{\theta}$ is an exact minimizer (i.e. ignoring optimization error).

\begin{corollary}\label{thm:kernel}
Under Assumption~\ref{ass:lossinrkhs}, for $\widehat{\theta} \in \arg\min_\theta  \widehat{\cR}_{\widehat{\omvect}}(\theta)$:
\begin{equation}\label{eq:excessrisk_kernel}
	\e[1]{\cR^{(1)}_{\widehat{\theta}}} - \cR^{(1)}_{\theta^*} \leq 2R_\Theta  \e[1]{\MMD\paren[1]{\wh{\mbp}(\widehat{\omvect}), \mbp_1}},
\end{equation}
where $R_\Theta = \sup_{\theta \in \Theta} \norm{h_\theta}_\cH$. In particular, if $\wh{\omvect}$ are the weights defined in Theorem~\ref{prop:kme_aggregation} for the RKHS $\cH$, the right side of \eqref{eq:excessrisk_kernel} is upper bounded by the square root of \eqref{eq:mse_aggreg}.
\end{corollary}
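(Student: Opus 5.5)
The corollary follows by chaining three earlier ingredients: the excess-risk-to-generalization-error bound \eqref{eq:approx_err_to_excess_risk}, Lemma~\ref{lem:excessrisk_mmd}, and Jensen's inequality combined with Theorem~\ref{prop:kme_aggregation}.

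\textbf{Step 1: reduce the excess risk to a uniform deviation.} Set $\theta^*=\theta^*_1$. Since $\widehat{\theta}$ minimizes $\wh{\cR}_{\wh{\omvect}}$, we have the pointwise decomposition
\begin{equation*}
\cR_1(\widehat{\theta}) - \cR_1(\theta^*) = \big(\cR_1(\widehat{\theta}) - \wh{\cR}_{\wh{\omvect}}(\widehat{\theta})\big) + \big(\wh{\cR}_{\wh{\omvect}}(\widehat{\theta}) - \wh{\cR}_{\wh{\omvect}}(\theta^*)\big) + \big(\wh{\cR}_{\wh{\omvect}}(\theta^*) - \cR_1(\theta^*)\big).
\end{equation*}
The middle term is nonpositive. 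Each of the outer terms is at most $\sup_{\theta}\abs{\wh{\cR}_{\wh{\omvect}}(\theta)-\cR_1(\theta)}$. Taking expectations gives the factor $2$ in \eqref{eq:approx_err_to_excess_risk}, now with $\cR_1(\widehat{\theta})$ on the left as required by the corollary.

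\textbf{Step 2: bound the uniform deviation by the MMD.} By Assumption~\ref{ass:lossinrkhs}, $\ell_\theta = c_\theta + h_\theta$. The constant $c_\theta$ cancels, because $\wh{\mbp}(\wh{\omvect})$ and $\mbp_1$ are both probability measures; this uses $\wh{\omvect}\in\Spx_B$. The remaining term is
\begin{equation*}
\wh{\cR}_{\wh{\omvect}}(\theta)-\cR_1(\theta) = \inner{h_\theta, \textstyle\sum_k \wh{\omega}_k\wh{\mu}_k - \mu_1}_\cH,
\end{equation*}
by the reproducing property and linearity of the KME. Cauchy--Schwarz then bounds it by $R_\Theta\,\MMD\paren{\mbp_1,\wh{\mbp}(\wh{\omvect})}$ uniformly in $\theta$. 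This is exactly Lemma~\ref{lem:excessrisk_mmd}, which we may invoke directly. Combining with Step 1 yields \eqref{eq:excessrisk_kernel}.

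\textbf{Step 3: pass to the squared MMD.} For the ``in particular'' part, Jensen's inequality (concavity of the square root) gives
\begin{equation*}
\e{\MMD} \le \sqrt{\e{\MMD^2}}.
\end{equation*}
Theorem~\ref{prop:kme_aggregation} bounds $\e{\MMD^2}$ by \eqref{eq:mse_aggreg}, for any $V\ni 1$ and under its hypotheses (kernel bounded by $1$, the stated choices of $C_Q$ and $C_P$).

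\textbf{Main obstacle.} There is no real difficulty; the only point of care is measurability. The weights $\wh{\omvect}$ are data-dependent, so the supremum over $\theta$ must be taken inside the expectation before applying the pointwise Cauchy--Schwarz bound. This is harmless because that bound holds for every realization of the data.
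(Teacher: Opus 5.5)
Your proof is correct and is essentially the paper's argument: the paper makes the same add-and-subtract decomposition, uses the optimality of $\widehat{\theta}$, and applies Cauchy--Schwarz in the RKHS. The only differences are that the paper evaluates the deviation directly at $\widehat{\theta}$ and $\theta^*$ rather than passing through the supremum over $\Theta$ (so your measurability remark is not needed), and that your explicit cancellation of the constant $c_\theta$, using that the weights sum to one, is a detail the paper's proof glosses over; the ``in particular'' part is the same Jensen step.
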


This result shows that minimizing the weighted risk with Q-aggregated weights directly translates into control over the target agent's excess risk, linking the statistical benefit of collaboration to the MMD distance between the aggregated and target distributions.

\section{Practical Federated Algorithm}
\label{sec:rff}

In the previous section, we presented a general algorithm for learning the mixture weights of aggregated empirical risks and derived an excess risk bound for its minimizer. However, a closer look at Algorithm~\ref{alg:Qaggregation} and its practical implementation for RKHS and KMEs shows that, for general kernels, all pairwise distances $\kappa(Z_i^{(1)},Z_j^{(k)})$ between agent~$1$’s data and agent~$k$’s data must be computed (details in Appendix~\ref{APPsec:qaggreg_details}). In practice, this would require centralizing the data (e.g., sharing it with agent~$1$), which violates the core principles of federated learning. Fortunately, for certain kernels, this is not necessary, as illustrated by the following example.

\begin{example}[Linear regression, continued]\label{ex:linear_regression2}
Building on Example~\ref{ex:linear_regression_rkhs}, for a second-order polynomial kernel,
the empirical KMEs can be transmitted directly, since they depend only on the local empirical mean $\bar{Z}_k =n_k^{-1}\sum_{i=1}^{n_k} Z_i^{(k)}$ and uncentered covariance $\bar{C}_k = n_k^{-1} \sum_{i=1}^{n_k} Z_i^{(k)}(Z_i^{(k)})^T$. Specifically, for $z\in \mbr^{d+1}$:
\begin{equation}
	\widehat{\mu}_k(z) = 1+ 2\inner{z,\bar{Z}_k } + z^T \bar{C}_kz.
\end{equation}
The inner products between KMEs required by Algorithm~\ref{alg:Qaggregation} (for empirical error and penalization terms) can then be computed directly from these quantities:
\begin{equation}
	\inner{ \widehat{\mu}_k, \widehat{\mu}_\ell}_\cH = 1 + 2 \inner{ \bar{Z}_k, \bar{Z}_\ell} + \tr{\bar{C}_k\bar{C}_\ell}\,.
\end{equation}
\end{example}

Unfortunately, similar simplifications are not possible for most popular kernels, such as Gaussian or Laplace. To address this, we next present a practical algorithm based on random Fourier features.

\subsection{Random Fourier Features Approximation}\label{sec:RFF_method}

As recalled in Section~\ref{sec:RFF}, Random Fourier Features (RFF) provide a finite-dimensional approximation of the RKHS $\cH$ and associated KMEs.
Using shared coefficients $\Gamma$, each agent can compute its approximated KME $\widehat{\mu}^\Gamma_k$ locally and transmit it to the server. These KMEs are represented as vectors in $\mathbb{R}^D$, which can then be used in Algorithm~\ref{alg:Qaggregation} to estimate some weights $\widehat{\omvect}^\Gamma$. The procedure is given below.

\begin{algorithm}[h]
\caption{Federated Q-aggregation with RFF}
\label{alg:rff}

\begin{algorithmic}
	\State {\bfseries Input:} Distribution $p$ associated to kernel $\kappa$, $D\in \mbn$, local datasets $Z_\bullet$, $C_Q,C_P>0$.
	\State {\bf Sampling:} central server samples RFF coefficients: \\
	$w_s \sim p, b_s \sim \cU([0,2\pi])$ for $s \in \intr{D}$.
	\State  {\bf Sharing RFF coefficents:} central server shares $\Gamma = (w_s,b_s)_{s=1}^D$ to all agents.
	\State  {\bf Local KME:} each agent $k$ compute $\muNE_k^\Gamma = \frac{1}{n_k}\sum_{i=1}^{n_k} \Phi_i^{(k)}$
	with $\Phi_i^{(k)} = \paren[1]{\cos\paren[0]{\inner[0]{w_s, Z^{(k)}_i} + b_s}}_{s=1}^D\in \mbr^D$.
	\State {\bf Sharing KMEs:} $(\muNE^\Gamma_k)_{k=2}^B$ are transmitted to agent $1$.
	\State {\bf Computing weights $\widehat{\omvect}^\Gamma$:} Agent $1$ runs Algorithm~\ref{alg:Qaggregation} with $\widehat{\nu}_k = \muNE^\Gamma_k$, $(\Phi_i^{(1)})_{i=1}^{n_1}$, $C_Q,C_P$ and $M=\sqrt{2}$.
	\State  {\bf Minimizing weighted risk:} run \texttt{FedAvg} to obtain \\
	$\widehat{\theta} \in \argmin_{\theta \in \Theta}\sum_{k=1}^{B} \widehat{\omega}_k^\Gamma \widehat{\cR}_k(\theta)$
	\State {\bf Output:} $\widehat{\theta}$
\end{algorithmic}
\end{algorithm}
\vspace{-0.5em}
Note that the distribution $p$ is fully determined by the kernel $\kappa$; for instance, $p$ is Gaussian for the Gaussian kernel (see \citealp{rahimi2007random} for additional examples). By construction, the RFFs are always bounded by a constant, here $M = \sqrt{2}$. 
Finally, the optimization of the weighted risk in Algorithm~\ref{alg:rff} is performed using \texttt{FedAvg}, but any other federated optimization method could be used instead.

Theorem~\ref{cor:kmerff} below gives a bound on the excess risk of the model learned by Algorithm~\ref{alg:rff}. Similarly to Corollary~\ref{thm:kernel}, it comes from a control of the MMD distance between the resulting empirical measure and the target one.

\begin{theorem}\label{cor:kmerff}
Let $\kappa$ be a translation-invariant kernel bounded by $1$. Let $u_0=\log Bn_1$, and $\hat{\theta}_{\text{RFF}}$ the output of Algorithm~\ref{alg:rff} applied with $C_Q^2,C_P > C_0u_0$ for some absolute constant $C_0$. Then, under Assumption~\ref{ass:lossinrkhs}, for any set $V$ of agents (which includes target agent $1$), we have:	
\begin{equation}\label{eq:excessrisk_kernel2}
	\e[1]{\cR^{(1)}_{\hat{\theta}_{\text{RFF}}}}\! - \!\cR^{(1)}_{\theta^*} \leq 2R_\Theta  \sqrt{\e{\MMD^2\paren[1]{\wh{\mbp}(\wh{\omvect}^\Gamma), \mbp_1}}},
\end{equation}
where $R_\Theta = \sup_{\theta \in \Theta} \norm{h_\theta}_\cH$  and for any set $V$ with $1\in V$:
\begin{multline}
		\e[2]{ \MMD^2\paren[1]{ \wh{\mbp}(\wh{\omvect}^\Gamma) ,\mbp_1}} \leq \brac{\Delta_V^2 + \frac{\tr \Sigma_1}{n_{V}} + \frac{2\Delta_V}{n_V}} \notag\\
	+ \frac{Cu_0}{\sqrt{n_1}} \paren[2]{\sqrt{\frac{|V|-1}{n_V}}\!\vee\! \frac{1}{\sqrt{n_1}}} \paren[2]{ \frac{\tr \Sigma_1}{\sqrt{d^e_1}}\!\vee\!\sqrt{\frac{d_1^e}{D}} \!\vee\! \frac{u_0}{\sqrt{n_1} }} +  C \sqrt{\frac{\log B}{D}}
\end{multline}
where $\Sigma_1$ is the covariance operator of $\mbp_1$ in $\cH$, $d_1^e$ its effective dimension and $\Delta_V$, $n_V$ are defined in \eqref{eq:def_deltaV_nV}.
\end{theorem}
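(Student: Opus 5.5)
The proof has two parts. The first inequality \eqref{eq:excessrisk_kernel2} is handled exactly as Corollary~\ref{thm:kernel}. Combine \eqref{eq:approx_err_to_excess_risk} with Lemma~\ref{lem:excessrisk_mmd}, which holds for any data-dependent weights and in particular for $\wh{\omvect}^\Gamma$. This gives the bound $2R_\Theta\,\e{\MMD}$, and Jensen's inequality turns it into the square root of $\e{\MMD^2}$. Note that the MMD is still computed in the true RKHS $\cH$, because the loss lies in $\cH$ and not in $\HFF$. The substance is the second bound: the weights are chosen from the RFF geometry, but their quality must be measured in $\cH$.

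For the second bound, I first condition on $\Gamma$. Given $\Gamma$, the vectors $\Phi_i^{(k)}$ are i.i.d.\ bounded vectors in $\mbr^D$ with norm at most $M=\sqrt2$. Hence Algorithm~\ref{alg:Qaggregation} falls under the general oracle inequality Theorem~\ref{thm:oracle_ineq_BS}, applied in $\HFF$ to the means $\mu^\Gamma_k=\ee{}{\phi_\Gamma(Z^{(k)})}$. It yields the analogue of \eqref{eq:mse_aggreg} for $\e{\norm{\sum_k\wh\omega^\Gamma_k\wh\mu^\Gamma_k-\mu^\Gamma_1}^2\,|\,\Gamma}$, with $\Delta^\Gamma_V$, $\tr\Sigma^\Gamma_1$ and $d^{e,\Gamma}_1$ in place of their RKHS counterparts. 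In this general version the covariance-distance terms are not yet absorbed. I will absorb them with the RFF analogue of Lemma~\ref{lem:bound_trace}: $\phi_\Gamma$ is itself a feature map of a bounded kernel, so the covariances are still controlled by the KME distances.

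The key step is to transfer between $\HFF$ and $\cH$. I need a uniform comparison over the simplex: with probability close to one over $\Gamma$, for all $\omvect\in\Spx_B$,
\[
\Big|\norm{\textstyle\sum_k\omega_k\wh\mu_k-\mu_1}^2_\cH-\norm{\textstyle\sum_k\omega_k\wh\mu^\Gamma_k-\mu^\Gamma_1}^2_{\mbr^D}\Big|\lesssim\sqrt{\log B/D}.
\]
Both squared norms are quadratic forms in $\omvect$ whose coefficients are inner products $\inner{\wh\mu_k-\mu_1,\wh\mu_\ell-\mu_1}$. The RFF version of each coefficient is an average over $s\in\intr D$ of i.i.d.\ bounded terms whose expectation is the $\cH$ version. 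Here I use that the $w_s$ are independent of the data, so I can condition on the data. Hoeffding's inequality with a union bound over the $B^2$ pairs gives a maximum deviation of order $\sqrt{\log B/D}$. Since a quadratic form on the simplex is a convex combination of its coefficients, the deviation is uniform in $\omvect$. Integrating the failure probability, which is small because all quantities are bounded, produces the additive term $C\sqrt{\log B/D}$. The same concentration converts $\Delta^\Gamma_V$ into $\Delta_V$ and $\tr\Sigma^\Gamma_1$ into $\tr\Sigma_1$ in expectation, since $\e{\tr\Sigma^\Gamma_1}=\tr\Sigma_1$ and $\e{(\Delta^\Gamma_V)^2}\le\Delta_V^2+O(\sqrt{\log B/D})$.

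The main obstacle is the effective-dimension factor. The remainder in the conditional bound contains $\tr\Sigma^\Gamma_1/\sqrt{d^{e,\Gamma}_1}=\sqrt{\tr\Sigma^\Gamma_1\,\norm{\Sigma^\Gamma_1}_{op}}$, so I need $\e{\norm{\Sigma^\Gamma_1}_{op}}\lesssim\norm{\Sigma_1}_{op}+\tr\Sigma_1/D$, up to a constant. My plan is to write $\Sigma^\Gamma_1=S\,\Sigma_1 S^*$, where $S:\cH\to\mbr^D$ is the random sampling operator associated with the features. I would then bound $\norm{\Sigma_1^{1/2}S^*S\Sigma_1^{1/2}}_{op}$ with a matrix Bernstein or intrinsic-dimension concentration bound for sums of $D$ i.i.d.\ rank-one operators. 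That gives $\norm{\Sigma_1}_{op}+O(\tr\Sigma_1/D)$ up to logarithmic factors. Substituting, the remainder $\sqrt{\tr\Sigma_1\cdot\tr\Sigma_1/D}$ equals $\tr\Sigma_1\sqrt{1/D}$, which is $\le\sqrt{d_1^e/D}$ times $\tr\Sigma_1/\sqrt{d^e_1}$ when $\norm{\Sigma_1}_{op}\le1$. This produces the extra $\sqrt{d_1^e/D}$ in the maximum. If the logarithmic factors do not fit, I would absorb them into $u_0$.

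Finally, I average over $\Gamma$ and replace the RFF error by the $\cH$ error using the uniform comparison above, which yields the stated bound.
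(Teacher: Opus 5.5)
Your proposal is correct in outline and follows the paper's skeleton. That skeleton is: Lemma~\ref{lem:excessrisk_mmd} plus Jensen for \eqref{eq:excessrisk_kernel2}; a Hoeffding-plus-union-bound comparison of the two quadratic forms on the simplex, conditionally on the data (this is exactly Lemma~\ref{lem:approx_mmd_rff}); Theorem~\ref{prop:kme_aggregation} applied in $\HFF$ conditionally on $\Gamma$; then averaging over $\Gamma$. The averaging uses $\mathbb{E}_\Gamma\tr\Sigma_1^\Gamma=\tr\Sigma_1$, concentration of $\Delta_{\Gamma,V}$, and Cauchy--Schwarz on $\sqrt{\tr\Sigma_1^\Gamma\,\norm{\Sigma_1^\Gamma}_{op}}$.

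The real difference is the operator-norm step. The paper uses only the elementary Hilbert--Schmidt second-moment bound of Lemmas~\ref{lem:expect_opnorm} and~\ref{lem:approx_opnorm_rff}, $\mathbb{E}_\Gamma\norm{\Sigma_1^\Gamma}_{op}\le\norm{\Sigma_1}_{op}+\sqrt{\tr\Sigma_1/D}$. This is weaker than what you aim for, but already sufficient: $\sqrt{a^2+b}\le a+b/(2a)$ turns $\sqrt{\tr\Sigma_1(\norm{\Sigma_1}_{op}+\sqrt{\tr\Sigma_1/D})}$ into $\tr\Sigma_1/\sqrt{d_1^e}+\sqrt{d_1^e/D}$. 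Your matrix-Bernstein route needs heavier machinery (an intrinsic-dimension version for operators on $L^2(\mbp_1)$) to buy a sharper deviation that the statement does not use.

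It also does not give $\tr\Sigma_1/D$. The almost-sure bound on each rank-one summand is an absolute constant: the variance of $\sqrt2\cos(\inner{w_s,Z}+b_s)$ under $\mbp_1$, at most $2$. $\tr\Sigma_1$ is only its mean over $\Gamma$. After AM--GM on the cross term you get $\mathbb{E}_\Gamma\norm{\Sigma_1^\Gamma}_{op}\lesssim\norm{\Sigma_1}_{op}+\log(1+d_1^e)/D$. That is still enough, with no need to push logarithms into $u_0$, since $\tr\Sigma_1\log(1+d_1^e)\le d_1^e$.

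One justification needs repair; the paper passes over the same point when it invokes Theorem~\ref{prop:kme_aggregation} inside $\HFF$. Lemma~\ref{lem:bound_trace} needs the kernel to be constant on the diagonal, not merely bounded, and the random-feature kernel is not: $\norm{\phi_\Gamma(z)}^2=1+\frac1D\sum_{s=1}^D\cos(2\inner{w_s,z}+2b_s)$. So $\tr\Sigma_k^\Gamma-\tr\Sigma_1^\Gamma$ contains, besides a term of order $\norm{\mu_k^\Gamma-\mu_1^\Gamma}$, the extra fluctuation $\frac1D\sum_s\bigl(\mathbb{E}_{\mbp_k}-\mathbb{E}_{\mbp_1}\bigr)\cos(2\inner{w_s,Z}+2b_s)$. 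This term has mean zero over $\Gamma$ because $b_s$ is uniform, and it is $O(\sqrt{\log B/D})$ uniformly in $k$ by Hoeffding and a union bound. Propagated through $R_1$, and via AM--GM through the $Q$-term, it only produces contributions absorbed by $C\sqrt{\log B/D}$ and the existing remainder. The statement therefore survives, but this must be argued rather than read off from boundedness of $\phi_\Gamma$.
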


Compared to Theorem~\ref{prop:kme_aggregation} and Corollary~\ref{thm:kernel}, the rate derived here contains additional error terms of order $\mathcal{O}(D^{-\frac{1}{2}})$, which arise from the RFF approximation. The precision of the approximation of the KMEs improves as $D$ increases, and when $D \to \infty$, we recover the result \eqref{eq:mse_aggreg}.
However, larger $D$ increases the dimension of vectors that must be shared, thereby increasing the communication costs. In practice, the choice of $D$ may depend on the federated optimization algorithm used in the last step. 
Notably, standard \texttt{FedAvg} already requires iterative communication of gradients at each round, with dimension equal to the parameter space $\Theta$, so setting $D$ on this scale does not significantly increase communication.
Another advantage of our approach is that KMEs $\muNE_k^\Gamma$ need to be shared only once, allowing all agents to learn their weights locally in parallel.

\subsection{Choice of Kernel}

Our method has so far been presented in a general setting where the kernel $\kappa$ is defined on an arbitrary data domain $\cZ$. The functions in the RKHS associated with a universal kernel \citep{sriperumbudur2008injective,sriperumbudur2010hilbert,sriperumbudur2011universality}, such as the Gaussian or Laplace kernel, can approximate any bounded continuous function, which allows us to consider Assumption~\ref{ass:lossinrkhs} as being approximately satisfied in general (see discussion in Appendix~\ref{APPsec:approx_error}).

Still, the choice of the kernel and the space on which it operates should be adapted to the application context, the data and the type of heterogeneity, as it determines the information transmitted and compared between agents. In unsupervised learning, many tasks, such as density estimation (maximum likelihood), clustering or dimensionality reduction, can be cast as population risk minimization problems like~\eqref{eq:def_gen_error}. In such cases, our framework is applicable regardless of the heterogeneity, and using a universal kernel is generally a good option. In supervised learning settings, however, where $Z=(X,Y)$, treating $Z$ as a single random vector, as done in the previous sections, rather than as a pair $(X,Y)$ with different dimensions, may lead to failures or suboptimal performance depending on the type of heterogeneity.

For the case of concept shift, where the conditional distributions $Y| X$ vary across agents, it is important to use a kernel that places sufficient weight on the $Y$ component, especially when $X$ is high-dimensional. 
In our experiments, we use a weighted Gaussian kernel that emphasizes the $y$ coordinate of $z$, defined as $\kappa_A(z,z') = \exp\paren{- z^T A z'}$ where $z=(x,y) \in \mbr^d \times \mbr$ and $A = \sigma_XI_d \oplus \sigma_YI_1$. This approach can naturally be generalized to incorporate different forms of a priori knowledge about the heterogeneity among agents.\looseness=-1

In the case of covariate shift, the conditional distribution of $Y|X$ is shared across agents, but the marginal distribution of the features $X$ may vary. Here, we propose to learn the weights from the aggregation of KMEs of the features $X$ alone, rather than the full tuple $(X,Y)$, as the heterogeneity arises from the features. The kernel is defined over $\cX$, and so are the RFF. Theoretical guarantees specific to this setting are provided in Appendix~\ref{APPsec:covshift}. 

\section{Experiments}

\begin{figure}[t]  % [t] pour "top", [h] = "here", [b] = "bottom"
	\centering	
	\includegraphics[width=0.7\linewidth]{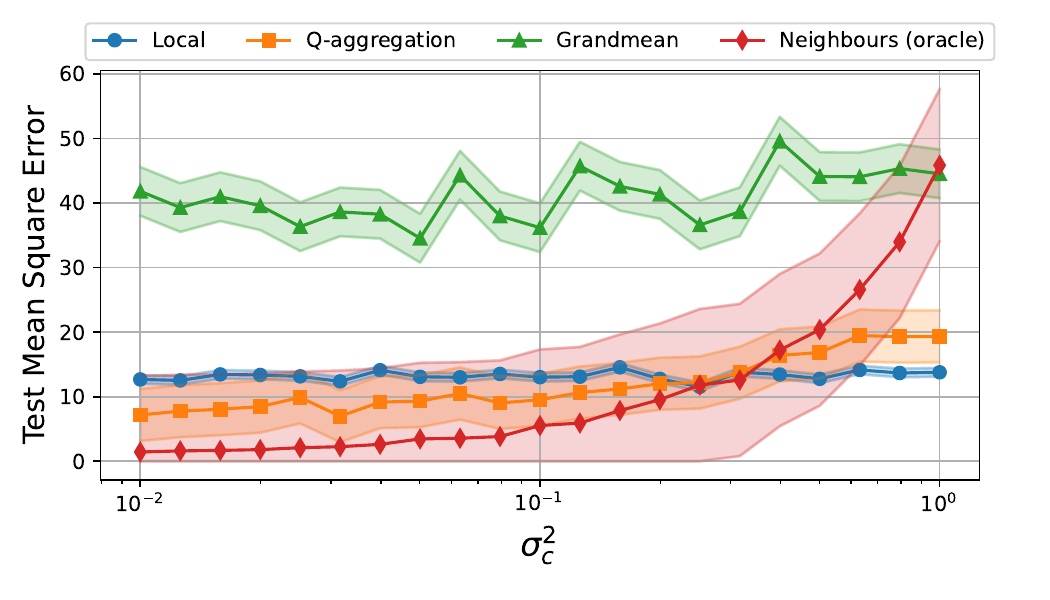}   
	\vspace{-1em}
	\caption{Mean Squared Error and its standard deviation of different approaches in function of the intra-group noise $\sigma^2_c$.
	}	\label{fig:mse_vs_sigma_concept}
	\vspace{-1em}
\end{figure}

We consider different FL settings with heterogeneous agents. We compare Algorithm~\ref{alg:rff} to \textit{Local} training, i.e. using only local data, and to \textit{GrandMean}, where a single model minimizes the weighted risk $\wh{\cR}_{\omvect^{\texttt{gm}}}$ where $(\omega^{\texttt{gm}})_k = \frac{n_k}{\sum_{\ell=1}^{B} n_\ell}$. In some experiments, we further define a notion of \textit{Oracle} having a priori information on agents' similarity. 

For all experiments, we use the RFF approximation of a Gaussian kernel with $D=500$ for synthetic experiments and $D=1000$ for the \texttt{Femnist} dataset.  We emphasize that we \textit{do not} tune the hyperparameters $C_Q$ and $C_P$ of our mixture weight learning approach: they are fixed according to the theory (see experimental details in Appendix~\ref{APPsec:tech_details}).
This illustrates the robustness of our approach.

\subsection{Synthetic Concept Shift}\label{ssec:concept_shift}

We consider a concept shift setting in linear regression. The feature distributions of the agents are identical, with $X_i^{(k)} \sim \cN\paren[1]{(1,\ldots,1),I_d}$, whereas the output distributions vary across agents. Each agent belongs to a group $I_k \sim \cU(\{-1,1\})$, with an intra-group proximity determined by a parameter~$\sigmac^2$. 
For an agent $k \in \intr{B}$:
\begin{gather}
Y_i^{(k)} \sim \inner[1]{\beta_k,X_i^{(k)}} + \cN(0,\sigma^2_Y)\,, \\
\beta_k \sim I_k\sqrt{1- \sigmac^2}\beta_0 + \sigmac \varepsilon_k,\,  \;\beta_0,\varepsilon_k \sim \cN(0,I_d). \label{eq:betak_expe}
\end{gather}
We consider $B=100$ agents having $n_k=10$ points each.
As $\sigma_c$ increases, agents belonging to the same group become increasingly dispersed: when $\sigma_c = 0$, the parameter $\beta_k$ is identical within each cluster, whereas for $\sigma_c = 1$, the parameters $\beta_k$ are completely independent. The parametrization in Eq.~\eqref{eq:betak_expe} ensures that $\e[1]{\norm{\beta_k}^2} = d$ for all values of $\sigma_c$. Thus, locally, the intrinsic difficulty of the problem remains unchanged. The oracle method learns with the agents of the same group by assigning uniform weights like $\omvect^{\texttt{gm}}$, but without taking $\sigma_c$ into consideration.

In Figure~\ref{fig:mse_vs_sigma_concept}, we plot the mean squared error (evaluated using $N_M = 1000$ test points and averaged over $N_r=100$ different training datasets)
of the four methods, as a function of $\sigma^2_c$. We observe a transition at $\sigma_c^2 = 0.5$. For lower values, agents within the same group are relatively close, and leveraging group-specific data improves learning. Beyond this point, the intra-group variability becomes too large, and collaboration leads to a degradation of performance, as illustrated by the results of the oracle method. Our method correctly captures this behavior, adapts to the level of heterogeneity, improves the learning when it is possible and reduces collaboration when $\sigmac^2$ becomes too large.

\subsection{Synthetic Covariate Shift}\label{ssec:cov_shift}

\begin{figure}[t]
\centering
\includegraphics[width=0.55\linewidth]{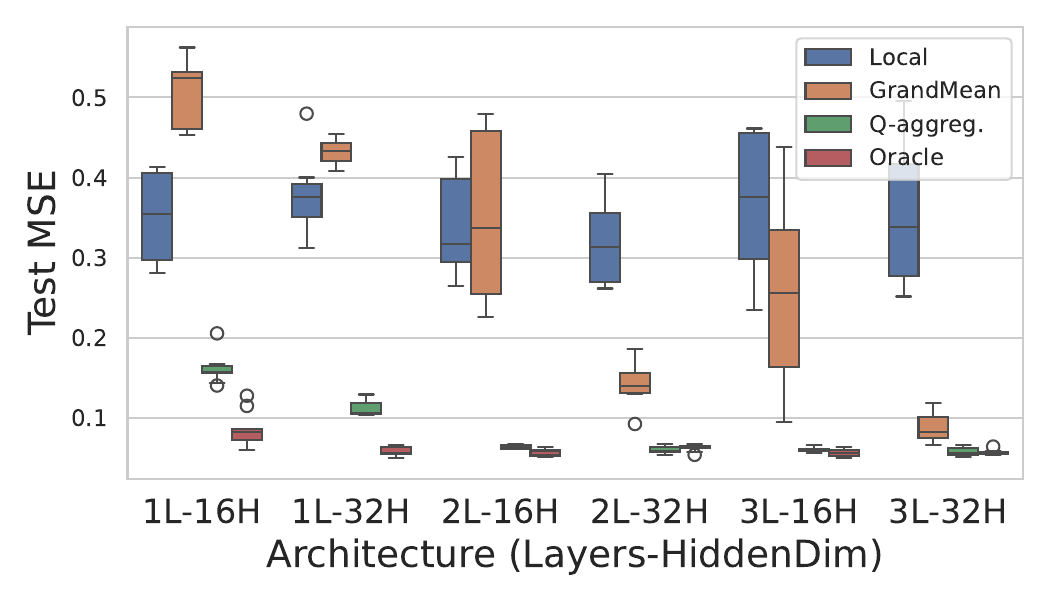}%
\includegraphics[width=0.3\linewidth]{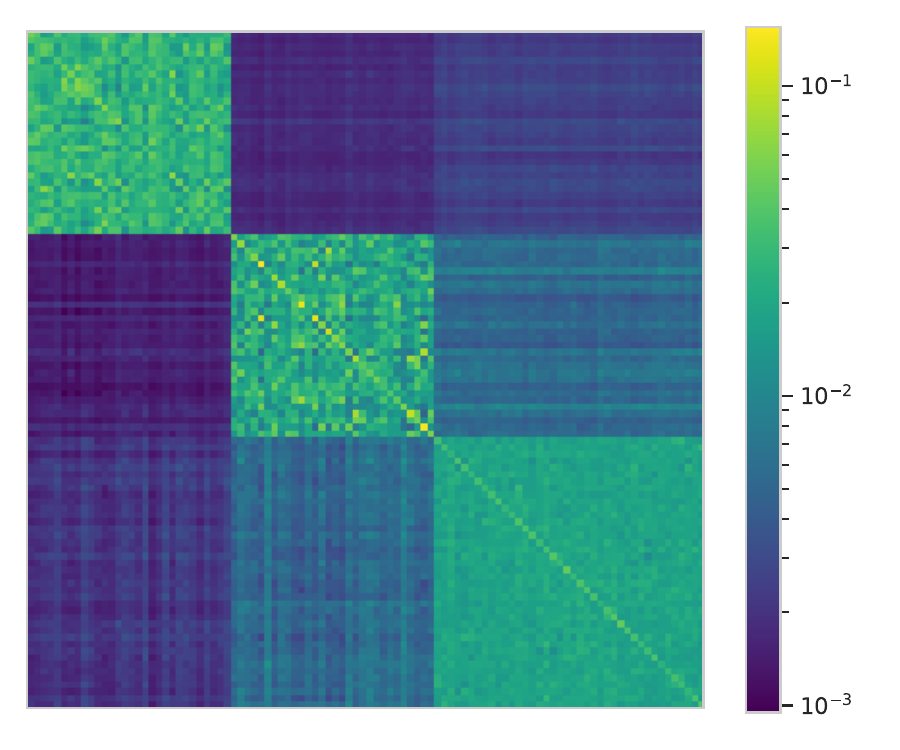}
\caption{Synthetic concept shift. Left side: test MSE in function of the architecture (lower is better). Right size: learned weights.}
\label{fig:mse_covshift}
\vspace{-1em}
\end{figure}

We consider a synthetic data generation setup modeling a covariate shift. The output variable is defined as
\begin{equation*}
Y = \sin(3 X_1) + 0.5 X_2^2 + 0.1 \sum_{i\geq 3} X_i + \cN(0, 0.04),
\end{equation*}
where $X \in \mathbb{R}^d$ is the feature vector. The agents comes from three groups

with group sizes $K_1=K_2=30$ and $K_3=40$ for a total of $B=100$ agents with $n_k=20$. The exact distribution is detailed in Eq.\eqref{eq:distrib_covshift}.
Our regressor is a ReLU neural network with increasing architecture complexity. Figure~\ref{fig:mse_covshift} reports the MSE of the different learning methods across architectures for an agent of the first group. % ($k \leq K_1$).
In this setting, the oracle is trained exclusively on this first group.
Under a covariate shift scenario, one might initially expect that training on all data (\textit{GrandMean}) would be optimal. However, for small models, the network cannot capture the behavior of each subpopulation, making local learning preferable. Our method, in contrast, can identify similar agents and leverage their information effectively. As model capacity increases, learning over all agents becomes increasingly beneficial, and for the largest models, performance approaches that of the oracle.
Figure~\ref{fig:mse_covshift} also displays the weight matrix between agents learned by our approach, where the three clusters are well recovered. Lastly, our method achieves performance very close to that of the oracle.

\subsection{\texttt{FEMNIST} Dataset}\label{ssec:femnist}

We evaluate our approach on the \texttt{FEMNIST} dataset \citep{caldas2018leaf}, a federated variant of MNIST. Each agent holds handwritten character data (both digits and letters, $|\mathcal{Y}| = 62$) exhibiting different writing styles, and our goal is to train a separate classification model for each agent. Some agents share similar handwriting styles, which can be leveraged to improve learning. We view this setting as a covariate shift problem and use RFFs of the isotropic Gaussian kernel only on the features.

As shown in Fig.~\ref{fig:femnist}, our method consistently improves over the GrandMean approach, which is itself generally superior to local training, but may fail for some specific agents. At the opposite, our method has always better always than local training. 
A possible reason explaining why GrandMean performs well could be the low level of heterogeneity in characters' writing.\looseness=-1

\begin{figure}[t]
\centering
\includegraphics[width=0.8\linewidth]{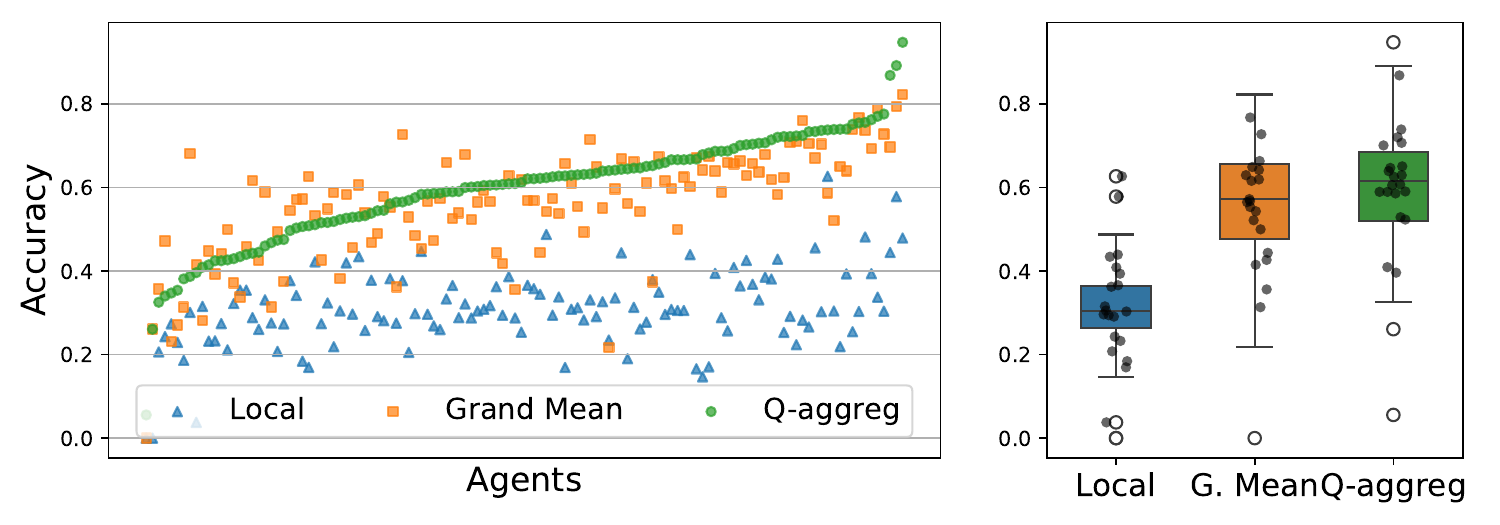}%
\caption{\texttt{FEMNIST}. Accuracy of each agent for each method sorted in function of the Q-aggregation ones and a boxplot of these accuracies other the agents (higher is better)}
\label{fig:femnist}
\vspace{-1em}
\end{figure}

\section{Conclusion}
This work introduces an adaptive algorithm for PFL with strong theoretical guarantees. By assuming that the target loss lies in an RKHS, we reformulate the problem as a high-dimensional mean (KME) estimation task, bridging two research areas and enabling the derivation of rigorous statistical results. The use of random Fourier features approximations further allows a controlled trade-off between communication costs and statistical efficiency in FL settings. This yields a principled mechanism to aggregate heterogeneous client information while adapting to a target client, with both theoretical bounds and empirical results supporting the method's robustness, effectiveness, and interpretability. \looseness=-1

Several key questions remain open. In particular, the choice of kernel is central and deserves further investigation. It is also important to quantify the privacy loss associated with sharing KMEs, and to better understand how general loss functions can be approximated within an RKHS. 
Beyond the setting considered in this work, a natural direction for future research is to extend our framework to scenarios where agents learn their models simultaneously by iteratively aggregating gradients rather than risk measures, enabling even greater adaptability.

\section*{Acknowledgments}
This research was supported in part by the Groupe La Poste, sponsor of the Inria Foundation, in the framework of the FedMalin Inria Challenge.

%\printbibliography
\bibliography{biblio_fed}

\begin{thebibliography}{65}
\providecommand{\natexlab}[1]{#1}
\providecommand{\url}[1]{\texttt{#1}}
\expandafter\ifx\csname urlstyle\endcsname\relax
  \providecommand{\doi}[1]{doi: #1}\else
  \providecommand{\doi}{doi: \begingroup \urlstyle{rm}\Url}\fi

\bibitem[Achituve et~al.(2021)Achituve, Shamsian, Navon, Chechik, and
  Fetaya]{achituve2021personalized}
Idan Achituve, Aviv Shamsian, Aviv Navon, Gal Chechik, and Ethan Fetaya.
\newblock Personalized federated learning with gaussian processes.
\newblock \emph{Advances in Neural Information Processing Systems},
  34:\penalty0 8392--8406, 2021.

\bibitem[Arivazhagan et~al.(2019)Arivazhagan, Aggarwal, Singh, and
  Choudhary]{DBLP:journals/corr/abs-1912-00818}
Manoj~Ghuhan Arivazhagan, Vinay Aggarwal, Aaditya~Kumar Singh, and Sunav
  Choudhary.
\newblock Federated learning with personalization layers.
\newblock \emph{CoRR}, abs/1912.00818, 2019.

\bibitem[Aronszajn(1950)]{aronszajn1950theory}
Nachman Aronszajn.
\newblock Theory of reproducing kernels.
\newblock \emph{Transactions of the American mathematical society}, 68\penalty0
  (3):\penalty0 337--404, 1950.

\bibitem[Bach(2024)]{bach2024learning}
Francis Bach.
\newblock \emph{Learning theory from first principles}.
\newblock MIT press, 2024.

\bibitem[Baraud(2002)]{baraud2002non}
Yannick Baraud.
\newblock {Non-asymptotic minimax rates of testing in signal detection}.
\newblock \emph{Bernoulli}, 8\penalty0 (5):\penalty0 577 -- 606, 2002.

\bibitem[Blanchard and Fermanian(2023)]{blanchard2019nonasymptotic}
Gilles Blanchard and Jean-Baptiste Fermanian.
\newblock Nonasymptotic one-and two-sample tests in high dimension with unknown
  covariance structure.
\newblock In \emph{Foundations of modern statistics}, pages 121--162. Springer,
  2023.

\bibitem[Blanchard et~al.(2018)Blanchard, Carpentier, and
  Gutzeit]{blanchard2018minimax}
Gilles Blanchard, Alexandra Carpentier, and Maurilio Gutzeit.
\newblock {Minimax Euclidean separation rates for testing convex hypotheses in
  $\mathbb{R}^{d}$}.
\newblock \emph{Electronic Journal of Statistics}, 12\penalty0 (2):\penalty0
  3713 -- 3735, 2018.
\newblock \doi{10.1214/18-EJS1472}.

\bibitem[Blanchard et~al.(2024)Blanchard, Fermanian, and
  Marienwald]{blanchard2024estimation}
Gilles Blanchard, Jean-Baptiste Fermanian, and Hannah Marienwald.
\newblock Estimation of multiple mean vectors in high dimension.
\newblock \emph{arXiv preprint arXiv:2403.15038}, 2024.

\bibitem[Bochner(2005)]{bochner2005harmonic}
Salomon Bochner.
\newblock \emph{Harmonic analysis and the theory of probability}.
\newblock Courier Corporation, 2005.

\bibitem[Caldas et~al.(2018)Caldas, Duddu, Wu, Li, Kone{\v{c}}n{\`y}, McMahan,
  Smith, and Talwalkar]{caldas2018leaf}
Sebastian Caldas, Sai Meher~Karthik Duddu, Peter Wu, Tian Li, Jakub
  Kone{\v{c}}n{\`y}, H~Brendan McMahan, Virginia Smith, and Ameet Talwalkar.
\newblock Leaf: A benchmark for federated settings.
\newblock \emph{arXiv preprint arXiv:1812.01097}, 2018.

\bibitem[Caruana(1997)]{caruana1997multitask}
Rich Caruana.
\newblock Multitask learning.
\newblock \emph{Machine learning}, 28\penalty0 (1):\penalty0 41--75, 1997.

\bibitem[Chen et~al.(2018)Chen, Luo, Dong, Li, and He]{chen2018federated}
Fei Chen, Mi~Luo, Zhenhua Dong, Zhenguo Li, and Xiuqiang He.
\newblock Federated meta-learning with fast convergence and efficient
  communication.
\newblock \emph{arXiv preprint arXiv:1802.07876}, 2018.

\bibitem[Chen et~al.(2022)Chen, Xiao, and Pang]{chen2022satellite}
Hao Chen, Ming Xiao, and Zhibo Pang.
\newblock Satellite-based computing networks with federated learning.
\newblock \emph{IEEE Wireless Communications}, 29\penalty0 (1):\penalty0
  78--84, 2022.

\bibitem[Cui et~al.(2022)Cui, Liang, Pan, Chen, Zhang, and
  Wang]{cui2022collaboration}
Sen Cui, Jian Liang, Weishen Pan, Kun Chen, Changshui Zhang, and Fei Wang.
\newblock Collaboration equilibrium in federated learning.
\newblock In \emph{Proceedings of the 28th ACM SIGKDD conference on knowledge
  discovery and data mining}, pages 241--251, 2022.

\bibitem[Deng et~al.(2020)Deng, Kamani, and Mahdavi]{deng2020adaptive}
Yuyang Deng, Mohammad~Mahdi Kamani, and Mehrdad Mahdavi.
\newblock Adaptive personalized federated learning.
\newblock \emph{arXiv preprint arXiv:2003.13461}, 2020.

\bibitem[Ding and Wang(2022)]{ding2022collaborative}
Shu Ding and Wei Wang.
\newblock Collaborative learning by detecting collaboration partners.
\newblock \emph{Advances in Neural Information Processing Systems},
  35:\penalty0 15629--15641, 2022.

\bibitem[Dinh et~al.(2020)Dinh, Tran, and Nguyen]{t2020personalized}
Canh Dinh, Nguyen Tran, and Josh Nguyen.
\newblock Personalized federated learning with moreau envelopes.
\newblock \emph{Advances in neural information processing systems},
  33:\penalty0 21394--21405, 2020.

\bibitem[Duan and Wang(2023)]{duan2023adaptive}
Yaqi Duan and Kaizheng Wang.
\newblock Adaptive and robust multi-task learning.
\newblock \emph{The Annals of Statistics}, 51\penalty0 (5):\penalty0
  2015--2039, 2023.

\bibitem[Efron and Morris(1972)]{efron1972empirical}
Bradley Efron and Carl Morris.
\newblock Empirical bayes on vector observations: An extension of stein's
  method.
\newblock \emph{Biometrika}, 59\penalty0 (2):\penalty0 335--347, 1972.

\bibitem[Elmahallawy and Luo(2022)]{elmahallawy2022asyncfleo}
Mohamed Elmahallawy and Tie Luo.
\newblock Asyncfleo: Asynchronous federated learning for leo satellite
  constellations with high-altitude platforms.
\newblock In \emph{2022 IEEE International Conference on Big Data (Big Data)},
  pages 5478--5487. IEEE, 2022.

\bibitem[Even et~al.(2022)Even, Massouli{\'e}, and Scaman]{even2022sample}
Mathieu Even, Laurent Massouli{\'e}, and Kevin Scaman.
\newblock On sample optimality in personalized collaborative and federated
  learning.
\newblock \emph{Advances in Neural Information Processing Systems},
  35:\penalty0 212--225, 2022.

\bibitem[Fallah et~al.(2020)Fallah, Mokhtari, and
  Ozdaglar]{fallah2020personalized}
Alireza Fallah, Aryan Mokhtari, and Asuman Ozdaglar.
\newblock Personalized federated learning: A meta-learning approach.
\newblock \emph{arXiv preprint arXiv:2002.07948}, 2020.

\bibitem[Feldman et~al.(2014)Feldman, Gupta, and
  Frigyik]{feldman2014revisiting}
Sergey Feldman, Maya~R Gupta, and Bela~A Frigyik.
\newblock Revisiting stein’s paradox: Multi-task averaging.
\newblock \emph{Journal of Machine Learning Research}, 15:\penalty0 3621--3662,
  2014.

\bibitem[George(1986)]{george1986minimax}
Edward~I George.
\newblock Minimax multiple shrinkage estimation.
\newblock \emph{The Annals of Statistics}, pages 188--205, 1986.

\bibitem[Ghari and Shen(2022)]{m2022personalized}
M~Pouya Ghari and Yanning Shen.
\newblock Personalized online federated learning with multiple kernels.
\newblock \emph{Advances in Neural Information Processing Systems},
  35:\penalty0 33316--33329, 2022.

\bibitem[Ghosh et~al.(2020)Ghosh, Chung, Yin, and
  Ramchandran]{ghosh2020efficient}
Avishek Ghosh, Jichan Chung, Dong Yin, and Kannan Ramchandran.
\newblock An efficient framework for clustered federated learning.
\newblock \emph{Advances in neural information processing systems},
  33:\penalty0 19586--19597, 2020.

\bibitem[Gretton et~al.(2012)Gretton, Borgwardt, Rasch, Sch{\"o}lkopf, and
  Smola]{gretton2012kernel}
Arthur Gretton, Karsten~M Borgwardt, Malte~J Rasch, Bernhard Sch{\"o}lkopf, and
  Alexander Smola.
\newblock A kernel two-sample test.
\newblock \emph{The journal of machine learning research}, 13\penalty0
  (1):\penalty0 723--773, 2012.

\bibitem[Hanzely et~al.(2020)Hanzely, Hanzely, Horv\'{a}th, and
  Richtarik]{NEURIPS2020_187acf79}
Filip Hanzely, Slavom\'{\i}r Hanzely, Samuel Horv\'{a}th, and Peter Richtarik.
\newblock Lower bounds and optimal algorithms for personalized federated
  learning.
\newblock In H.~Larochelle, M.~Ranzato, R.~Hadsell, M.F. Balcan, and H.~Lin,
  editors, \emph{Advances in Neural Information Processing Systems}, volume~33,
  pages 2304--2315. Curran Associates, Inc., 2020.

\bibitem[Hashemi et~al.(2024)Hashemi, He, and Jaggi]{hashemi2024cobo}
Diba Hashemi, Lie He, and Martin Jaggi.
\newblock Cobo: Collaborative learning via bilevel optimization.
\newblock \emph{Advances in Neural Information Processing Systems},
  37:\penalty0 15550--15574, 2024.

\bibitem[Hsu et~al.(2012)Hsu, Kakade, and Zhang]{hsu2012tail}
Daniel Hsu, Sham~M Kakade, and Tong Zhang.
\newblock Tail inequalities for sums of random matrices that depend on the
  intrinsic dimension.
\newblock \emph{Electronic Communications in Probability}, 17:\penalty0 1--13,
  2012.

\bibitem[James et~al.(1961)James, Stein, et~al.]{james1961estimation}
William James, Charles Stein, et~al.
\newblock Estimation with quadratic loss.
\newblock In \emph{Proceedings of the fourth Berkeley symposium on mathematical
  statistics and probability}, volume~1, pages 361--379. University of
  California Press, 1961.

\bibitem[Kairouz et~al.(2021)Kairouz, McMahan, Avent, Bellet, Bennis, Bhagoji,
  Bonawitz, Charles, Cormode, Cummings, et~al.]{kairouz2021advances}
Peter Kairouz, H~Brendan McMahan, Brendan Avent, Aur{\'e}lien Bellet, Mehdi
  Bennis, Arjun~Nitin Bhagoji, Kallista Bonawitz, Zachary Charles, Graham
  Cormode, Rachel Cummings, et~al.
\newblock Advances and open problems in federated learning.
\newblock \emph{Foundations and trends{\textregistered} in machine learning},
  14\penalty0 (1--2):\penalty0 1--210, 2021.

\bibitem[Kharrat et~al.(2025)Kharrat, Canini, and
  Horv{\'a}th]{pmlr-v258-kharrat25a}
Salma Kharrat, Marco Canini, and Samuel Horv{\'a}th.
\newblock Dpfl: Decentralized personalized federated learning.
\newblock In Yingzhen Li, Stephan Mandt, Shipra Agrawal, and Emtiyaz Khan,
  editors, \emph{Proceedings of The 28th International Conference on Artificial
  Intelligence and Statistics}, volume 258 of \emph{Proceedings of Machine
  Learning Research}, pages 5086--5094. PMLR, 03--05 May 2025.

\bibitem[Kivinen and Warmuth(1997)]{kivinen1997exponentiated}
Jyrki Kivinen and Manfred~K Warmuth.
\newblock Exponentiated gradient versus gradient descent for linear predictors.
\newblock \emph{Information and computation}, 132\penalty0 (1):\penalty0 1--63,
  1997.

\bibitem[Koltchinskii and Lounici(2016)]{koltchinskii2016asymptotics}
Vladimir Koltchinskii and Karim Lounici.
\newblock Asymptotics and concentration bounds for bilinear forms of spectral
  projectors of sample covariance.
\newblock In \emph{Annales de l’Institut Henri Poincar{\'e}-Probabilit{\'e}s
  et Statistiques}, volume~52, pages 1976--2013, 2016.

\bibitem[Kulkarni et~al.(2020)Kulkarni, Kulkarni, and Pant]{kulkarni2020survey}
Viraj Kulkarni, Milind Kulkarni, and Aniruddha Pant.
\newblock Survey of personalization techniques for federated learning.
\newblock In \emph{2020 fourth world conference on smart trends in systems,
  security and sustainability (WorldS4)}, pages 794--797. IEEE, 2020.

\bibitem[Li et~al.(2021)Li, Hu, Beirami, and Smith]{li2021ditto}
Tian Li, Shengyuan Hu, Ahmad Beirami, and Virginia Smith.
\newblock Ditto: Fair and robust federated learning through personalization.
\newblock In \emph{International conference on machine learning}, pages
  6357--6368. PMLR, 2021.

\bibitem[Mansour et~al.(2020)Mansour, Mohri, Ro, and Suresh]{mansour2020three}
Yishay Mansour, Mehryar Mohri, Jae Ro, and Ananda~Theertha Suresh.
\newblock Three approaches for personalization with applications to federated
  learning.
\newblock \emph{arXiv preprint arXiv:2002.10619}, 2020.

\bibitem[Marfoq et~al.(2021)Marfoq, Neglia, Bellet, Kameni, and
  Vidal]{marfoq2021federated}
Othmane Marfoq, Giovanni Neglia, Aur{\'e}lien Bellet, Laetitia Kameni, and
  Richard Vidal.
\newblock Federated multi-task learning under a mixture of distributions.
\newblock \emph{Advances in neural information processing systems},
  34:\penalty0 15434--15447, 2021.

\bibitem[Marienwald et~al.(2021)Marienwald, Fermanian, and
  Blanchard]{marienwald2021high}
Hannah Marienwald, Jean-Baptiste Fermanian, and Gilles Blanchard.
\newblock High-dimensional multi-task averaging and application to kernel mean
  embedding.
\newblock In \emph{International Conference on Artificial Intelligence and
  Statistics}, pages 1963--1971. PMLR, 2021.

\bibitem[Mart{\'\i}nez-Rego and Pontil(2013)]{martinez2013multi}
David Mart{\'\i}nez-Rego and Massimiliano Pontil.
\newblock Multi-task averaging via task clustering.
\newblock In \emph{International Workshop on Similarity-Based Pattern
  Recognition}, pages 148--159. Springer, 2013.

\bibitem[Muandet et~al.(2014)Muandet, Fukumizu, Sriperumbudur, Gretton, and
  Sch{\"o}lkopf]{muandet2014kernel}
Krikamol Muandet, Kenji Fukumizu, Bharath Sriperumbudur, Arthur Gretton, and
  Bernhard Sch{\"o}lkopf.
\newblock Kernel mean estimation and stein effect.
\newblock In \emph{International Conference on Machine Learning}, pages 10--18.
  PMLR, 2014.

\bibitem[Nemirovski(2004)]{nemirovski2004prox}
Arkadi Nemirovski.
\newblock Prox-method with rate of convergence o (1/t) for variational
  inequalities with lipschitz continuous monotone operators and smooth
  convex-concave saddle point problems.
\newblock \emph{SIAM Journal on Optimization}, 15\penalty0 (1):\penalty0
  229--251, 2004.

\bibitem[Nguyen et~al.(2022)Nguyen, Pham, Pathirana, Ding, Seneviratne, Lin,
  Dobre, and Hwang]{nguyen2022federated}
Dinh~C Nguyen, Quoc-Viet Pham, Pubudu~N Pathirana, Ming Ding, Aruna
  Seneviratne, Zihuai Lin, Octavia Dobre, and Won-Joo Hwang.
\newblock Federated learning for smart healthcare: A survey.
\newblock \emph{ACM Computing Surveys (Csur)}, 55\penalty0 (3):\penalty0 1--37,
  2022.

\bibitem[Philippenko et~al.(2025)Philippenko, Bars, Scaman, and
  Massouli{\'e}]{philippenko2025adaptive}
Constantin Philippenko, Batiste~Le Bars, Kevin Scaman, and Laurent
  Massouli{\'e}.
\newblock Adaptive collaboration for online personalized distributed learning
  with heterogeneous clients.
\newblock \emph{arXiv preprint arXiv:2507.06844}, 2025.

\bibitem[Rahimi and Recht(2007)]{rahimi2007random}
Ali Rahimi and Benjamin Recht.
\newblock Random features for large-scale kernel machines.
\newblock \emph{Advances in neural information processing systems}, 20, 2007.

\bibitem[Rahimi and Recht(2008{\natexlab{a}})]{rahimi2008uniform}
Ali Rahimi and Benjamin Recht.
\newblock Uniform approximation of functions with random bases.
\newblock In \emph{2008 46th annual allerton conference on communication,
  control, and computing}, pages 555--561. IEEE, 2008{\natexlab{a}}.

\bibitem[Rahimi and Recht(2008{\natexlab{b}})]{rahimi2008weighted}
Ali Rahimi and Benjamin Recht.
\newblock Weighted sums of random kitchen sinks: Replacing minimization with
  randomization in learning.
\newblock \emph{Advances in neural information processing systems}, 21,
  2008{\natexlab{b}}.

\bibitem[Razmi et~al.(2022)Razmi, Matthiesen, Dekorsy, and
  Popovski]{razmi2022board}
Nasrin Razmi, Bho Matthiesen, Armin Dekorsy, and Petar Popovski.
\newblock On-board federated learning for dense leo constellations.
\newblock In \emph{ICC 2022-IEEE International Conference on Communications},
  pages 4715--4720. IEEE, 2022.

\bibitem[Rieke et~al.(2020)Rieke, Hancox, Li, Milletari, Roth, Albarqouni,
  Bakas, Galtier, Landman, Maier-Hein, et~al.]{rieke2020future}
Nicola Rieke, Jonny Hancox, Wenqi Li, Fausto Milletari, Holger~R Roth, Shadi
  Albarqouni, Spyridon Bakas, Mathieu~N Galtier, Bennett~A Landman, Klaus
  Maier-Hein, et~al.
\newblock The future of digital health with federated learning.
\newblock \emph{NPJ digital medicine}, 3\penalty0 (1):\penalty0 119, 2020.

\bibitem[Rosasco et~al.(2010)Rosasco, Belkin, and De~Vito]{rosasco2010learning}
Lorenzo Rosasco, Mikhail Belkin, and Ernesto De~Vito.
\newblock On learning with integral operators.
\newblock \emph{Journal of Machine Learning Research}, 11\penalty0 (2), 2010.

\bibitem[Sattler et~al.(2020)Sattler, M{\"u}ller, and
  Samek]{sattler2020clustered}
Felix Sattler, Klaus-Robert M{\"u}ller, and Wojciech Samek.
\newblock Clustered federated learning: Model-agnostic distributed multitask
  optimization under privacy constraints.
\newblock \emph{IEEE transactions on neural networks and learning systems},
  32\penalty0 (8):\penalty0 3710--3722, 2020.

\bibitem[Scaman et~al.(2024)Scaman, Even, Le~Bars, and
  Massouli{\'e}]{scaman2024minimax}
Kevin Scaman, Mathieu Even, Batiste Le~Bars, and Laurent Massouli{\'e}.
\newblock Minimax excess risk of first-order methods for statistical learning
  with data-dependent oracles.
\newblock In \emph{International Conference on Artificial Intelligence and
  Statistics}, pages 3709--3717. PMLR, 2024.

\bibitem[Smith et~al.(2017)Smith, Chiang, Sanjabi, and
  Talwalkar]{smith2017federated}
Virginia Smith, Chao-Kai Chiang, Maziar Sanjabi, and Ameet~S Talwalkar.
\newblock Federated multi-task learning.
\newblock \emph{Advances in neural information processing systems}, 30, 2017.

\bibitem[Smola et~al.(2007)Smola, Gretton, Song, and
  Sch{\"o}lkopf]{smola2007hilbert}
Alex Smola, Arthur Gretton, Le~Song, and Bernhard Sch{\"o}lkopf.
\newblock A hilbert space embedding for distributions.
\newblock In \emph{International conference on algorithmic learning theory},
  pages 13--31. Springer, 2007.

\bibitem[Sriperumbudur et~al.(2008)Sriperumbudur, Gretton, Fukumizu, Lanckriet,
  and Sch{\"o}lkopf]{sriperumbudur2008injective}
Bharath~K Sriperumbudur, Arthur Gretton, Kenji Fukumizu, Gert Lanckriet, and
  Bernhard Sch{\"o}lkopf.
\newblock Injective hilbert space embeddings of probability measures.
\newblock In \emph{21st annual conference on learning theory (COLT 2008)},
  pages 111--122. Omnipress, 2008.

\bibitem[Sriperumbudur et~al.(2010)Sriperumbudur, Gretton, Fukumizu,
  Sch{\"o}lkopf, and Lanckriet]{sriperumbudur2010hilbert}
Bharath~K Sriperumbudur, Arthur Gretton, Kenji Fukumizu, Bernhard
  Sch{\"o}lkopf, and Gert~RG Lanckriet.
\newblock Hilbert space embeddings and metrics on probability measures.
\newblock \emph{The Journal of Machine Learning Research}, 11:\penalty0
  1517--1561, 2010.

\bibitem[Sriperumbudur et~al.(2011)Sriperumbudur, Fukumizu, and
  Lanckriet]{sriperumbudur2011universality}
Bharath~K Sriperumbudur, Kenji Fukumizu, and Gert~RG Lanckriet.
\newblock Universality, characteristic kernels and rkhs embedding of measures.
\newblock \emph{Journal of Machine Learning Research}, 12\penalty0 (7), 2011.

\bibitem[Stein(1956)]{stein1956inadmissibility}
Charles Stein.
\newblock Inadmissibility of the usual estimator for the mean of a multivariate
  normal distribution.
\newblock In \emph{Proceedings of the third Berkeley symposium on mathematical
  statistics and probability, volume 1: Contributions to the theory of
  statistics}, volume~3, pages 197--207. University of California Press, 1956.

\bibitem[Tan et~al.(2022)Tan, Yu, Cui, and Yang]{tan2022towards}
Alysa~Ziying Tan, Han Yu, Lizhen Cui, and Qiang Yang.
\newblock Towards personalized federated learning.
\newblock \emph{IEEE transactions on neural networks and learning systems},
  34\penalty0 (12):\penalty0 9587--9603, 2022.

\bibitem[Tropp(2015)]{tropp2015introduction}
Joel~A Tropp.
\newblock An introduction to matrix concentration inequalities.
\newblock \emph{Foundations and trends{\textregistered} in machine learning},
  8\penalty0 (1-2):\penalty0 1--230, 2015.

\bibitem[Wu et~al.(2023)Wu, Zhang, Yu, Liu, Gu, Zhou, Chen, and
  Cheng]{wu2023personalized}
Yue Wu, Shuaicheng Zhang, Wenchao Yu, Yanchi Liu, Quanquan Gu, Dawei Zhou,
  Haifeng Chen, and Wei Cheng.
\newblock Personalized federated learning under mixture of distributions.
\newblock In \emph{International Conference on Machine Learning}, pages
  37860--37879. PMLR, 2023.

\bibitem[Xu et~al.(2021)Xu, Glicksberg, Su, Walker, Bian, and
  Wang]{xu2021federated}
Jie Xu, Benjamin~S Glicksberg, Chang Su, Peter Walker, Jiang Bian, and Fei
  Wang.
\newblock Federated learning for healthcare informatics.
\newblock \emph{Journal of healthcare informatics research}, 5\penalty0
  (1):\penalty0 1--19, 2021.

\bibitem[Zantedeschi et~al.(2020)Zantedeschi, Bellet, and
  Tommasi]{zantedeschi2020fully}
Valentina Zantedeschi, Aur{\'e}lien Bellet, and Marc Tommasi.
\newblock Fully decentralized joint learning of personalized models and
  collaboration graphs.
\newblock In \emph{International Conference on Artificial Intelligence and
  Statistics}, pages 864--874. PMLR, 2020.

\bibitem[Zhang and Yang(2021)]{zhang2021survey}
Yu~Zhang and Qiang Yang.
\newblock A survey on multi-task learning.
\newblock \emph{IEEE transactions on knowledge and data engineering},
  34\penalty0 (12):\penalty0 5586--5609, 2021.

\end{thebibliography}
\newpage

\appendix

\section{Practical implementation of Q-aggregation (Algorithm~\ref{alg:Qaggregation})}\label{APPsec:qaggreg_details}
\batodo{Restate the algorithm here ? }

\subsection{Practical computation of the terms}

Algorithm~\ref{alg:Qaggregation} is presented in a very general form and involves, in its computations, the empirical covariance operator $\widehat{\Sigma}_1$, which may appear impractical to compute. To clarify the implementation of the algorithm, we detail below the computation of each term. We first provide closed-form expressions as functions of the local data $(\Phi_i^{(1)})_i$ and the empirical means $\widehat{\nu}_k$, and then consider the case where $\mathcal{H}$ is an RKHS, expressing each quantity in terms of the kernel $\kappa$ and the data from each sample. These equations are rewriting of expressions from \citet{blanchard2024estimation}.

\textbf{Closed forms.} If it is possible to compute the distance between local points and empirical means (finite dimension or specific kernel such as polynomial ones, see Example~\ref{ex:linear_regression2}), the following expression can be used.
\begin{gather}
	\tr \widehat{\Sigma}_1 = \frac{1}{n_1-1} \sum_{i=1}^{n_1} \norm{ \Phi_i^{(1)} - \widehat{\nu}_1}^2_\cH\,,\\
	Q(\omvect) = \frac{1}{\sqrt{n_1}}\sum_{k=1}^{B} \omega_k \sqrt{q_k}\,, \quad \text{where} \quad q_k = \frac{1}{n_1-1} \sum_{i=1}^{n_1} \inner{\Phi_i^{(1)} - \widehat{\nu}_1, \widehat{\nu}_k - \widehat{\nu}_1 }^2_\cH
\end{gather} 

\textbf{General kernel form.} For certain kernels, such as the Gaussian or Laplacian kernel, computing distances between kernel mean embeddings (KMEs) necessitates access to the entire dataset. Consequently, the algorithm is restricted to centralized settings or must rely on random Fourier features, as outlined in Section~\ref{sec:RFF_method}, which effectively reduces the problem to the previously discussed formulations.
\begin{equation}
	\tr \widehat{\Sigma}_1 = \frac{1}{2(n_1-1)} \sum_{i\neq j =1}^{n_1} \paren{ \kappa(Z_i^{(1)}, Z_i^{(1)}) - 2 \kappa(Z_i^{(1)}, Z_j^{(1)}) + \kappa(Z_j^{(1)}, Z_j^{(1)})}\,,
\end{equation}
and
\begin{multline}
	q_k = \frac{1}{n_1-1} \sum_{i=1}^{n_1} \paren[2]{\frac{1}{n_k} \sum_{j=1}^{n_k} \kappa(Z_i^{(1)}, Z_j^{(k)}) - \frac{1}{n_1} \sum_{j=1}^{n_1} \kappa(Z_i^{(1)}, Z_j^{(1)}) }^2 \\
	- \frac{n_1}{n_1-1} \paren[2]{  \frac{1}{n_1n_k} \sum_{i=1}^{n_1} \sum_{j=1}^{n_k} \kappa(Z_i^{(1)}, Z_j^{(k)}) - \frac{1}{n_1^2} \sum_{i=1}^{n_1}\sum_{j=1}^{n_1} \kappa(Z_i^{(1)}, Z_j^{(1)})}^2
\end{multline}

\subsection{Optimization}

%\begin{wrapfigure}{r}{0.5\textwidth} % 'r' = droite, 0.55 largeur de l'algorithme
%	\begin{minipage}{0.5\textwidth} % ajuste la largeur ici
%		\raggedleft
		\begin{algorithm}[H]
			\caption{Exponential gradient descent}
			\label{alg:expo_descent}
			\begin{algorithmic}
				\State {\bf Inputs.} Gradient $\nabla f : \mbr^B \to \mbr^B$, initialization point $\omega_0 \in \Spx_B$,\\
				learning rate $\eta>0$, number of steps~$T$.
				\For{$t=0,\ldots, T-1$}
				\State {\bf Compute gradient.} $g_t = \nabla f(\omega_t)$
				\State {\bf Proxy update.} For $i\in \intr{B}$, 
				$(\omega'_t)_i = (\omega_t)_i e^{-\eta(g_t)_i} \big\slash\paren{\sum_{j=1}^{B} e^{-\eta(g_t)_j}}$
				\State {\bf Compute proxy gradient.} $g_t' = \nabla f(\omega'_t)$
				\State {\bf Update.} For $i\in \intr{B}$, 
				$(\omega_{t+1})_i = (\omega_t)_i e^{-\eta(g'_t)_i} \big\slash\paren{\sum_{j=1}^{B} e^{-\eta(g'_t)_j}}$
				\EndFor
				\State {\bf Output:} $\omega_{T}$. 
			\end{algorithmic}
		\end{algorithm}
	%\end{minipage}
%\end{wrapfigure}

The minimized quantity of Algorithm~\ref{alg:Qaggregation} can be expressed as a quadratic form. Indeed for any $\omvect \in \Spx_B$:
\begin{align*}
	\widehat{L}_1(\omvect) + C_Q\wh{Q}_1(\omvect) + C_P\wh{P}_1(\omvect) = \omvect^TA\omvect + \inner{ \omvect, b}\,,
\end{align*}
where $A = \paren{ \inner{ \widehat{\nu}_k - \wh{\nu}_1,\widehat{\nu}_\ell - \wh{\nu}_1}_\cH}_{k,\ell=1}^T$ and $b$ is just the vectorial sum of $\wh{P}$ and $\wh{Q}$ and adding $\tr\wh{\Sigma}_1/n_1$ at the firstcoordinate . We then find the minimum of this quadratic form by an exponential gradient descent \citep{kivinen1997exponentiated}. To a better a convergence, we adjust it using the Prox-Method of \citet{nemirovski2004prox}. The learning rate is chosen as $\eta = c/\paren{ 2\norm{A}_{op}+ \norm{b}_\infty}$, since $2\norm{A}_{op}+ \norm{b}_\infty$ upper bounds the Lipschitz norm of the gradient of $\omvect \mapsto \omvect^TA\omvect + \inner{b,\omvect}$. The parameter $c$ is fixed at $c=0.5$ and the number of gradients step at $T=1000$ in all the experiments.

\section{Technical details of the experiments}\label{APPsec:tech_details}

For experiments involving neural networks, we report the performance of each method corresponding to the best test accuracy achieved during training. This choice avoids the need to tune stopping times for the different algorithms, which is itself a nontrivial issue in the federated learning setting.
\subsection{Synthetic concept shift}

Table~\ref{tab:hyperparams_concept} presents the different parameter used in the experiments of Section~\ref{ssec:concept_shift}. To capture the concept shift, we rescaled the features impact on the kernel.
\begin{table}[h]
	\centering
	\begin{tabular}{l|ll}
		\toprule
		\textbf{Type} & \textbf{Parameter} & \textbf{Value} \\
		\midrule
		& Dimension & $d=20$ \\
		& Noise variance & $\sigma_Y^2=2$ \\
		Data & Number of points per agent & $n_k=10$ \\
		& Number of agents & $B=100$ \\
		& Number of repetitions & $N_r=100$ \\
		& Number of test points for evaluating MSE & $N_M=1000$ \\
		\midrule
		& Dimension of random features & $D=500$ \\
		& Kernel  & $\kappa\paren{(x,y),(x',y')}$ \\
		Method& &$= \exp \paren{ - \norm{x-x'}^2/\sqrt{d+1}- \paren{y-y'}^2}$\\
			& RFF distribution & $p \sim \cN(0,A)$ with $A =
		\begin{bmatrix}
			\frac{I_d}{d+1} & 0 \\
			0 & 1
		\end{bmatrix}$  \\
		
		& Parameter of Q-aggregation & $C_Q^2 = C_P =\log B$ \\
		& Model & Linear regression \\
		\bottomrule
	\end{tabular}
	\caption{Parameters of the synthetic concept shift experiments of Section~\ref{ssec:concept_shift}}
	\label{tab:hyperparams_concept}
\end{table}

\subsection{Synthetic covariate shift}

Table~\ref{tab:hyperparams_covshift} presents the different parameter used in the experiments of Section~\ref{ssec:cov_shift}. 
\begin{table}[h]
	\centering
	\begin{tabular}{l|ll}
		\toprule
		\textbf{Type} & \textbf{Parameter} & \textbf{Value} \\
		\midrule
		& Dimension & $d=4$ \\
		& Intra group variance & $v_1^2=0.01, v_2^2= 0.3$ \\
		& Number of points per agent & $n_k=20$ \\
		Data 	& Number of agents by group & $K_1 = K_2 = 30$ \\
		& Center of group 2 & $\mu_0 = (2,\ldots,2)$\\
		& Number of agents by group & $B=100$ \\
		& Number of repetitions & $N_r=20$ \\
		& Number of test points for evaluating MSE & $N_M=2000$ \\
		\midrule
		& Dimension of random features & $D=500$ \\
		& Kernel  & Gaussian kernel\\
		& RFF distribution & $p \sim \cN(0,I_d)$ \\
		
		Method		& Parameter of Q-aggregation & $C_Q = C_P =1$ \\
		& Model & ReLU neural networks  \\
		&Architectures & Number of hidden layers $\in \set{1,2,3}$,\\
		& & Hidden dimensions $\in \{16,32\}$\\
		&Number of epochs & $n_e = 2000$ \\
		&Learning rate & $lr= 0.001$\\
		\bottomrule
	\end{tabular}
	\caption{Parameters of the synthetic covariance shift experiments of Section~\ref{ssec:cov_shift}}
	\label{tab:hyperparams_covshift}
\end{table}
The distribution of the features is
\begin{equation}\label{eq:distrib_covshift}
	X_i^{(k)}\! \sim 
	\begin{cases}
		\mathcal{N}(\mu_k, \sigma_1^2 I_d),\,  1 \leq k \leq K_1,\, \mu_k \sim \mathcal{N}(0, v_1^2 I_d),\\
		\mathcal{N}(\mu_k, \sigma_2^2 I_d), \, K_1 < k \leq K_2,\, \mu_k \sim \mathcal{N}(\mu_0, v_2^2 I_d),\\
		\mathcal{U}([-6,6]^d),\,  k > K_2.
	\end{cases}
\end{equation}

\subsection{Femnist dataset}
The model used in the experiments of Section~\ref{ssec:femnist} in a ReLU neural network with $1$ hidden layer of dimension $32$. It is trained during $2000$ epochs with a learning rate of $0.001$. We only consider $B=192$ agents of this dataset. The test and train sizes are represented in Figure~\ref{fig:size_femnist}. The RFFs are the Gaussian ones with a dimension $D=1000$. The ambient dimension of the features is $d = 28\times28= 782$. The Q-aggregation is applied with $C_Q^2 = C_P = \log B$.

\begin{figure}[h] 
	\centering
	\includegraphics[width=0.7\linewidth]{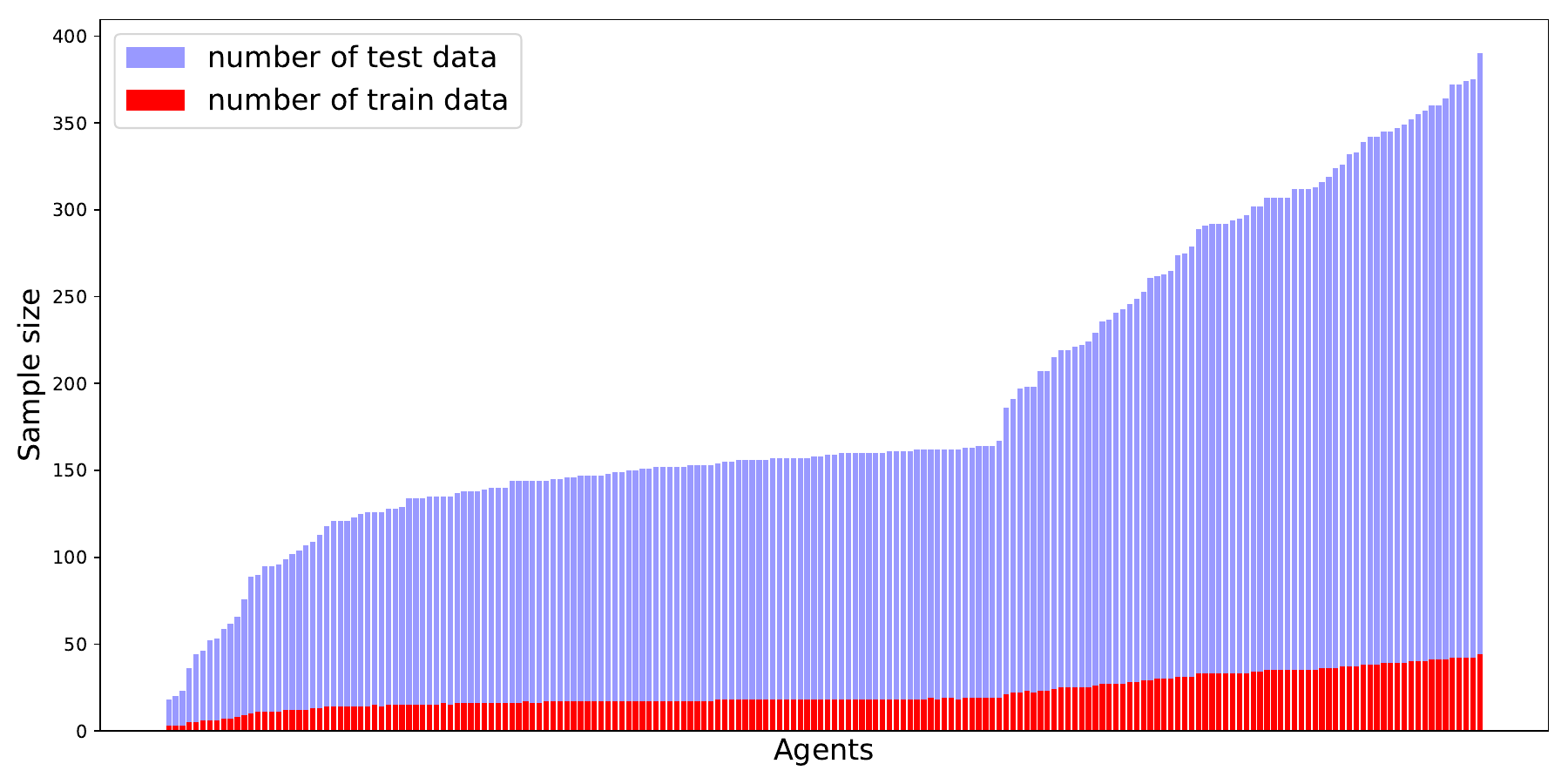}  
	\caption{Number of train and test points for each agent.}
	\label{fig:size_femnist}
\end{figure}

\section{Theoretical result in case of covariate shift}\label{APPsec:covshift}

We present in this section some theoretical results for our approach in case of covariate shift. In this case, we propose to directly learned the aggregation weights from the KME of the features instead of the KMEs of the tuple.
Then, under Assumption~\ref{ass:covshift}, the approximation error of the risk is controlled by the MMD distance between the (weighted) empirical distribution of the features and the one of the target distribution, and does not involve the dependence in $Y$.

\begin{assumption}[Covariate shift]\label{ass:covshift}
	$\cZ = \cX \times \cY$ and for a kernel $\kappa$ on $\cX$, for all $\theta \in \Theta$ and $y\in \cY$, there exists $c_{\theta} \in \Theta$ and $h_{\theta,y} \in \cH_\cX$ such that $\ell_\theta(x,y) = c_{\theta} + h_{\theta,y}(x)$.	
\end{assumption}

\begin{proposition}\label{prop:covshift}
	Let Assumption~\ref{ass:covshift} be satisfied and that the conditional distribution $\mbp^{Y|X}$ is common over the agents, i.e. $\mbp_k = \mbp^{Y|X} \mbp^X_k$ for $k\in \intr{B}$. Then, for any $\widehat{\omvect}$ the weights and $\theta \in \Theta$:
	\begin{equation}\label{eq:covshift}
		\e{ \abs{\widehat{\cR}_{\wh{\omvect}}(\theta) - \cR_1(\theta)}} \leq R_{\theta}^{\cY}\e{ \MMD\paren[1]{\mbp_1^X, \widehat{\mbp}^X(\widehat{\omvect})}}
		%\leq \sup_{y\in \cY} \norm{h_{\theta,y}}_{\cH^X_\kappa} \e{ \MMD_\kappa\paren[1]{\mbp_1^X, \widehat{\mbp}^X(\widehat{\omvect})}}\,,
	\end{equation}
	where $\widehat{\mbp}^X(\omvect) =\ \sum_{k=1}^{B}\omega_k \wh{\mbp}_k^X$ is the empirical mixture of the features and $R_{\theta}^{\cY}= \sup_{y\in \cY} \norm{h_{\theta,y}}_{\cH_X}$.	
\end{proposition}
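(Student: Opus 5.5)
The plan is to mimic the proof of Lemma~\ref{lem:excessrisk_mmd}, working only in the feature space $\cH_\cX$ and using the common conditional $\mbp^{Y|X}$ to integrate out the labels. Fix $\theta$. The constant $c_\theta$ appears with total weight $\sum_k \widehat{\omega}_k = 1$ in $\widehat{\cR}_{\wh{\omvect}}(\theta)$ and with weight $1$ in $\cR_1(\theta)$, so it cancels. It therefore suffices to compare $\sum_k \widehat{\omega}_k \frac{1}{n_k}\sum_i h_{\theta,Y_i^{(k)}}(X_i^{(k)})$ with $\ee{(X,Y)\sim\mbp_1}{h_{\theta,Y}(X)}$.

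\textbf{Step 1 (conditioning on features).} In the covariate-shift variant, the weights $\widehat{\omvect}$ are computed from the feature KMEs only. Hence they are measurable with respect to $\cF_X := \sigma(X_i^{(k)}, i,k)$. Conditionally on $\cF_X$, each $Y_i^{(k)}\sim \mbp^{Y|X=X_i^{(k)}}$, independently across points, with the same kernel for every agent. I would introduce
\[
g_\theta(x) := \int h_{\theta,y}(x)\,\mbp^{Y|X}(dy\mid x),
\]
so that $\e{\widehat{\cR}_{\wh{\omvect}}(\theta) - c_\theta\mid \cF_X} = \int g_\theta \, d\widehat{\mbp}^X(\widehat{\omvect})$ and $\cR_1(\theta) - c_\theta = \int g_\theta\, d\mbp_1^X$. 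This step uses crucially that $\mbp^{Y|X}$ is shared, so both sides involve the same $g_\theta$.

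\textbf{Step 2 (MMD bound).} Next, I would show that $g_\theta$ is controlled by an RKHS pairing. Using the reproducing property $h_{\theta,y}(x) = \inner{h_{\theta,y},\phi_\kappa(x)}_{\cH_\cX}$, I would write the difference $\int g_\theta\, d(\widehat{\mbp}^X(\widehat{\omvect}) - \mbp_1^X)$ as a pairing between the signed measure $(\widehat{\mbp}^X(\widehat{\omvect})-\mbp_1^X)\otimes \mbp^{Y|X}$ and the family $(h_{\theta,y})_y$. I would then apply Cauchy--Schwarz pointwise in $y$, bounding by $\sup_y\norm{h_{\theta,y}}_{\cH_\cX}$ times $\norm{\mu_{\widehat{\mbp}^X(\widehat{\omvect})}-\mu_{\mbp_1^X}}_{\cH_\cX}$. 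This is exactly $R^{\cY}_\theta\,\MMD(\mbp_1^X,\widehat{\mbp}^X(\widehat{\omvect}))$. Finally, taking expectations and using Jensen to bring the absolute value inside the conditional expectation would give \eqref{eq:covshift}.

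\textbf{Main obstacle.} The delicate point is Step 2. The function $g_\theta$ mixes the dependence of $h_{\theta,y}$ on $y$ with the dependence of the conditional law on $x$, so $g_\theta$ need not be of the form $\inner{h,\phi_\kappa(x)}$ for a single $h\in\cH_\cX$ of norm at most $R^{\cY}_\theta$. I would first try to show that $g_\theta\in\cH_\cX$ with $\norm{g_\theta}\le R_\theta^\cY$, for instance by viewing $g_\theta$ as a Bochner-type average of the $h_{\theta,y}$'s. If that fails, I would read the label dependence as entering only through $h_{\theta,y}$, which holds when $\mbp^{Y|X}$ does not vary with $x$ or when the loss structure permits it.

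A second subtlety concerns Jensen in Step 1. It only yields $\abs{\e{\cdot\mid\cF_X}}\le \e{\abs{\cdot}\mid\cF_X}$, which goes the wrong way for bounding $\e{\abs{\widehat{\cR}_{\wh{\omvect}}(\theta) - \cR_1(\theta)}}$ directly. I would therefore either state the bound for the label-averaged empirical risk, or add and control a separate label-noise term $\abs{\widehat{\cR}_{\wh{\omvect}}(\theta) - \e{\widehat{\cR}_{\wh{\omvect}}(\theta)\mid\cF_X}}$. I expect this reconciliation to be the main obstacle in making the statement hold exactly as written.
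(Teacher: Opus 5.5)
Your Steps 1--2 are the paper's argument. The paper sets $H_\theta(x)=\mathbb{E}[h_{\theta,Y}(x)\mid X=x]$, which is your $g_\theta$. Because the weights are built from features only, it writes $\mathbb{E}[\widehat{\mathcal{R}}_{\widehat{\omega}}(\theta)]=\mathbb{E}\langle H_\theta,\sum_k\widehat{\omega}_k\widehat{\mu}_{\mathbb{P}_k^X}\rangle$ and $\mathcal{R}_1(\theta)=\langle H_\theta,\mu_{\mathbb{P}_1^X}\rangle$, then concludes by Cauchy--Schwarz.

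Both obstacles you flag are real, and the paper does not get past either of them:
\begin{itemize}
\item Its final display only bounds $|\mathbb{E}[\widehat{\mathcal{R}}_{\widehat{\omega}}(\theta)]-\mathcal{R}_1(\theta)|$, with the absolute value outside the expectation.
\item It justifies $\|H_\theta\|_{\mathcal{H}_{\mathcal{X}}}\le R_\theta^{\mathcal{Y}}$ ``by Jensen'', which is not valid. The averaging law $\mathbb{P}^{Y\mid X=x}$ changes with $x$, so $H_\theta$ is not a Bochner average of the $h_{\theta,y}$ and need not lie in $\mathcal{H}_{\mathcal{X}}$ at all.
\end{itemize}

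In fact neither gap can be closed, so the statement is false as written.

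\emph{RKHS issue.} Take the following setup:
\begin{itemize}
\item $\mathcal{X}=\mathbb{R}$, $\kappa(x,x')=e^{-(x-x')^2/2}$, $\mathcal{Y}=\{0,1\}$.
\item $\ell_\theta(x,y)=y\,\kappa(0,x)$, so $c_\theta=0$, $h_{\theta,0}=0$, $h_{\theta,1}=\kappa(0,\cdot)$ and $R_\theta^{\mathcal{Y}}=1$.
\item Common conditional $\mathbb{P}(Y=1\mid X=x)=\mathbf{1}\{x\ge\epsilon\}$.
\item $\mathbb{P}_1^X=\delta_0$, $\mathbb{P}_2^X=\delta_\epsilon$, and deterministic weights $\widehat{\omega}=(0,1)$.
\end{itemize}
Then $\mathcal{R}_1(\theta)=0$ and $\widehat{\mathcal{R}}_{\widehat{\omega}}(\theta)=e^{-\epsilon^2/2}$ almost surely. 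The right-hand side is $\sqrt{2-2e^{-\epsilon^2/2}}\le\epsilon$. The inequality therefore fails for small $\epsilon$, even with the absolute value outside the expectation. Here $H_\theta=\mathbf{1}\{\cdot\ge\epsilon\}\,\kappa(0,\cdot)$ is discontinuous, so it is not in the Gaussian RKHS.

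\emph{Label-noise issue.} Take $\widehat{\omega}=(1,0,\dots,0)$, $\mathbb{P}_1^X=\delta_0$, $Y\sim\mathrm{Bernoulli}(1/2)$ independent of $X$, and the same loss. Then $\widehat{\mathbb{P}}^X(\widehat{\omega})=\mathbb{P}_1^X$, so the right-hand side is $0$. Yet $\mathbb{E}|\widehat{\mathcal{R}}_{\widehat{\omega}}(\theta)-\mathcal{R}_1(\theta)|=\mathbb{E}|\bar Y-\tfrac12|>0$. Since $Y\perp X$ here, this failure is independent of the first one.

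So you should not try to prove the statement exactly as written. Your fallback is the right repair once it is made precise.
\begin{itemize}
\item Assume in addition that $H_\theta\in\mathcal{H}_{\mathcal{X}}$ with $\|H_\theta\|_{\mathcal{H}_{\mathcal{X}}}\le R$. This holds with $R=R_\theta^{\mathcal{Y}}$ when $Y$ is independent of $X$, since then $H_\theta=\int h_{\theta,y}\,\mathbb{P}^Y(dy)$ is a genuine Bochner integral.
\item Under that assumption your Steps 1--2 give the pathwise bound $|\mathbb{E}[\widehat{\mathcal{R}}_{\widehat{\omega}}(\theta)\mid\mathcal{F}_X]-\mathcal{R}_1(\theta)|\le R\,\mathrm{MMD}(\mathbb{P}_1^X,\widehat{\mathbb{P}}^X(\widehat{\omega}))$. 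This yields the $|\mathbb{E}[\cdot]|$ version, which is what the paper actually proves.
\item For the $\mathbb{E}|\cdot|$ version you must add a separate label-noise term. Conditionally on $\mathcal{F}_X$ (together with any independent auxiliary randomness such as random features), the labels are independent. Conditional Jensen then gives $\mathbb{E}|\widehat{\mathcal{R}}_{\widehat{\omega}}(\theta)-\mathbb{E}[\widehat{\mathcal{R}}_{\widehat{\omega}}(\theta)\mid\mathcal{F}_X]|\le\sigma_\theta\,\mathbb{E}[(\sum_k\widehat{\omega}_k^2/n_k)^{1/2}]$, where $\sigma_\theta^2=\sup_x\mathrm{Var}(\ell_\theta(x,Y)\mid X=x)$.
\end{itemize}
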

This result justifies the choice of a kernel just defined on the features, the aggregation would then lead to an optimal aggregation of the features distribution of the agents. However, we emphasize that the guarantee \eqref{eq:covshift} is weaker than those used in the rest of the paper such as \eqref{eq:generror_to_mmd} or \eqref{eq:excessrisk_kernel}, since the control of the risk approximation holds only for a fixed $\theta$ and not uniformly over $\Theta$.

{\bf Proof of Proposition~\ref{prop:covshift}.}
Without loss of generality we can assume $c_\theta = 0$. Let $H_\theta \in \cH_X$ be defined by $H_\theta(x) = \e{ h_{\theta,Y}(x) | X=x}$, then for any $\theta \in \Theta$:
\begin{equation}\label{eq:covshift_aux1}
	\cR_1(\theta) =  \e{ H_\theta(X) } =  \inner{ H_\theta, \mu_{\mbp^X_1}}\,,
\end{equation}
where $\mbp^X_1$ is the distribution of the features of $\mbp_1$. Moreover:
\begin{align}\label{eq:covxhift_aux2}
	\e{\widehat{\cR}_{\wh{\omvect}}(\theta)} = \e[3]{\sum_{k=1}^{B} \widehat{\omega}_k \frac{1}{n_k} \sum_{i=1}^{n_k} \e[2]{h_{\theta, Y_i^{(k)}}(X_i^{(k)}) |X_i^{(k) } } } = \e[3]{\sum_{k=1}^{B} \widehat{\omega}_k \frac{1}{n_k} \sum_{i=1}^{n_k} H_\theta(X_i^{(k)})} =\e[3]{
		\inner[2]{ H_\theta, \sum_{k=1}^B\widehat{\omega}_k \widehat{\mu}_{\mbp^X_k} }_{\cH_X}}\,,
\end{align}
where $\widehat{\mu}_{\mbp^X_k}$ is the KME of the empirical distribution of the features of agent $k$. We have used for the conditioning that the weights $\widehat{\omega}$ are learned from the features $X_i^{(k)}$. Combining \eqref{eq:covshift_aux1} and \eqref{eq:covxhift_aux2} leads to the result:
\begin{equation}
	\abs{ 	\e{\widehat{\cR}_{\wh{\omvect}}(\theta)} - \cR_1(\theta) } \leq \abs[3]{ \e[2]{\inner[2]{ H_\theta, \mu_{\mbp^X_1} - \sum_{k=1}^B\widehat{\omega}_k \widehat{\mu}_{\mbp^X_k} }_{\cH_X}}} \leq R_\theta^\cY \e{ \MMD\paren[1]{\mbp_1^X, \widehat{\mbp}^X(\widehat{\omvect})}}\,,
\end{equation}
using that $\norm{H_\theta}_{\cH_X} \leq \e[2]{ \norm{h_{\theta,Y}}_{\cH_X} | X=x} \leq R_\theta^\cY$, by Jensen's inequality. \qed

\section{Universal kernel and approximation of the loss function}\label{APPsec:approx_error}

An important property of kernels is universality. A kernel defined on a space $\cZ$ is said to be universal if its associated RKHS $\cH$ is dense in the space of continuous functions with respect to the uniform norm. That is, for any continuous function $f$ on $\cZ$ and any $\varepsilon>0$, there exists a function $h \in \cH$ in the RKHS such that its distance to $f$ in the uniform norm is smaller than $\varepsilon$, $\norm{f-h}_\infty < \varepsilon$. Many kernels have been identified for different types of spaces with this property; see for example \citet{muandet2014kernel} for a review.

Our approach relies on the assumption that the loss functions $\ell_\theta$ belong to the RKHS, up to a constant term $c_\theta$ (Assumption~\ref{ass:lossinrkhs}). For a universal kernel, this assumption is expected to hold approximately: for any loss function, there exists a function $h_{\theta,\varepsilon}$ in the RKHS at distance at most $\varepsilon$. However, in order to obtain a bound on the excess risk, we must also control the RKHS norm of such a function, which, to the best of our knowledge, is not generally quantifiable. This issue is formalized in the following lemma.

\begin{lemma}\label{lem:approx}
	Let $\widehat{\theta} \in \argmin_{\theta \in \Theta} \wh{\cR}_{\wh{\omvect}}(\theta)$ for some weights $\wh{\omvect} \in \Spx_B$, then:
	\begin{equation}
		\e{\cR_{1}(\wh{\theta})} - \cR_1(\theta_1^*) \leq \min_{\varepsilon\geq0}\brac{ 2R_{\Theta,\varepsilon} \e{\MMD\paren{\widehat{\mbp}_{\widehat{\omvect}},\mbp_1}} + 4\varepsilon}
	\end{equation}
	where:
	\begin{equation}
		R_{\Theta,\varepsilon} = \sup_{\theta \in\Theta} \min_{\substack{c>0, h\in \cH :\\ \norm{ h+c -\ell_\theta}_\infty < \varepsilon} }\norm{h}_\cH\,.
	\end{equation}
\end{lemma}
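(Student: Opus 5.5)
Fix $\varepsilon\geq 0$. The plan is to repeat the argument behind \eqref{eq:approx_err_to_excess_risk} and Lemma~\ref{lem:excessrisk_mmd}, with each loss replaced by an RKHS surrogate that is $\varepsilon$-close in sup norm, and then minimize over $\varepsilon$. For every $\theta\in\Theta$, pick $c_\theta$ and $h_\theta\in\cH$ with $\norm{h_\theta + c_\theta - \ell_\theta}_\infty<\varepsilon$. Choose them so that $\norm{h_\theta}_\cH \leq R_{\Theta,\varepsilon}+\eta$ for an arbitrary slack $\eta>0$; this handles the case where the minimum in the definition of $R_{\Theta,\varepsilon}$ is not attained. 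Set $\tilde\ell_\theta = c_\theta + h_\theta$, and let $\tilde\cR_1$ and $\tilde{\cR}_{\wh\omvect}$ be the corresponding population and weighted empirical risks. Since $\widehat{\mbp}(\wh\omvect)$ and $\mbp_1$ are both probability measures (the weights lie in $\Spx_B$), integrating the sup-norm bound gives, for every $\theta$,
\[
\abs{\cR_1(\theta)-\tilde\cR_1(\theta)}\leq\varepsilon
\quad\text{and}\quad
\abs{\wh\cR_{\wh\omvect}(\theta)-\tilde\cR_{\wh\omvect}(\theta)}\leq\varepsilon .
\]

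Next decompose the excess risk in the standard way:
\[
\cR_1(\wh\theta)-\cR_1(\theta_1^*) = \big[\cR_1(\wh\theta)-\wh\cR_{\wh\omvect}(\wh\theta)\big] + \big[\wh\cR_{\wh\omvect}(\wh\theta)-\wh\cR_{\wh\omvect}(\theta_1^*)\big] + \big[\wh\cR_{\wh\omvect}(\theta_1^*)-\cR_1(\theta_1^*)\big].
\]
The middle bracket is nonpositive because $\wh\theta$ minimizes $\wh\cR_{\wh\omvect}$. The two outer brackets are each bounded by $\sup_\theta\abs{\wh\cR_{\wh\omvect}(\theta)-\cR_1(\theta)}$. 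To bound this supremum, insert the surrogate. The triangle inequality and the two $\varepsilon$ bounds above give
\[
\abs{\wh\cR_{\wh\omvect}(\theta)-\cR_1(\theta)}\leq 2\varepsilon + \abs{\tilde\cR_{\wh\omvect}(\theta)-\tilde\cR_1(\theta)}.
\]
In the last term the constant $c_\theta$ cancels, again because both measures have total mass one. What remains is $\abs{\inner{h_\theta, \sum_k\wh\omega_k\wh\mu_k-\mu_1}_\cH}$. By Cauchy--Schwarz and \eqref{eq:MMD_def}, this is at most $(R_{\Theta,\varepsilon}+\eta)\,\MMD(\wh{\mbp}(\wh\omvect),\mbp_1)$, uniformly in $\theta$.

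Combining these steps, taking expectations, and letting $\eta\to0$ yields $2R_{\Theta,\varepsilon}\e{\MMD}+4\varepsilon$. The factor $4\varepsilon$ comes from the two supremum terms, each of which contributes $2\varepsilon$. Since $\varepsilon$ is arbitrary, we can take the infimum over $\varepsilon\geq0$, which gives the stated minimum; if the minimum is not attained, the bound should be read as an infimum. The only delicate points are this non-attainment in both definitions and the requirement that the surrogate uses a single pair $(c_\theta,h_\theta)$ for each $\theta$, independent of the data. That requirement holds because the surrogate is chosen deterministically. Beyond these points I expect no real obstacle.
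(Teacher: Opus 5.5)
Your proof is correct and follows essentially the same route as the paper: the paper chains the $\varepsilon$-approximation and the Cauchy--Schwarz/MMD bound directly at $\widehat{\theta}$ and then at $\theta_1^*$ (population side, empirical side, then the minimizing property of $\widehat{\theta}$), which is exactly your three-bracket decomposition with each outer term costing $2\varepsilon + R_{\Theta,\varepsilon}\,\MMD$. Your slack $\eta$ for non-attainment and your explicit observation that the constants $c_\theta$ cancel because both measures have unit mass are useful refinements; as the paper does, you should also restrict to those $\varepsilon$ with $R_{\Theta,\varepsilon}<\infty$ (in particular $\varepsilon>0$, since with the strict inequality the constraint set is empty at $\varepsilon=0$).
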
 
If Assumption~\ref{ass:lossinrkhs} is satisfied, we recover Lemma~\ref{lem:excessrisk_mmd}. As a RKHS contained generally regular functions, the quantity $R_{\Theta,\varepsilon}$ can be interpreted as the complexity of the model relatively to the kernel.

{\bf Proof of Lemma~\ref{lem:approx}.} Let $\varepsilon\geq0$ such that $	R_{\Theta,\varepsilon} $ is finite. Then for any $\ell_\theta$ there exists $h_\theta \in \cH$ and $c_\theta >0$ such that $\norm{ h+c -\ell_\theta}_\infty < \varepsilon$. Then:
\begin{align}
	\e{\cR_{1}(\wh{\theta})} = 	\e{\ell_{\wh{\theta}}(Z)} \leq \varepsilon + c_\theta + \e{h_{\wh{\theta}}(Z)}\,.
\end{align}
As $h_{\wh{\theta}} \in \cH$, then $\e{h_{\wh{\theta}}(Z)} - \sum_{k=1}^{B} \widehat{\omega}_k \frac{1}{n_k}\sum_{i=1}^{n_k}  h_{\wh{\theta}}(Z_i^{(k)}) \leq  R_{\Theta,\varepsilon} \e{\MMD\paren{\widehat{\mbp}_{\widehat{\omvect}},\mbp_1}}$. Using again the proximity with $\ell_\theta$ and the definition of $\widehat{\theta}$, we obtain:
\begin{align*}
	\sum_{k=1}^{B} \widehat{\omega}_k \frac{1}{n_k}\sum_{i=1}^{n_k}  h_{\wh{\theta}}(Z_i^{(k)}) \leq \sum_{k=1}^{B} \widehat{\omega}_k \frac{1}{n_k}\sum_{i=1}^{n_k}  \ell_{\wh{\theta}}(Z_i^{(k)}) - c_\theta + \varepsilon &\leq  \sum_{k=1}^{B} \widehat{\omega}_k \frac{1}{n_k}\sum_{i=1}^{n_k}  \ell_{\theta^*}(Z_i^{(k)}) - c_\theta + \varepsilon \\
	&=\wh{\cR}_{\wh{\omvect}}(\theta ^*) - c_\theta + \varepsilon\,.
\end{align*}
Using the same transformation as at the beginning of the proof, but with $\theta^*$ in place of $\widehat{\theta}$, we obtain the final result.
\qed

\section{Random Fourier Features results}\label{APPsec:RFF}

\subsection{Approximation}
We begin this section by a control in probability between the MMD in the original RKHS and the MMD computed in the random fourier features RKHS.

\begin{lemma}\label{lem:approx_kme_rff}
	Let $\mbp$ and $\mbq$ two distributions on $\cZ$, $\mu_\mbp$ and $\mu_\mbq$ their respective KME in $\cH$ and $\mu_\mbp^\Gamma$ and $\mu_\mbq^\Gamma$ their respective KME in the random Fourier features RKHS $\HFF$. For any $\delta\in (0,1)$:
	\begin{equation}
		\probb[2]{\Gamma}{ \norm{\mu_\mbp - \mu_\mbq }_\cH \leq  \norm{\mu_\mbp^\Gamma - \mu_\mbq^\Gamma }_{\HFF} + C \sqrt{\frac{\log \delta^{-1}}{D}}} \geq 1 - \delta\,,
	\end{equation}
	for some constant $C>0$ depending only on a bound on the kernel.
\end{lemma}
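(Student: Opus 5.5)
The plan is to write both squared MMDs as expectations of a kernel and apply a bounded-differences concentration inequality in the random parameters $\Gamma$. First, I would express the two quantities explicitly:
\[
\norm{\mu_\mbp-\mu_\mbq}_\cH^2=\ee{Z,Z',W,W'}{\kappa(Z,Z')-2\kappa(Z,W)+\kappa(W,W')},
\]
where $Z,Z'\sim\mbp$ and $W,W'\sim\mbq$ are independent. Similarly,
\[
\norm{\mu^\Gamma_\mbp-\mu^\Gamma_\mbq}_{\HFF}^2=\frac1D\sum_{s=1}^D g(w_s,b_s),
\qquad
g(w,b)=2\Big(\ee{Z\sim\mbp}{\cos(\inner{w,Z}+b)}-\ee{W\sim\mbq}{\cos(\inner{w,W}+b)}\Big)^2 .
\]
This second identity holds because the RFF embedding is linear, so $\mu_\mbp^\Gamma=\sqrt{2/D}\,(\ee{}{\cos(\inner{w_s,Z}+b_s)})_{s}$. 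By the kernel identity of Section~\ref{sec:RFF}, namely $\ee{w,b}{2\cos(\inner{w,z}+b)\cos(\inner{w,z'}+b)}=\kappa(z,z')$, together with Fubini, we get $\ee{\Gamma}{g(w_s,b_s)}=\norm{\mu_\mbp-\mu_\mbq}_\cH^2$. So the RFF squared MMD is an unbiased average of $D$ i.i.d. terms, each bounded in $[0,8]$.

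Second, I would apply Hoeffding's inequality to these i.i.d. bounded terms. With probability at least $1-\delta$,
\[
\norm{\mu_\mbp-\mu_\mbq}_\cH^2\le \norm{\mu^\Gamma_\mbp-\mu^\Gamma_\mbq}_{\HFF}^2+8\sqrt{\frac{\log\delta^{-1}}{2D}}.
\]

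Third, I need to pass from squared norms to norms, and this is the main obstacle. Taking square roots directly gives $\sqrt{a^2+\epsilon}\le a+\sqrt\epsilon$, which only yields a rate $D^{-1/4}$, not the claimed $D^{-1/2}$. To get the right rate I would instead work directly with the norm. The map $\Gamma\mapsto \norm{\mu^\Gamma_\mbp-\mu^\Gamma_\mbq}_{\HFF}$ is a norm of a vector whose $D$ coordinates each depend on a single $(w_s,b_s)$. Changing one pair $(w_s,b_s)$ changes one coordinate by at most $2\sqrt{2/D}\cdot 2\cdot\frac12$, i.e. $O(D^{-1/2})$ in absolute value. By the triangle inequality, the norm therefore changes by at most $c/\sqrt D$. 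McDiarmid's inequality then gives, with probability at least $1-\delta$,
\[
\norm{\mu^\Gamma_\mbp-\mu^\Gamma_\mbq}\ge \ee{\Gamma}{\norm{\mu^\Gamma_\mbp-\mu^\Gamma_\mbq}}-c\sqrt{\frac{\log\delta^{-1}}{D}},
\]
since the sum of squared bounded differences is $D\cdot c^2/D=c^2$.

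It then remains to lower bound $\ee{\Gamma}{\norm{\cdot}}$ by $\norm{\mu_\mbp-\mu_\mbq}_\cH-c'/\sqrt D$. Writing $X=\norm{\mu^\Gamma_\mbp-\mu^\Gamma_\mbq}$ and $m=\norm{\mu_\mbp-\mu_\mbq}_\cH$, we have $\ee{}{X^2}=m^2$ by the first step. By Efron--Stein, or the bounded differences directly, $\mathrm{Var}(X)\le c^2/D$ (up to a constant). Hence
\[
(\ee{}{X})^2=\ee{}{X^2}-\mathrm{Var}(X)\ge m^2-c^2/D,
\]
so $\ee{}{X}\ge \sqrt{(m^2-c^2/D)_+}\ge m-c/\sqrt D$. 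Combining this with the McDiarmid bound and absorbing constants, using $\log\delta^{-1}\ge$ some constant or adding $\sqrt{1/D}$ terms into $C\sqrt{\log\delta^{-1}/D}$ after adjusting $C$ for $\delta$ bounded away from $1$ (and treating $\delta$ near $1$ trivially by enlarging $C$), gives the claim. The constant $C$ depends only on the kernel bound, through $\kappa(0)$, which enters the variance $\ee{}{g}$ and the factor $\sqrt2$ in the features.
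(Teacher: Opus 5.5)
\textbf{There is a genuine gap: your third step, on which the $D^{-1/2}$ rate depends, only yields bounds that do not decrease with $D$.}

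Your first two steps are correct. The RFF squared MMD is an average of $D$ i.i.d.\ terms $g(w_s,b_s)\in[0,8]$ with mean $m^2$, where $m=\|\mu_\mbp-\mu_\mbq\|_\cH$. You are also right that Hoeffding on these squares only gives a $D^{-1/4}$ rate.

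\emph{McDiarmid step.} Changing one pair $(w_s,b_s)$ moves $X=\|\mu^\Gamma_\mbp-\mu^\Gamma_\mbq\|_{\HFF}$ by up to $c_s=c/\sqrt D$. This worst case is genuinely of that order, e.g.\ when the remaining coordinates are near $0$. Hence $\sum_{s=1}^D c_s^2=c^2$, and McDiarmid gives $\mathbb{P}(X\le\mathbb{E}X-t)\le e^{-2t^2/c^2}$. That is a deviation $c\sqrt{\log(\delta^{-1})/2}$ that does not shrink with $D$. The factor $1/\sqrt D$ in your display is exactly what the summation over the $D$ coordinates cancelled.

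\emph{Efron--Stein step.} The same slip occurs here: you get $\mathrm{Var}(X)\le\frac12\sum_s c_s^2=c^2/2$, not $c^2/D$.

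\emph{Small $\delta$-issue.} Separately, an additive $c'/\sqrt D$ coming from a bound on $\mathbb{E}X$ cannot be absorbed into $C\sqrt{\log(\delta^{-1})/D}$ for $\delta$ close to $1$ by enlarging $C$, since that term tends to $0$.

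\textbf{The missing idea, which is what the paper uses, is that the squared terms are self-bounding.} Since $0\le g\le 8$, you have $\mathrm{Var}(g(w_s,b_s))\le\mathbb{E}g^2\le 8\,\mathbb{E}g=8m^2$. Use Bernstein instead of Hoeffding. Then, with probability at least $1-\delta$ and $L=\log\delta^{-1}$,
\[
m^2\le X^2+Cm\sqrt{L/D}+CL/D .
\]
The fluctuation now scales with $m$ rather than being a constant. Completing the square gives $(m-\tfrac C2\sqrt{L/D})^2\le X^2+C'L/D$, hence $m\le X+C''\sqrt{L/D}$ for every $\delta\in(0,1)$, with no separate argument needed for $\delta$ near $1$.

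\textbf{Your norm-level strategy can be salvaged for $\delta$ bounded away from $1$, but only with different tools.}
\begin{itemize}
\item For the deviation, use Talagrand's concentration inequality for convex Lipschitz functions of independent coordinates, each confined to an interval of length $O(D^{-1/2})$. This gives the $\sqrt{L/D}$ deviation.
\item For the mean, use $\mathbb{E}(X-m)^2\le\mathrm{Var}(X^2)/m^2\le 8/D$, which gives $\mathbb{E}X\ge m-\sqrt{8/D}$. This again rests on the same self-bounding variance estimate.
\end{itemize}
So the Bernstein-plus-inversion route is the direct one.
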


\begin{proof}
	For the rest of the proof $X,X'$ (resp. $Y,Y'$) will denote independent random variables of distribution $\mbp$ (resp.~$\mbq$) and $\phi_\Gamma(z) = (D^{-\nicefrac{1}{2}}\phi(\gamma_i,z))$ will be the random Fourier features map. We recall that:
	\begin{equation}
		\norm{\mu_\mbp - \mu_\mbq }_\cH^2 = \e{ \kappa(X,X')} -2 \e{\kappa(X,Y)} + \e{\kappa(Y,Y')}\,,
	\end{equation}
	and:
	\begin{equation*}
		\norm{\mu_\mbp^\Gamma - \mu_\mbq^\Gamma }_{\HFF}^2 = \frac{1}{D}\sum_{i=1}^{D} \paren[2]{ \e[1]{\phi(\gamma_i,X) | \gamma_i} - \e[1]{\phi(\gamma_i,Y) | \gamma_i}}^2\,.
	\end{equation*}
	Let us first compute the expectation over the random Fourier features $\Gamma$ of the KME distance in $\HFF$. As the features $\gamma_i$ are i.i.d., we get
	\begin{align*}
		\ee{\gamma_1}{\norm{\mu_\mbp^\Gamma - \mu_\mbq^\Gamma }_{\HFF}^2} & = \ee[2]{\Gamma}{ \paren[2]{\e[1]{\phi(\gamma_1,X) | \gamma_1} - \e[1]{\phi(\gamma_1,Y) | \gamma_1}}^2}\\
		&= \ee[2]{\gamma_1}{ \e[1]{\phi(\gamma_1,X)\phi(\gamma_1,X') | \gamma_1} -2 \e[1]{\phi(\gamma_1,X)\phi(\gamma_1,Y) | \gamma_1} + \e[1]{\phi(\gamma_1,Y)\phi(\gamma_1,Y') | \gamma_1}} \\
		&= \ee[2]{\gamma_1}{ \phi(\gamma_1,X)\phi(\gamma_1,X') -2 \phi(\gamma_1,X)\phi(\gamma_1,Y)  + \phi(\gamma_1,Y)\phi(\gamma_1,Y')}\,, 
	\end{align*}
	after developing the square. Using that for any $z,z'\in \cZ$, $\ee{\gamma_1}{ \phi(\gamma_1,z)\phi(\gamma_1,z')} = \kappa(z,z')$, we obtain:
	\begin{equation*}
		\ee{\gamma_1}{\norm{\mu_\mbp^\Gamma - \mu_\mbq^\Gamma }_{\HFF}^2} = \e{ \kappa(X,X')} -2 \e{\kappa(X,Y)} + \e{\kappa(Y,Y')} = \norm{\mu_\mbp - \mu_\mbq }_\cH^2\,.
	\end{equation*}
	Applying Bernstein's inequality to the i.i.d. random variables $Z_i = \paren[2]{ \e[1]{\phi(\gamma_i,X) | \gamma_i} - \e[1]{\phi(\gamma_i,Y) | \gamma_i}}^2 -    \norm{\mu_\mbp - \mu_\mbq }_\cH^2$, we get that, with probability at least $1-\delta$:
	\begin{equation}\label{eq:aux1_RFFapprox}
		\norm{\mu_\mbp - \mu_\mbq }_\cH^2 \leq  \norm{\mu_\mbp^\Gamma - \mu_\mbq^\Gamma }_{\HFF}^2 + C \sqrt{\frac{\log \delta^{-1}}{D}}\norm{\mu_\mbp - \mu_\mbq }_\cH + C\frac{\log \delta^{-1}}{D}\,,
	\end{equation}
	for some absolute constant $C$. We have used that the random variables $Z_i$ are upper bounded by a constant as $\phi$ and the kernel $\kappa$ are supposed bounded and that $\var{Z_i} \leq \e{Z_i^2} \leq C \e{Z_i}$. After inverting \eqref{eq:aux1_RFFapprox}, we get 
	\begin{equation}
		\norm{\mu_\mbp - \mu_\mbq }_\cH \leq  \norm{\mu_\mbp^\Gamma - \mu_\mbq^\Gamma }_{\HFF} + C \sqrt{\frac{\log \delta^{-1}}{D}}\,,
	\end{equation}
	which concludes the proof.
\end{proof}

\begin{lemma}\label{lem:approx_mmd_rff}
	Let $\mbq$, $\mbp_1, \ldots, \mbp_B$ some distributions on $\cZ$, $\mu_\mbq^\Gamma$, $\mu_{\mbp_k}^\Gamma$ their respective KME in the random Fourier features space $\HFF$ of dimension $D$ for the kernel $\kappa$. Let $\omvect^\Gamma$ some weights in the Simplex $\Spx_B$ depending on the features $\Gamma$. Then:
	\begin{equation}
		\ee[3]{\Gamma}{ \MMD^2_\kappa \paren[2]{ \sum_{k=1}^{B} \omega_k^\Gamma \mbp_k,\mbq}} \leq \ee[3]{\Gamma}{ \norm[2]{ \sum_{k=1}^{B} \omega_k^\Gamma \mu_{\mbp_k}^\Gamma - \mu_\mbq^\Gamma}_{\HFF}^2 }+ C\sqrt{\frac{\log B}{D}}\,.
	\end{equation}
\end{lemma}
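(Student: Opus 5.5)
The key difficulty is that the weights $\omvect^\Gamma$ depend on $\Gamma$, so Lemma~\ref{lem:approx_kme_rff} cannot be applied directly to the mixture $\sum_k \omega_k^\Gamma \mbp_k$. I plan to get around this by exploiting linearity. Both sides of the inequality are quadratic forms in $\omvect^\Gamma$, once we write the mixture minus $\mbq$ as $\sum_k \omega_k^\Gamma(\mbp_k - \mbq)$, using $\sum_k \omega_k^\Gamma = 1$. Set $a_k := \mu_{\mbp_k} - \mu_\mbq \in \cH$ and $a_k^\Gamma := \mu^\Gamma_{\mbp_k} - \mu^\Gamma_\mbq \in \HFF$. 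Then
\[
\MMD^2_\kappa\Big(\sum_k \omega_k^\Gamma \mbp_k,\mbq\Big) - \Big\|\sum_k \omega^\Gamma_k a_k^\Gamma\Big\|^2_{\HFF} = \sum_{k,\ell} \omega_k^\Gamma\omega_\ell^\Gamma \big(G_{k\ell} - G^\Gamma_{k\ell}\big),
\]
where $G_{k\ell} = \inner{a_k,a_\ell}_\cH$ and $G^\Gamma_{k\ell} = \inner{a_k^\Gamma,a_\ell^\Gamma}_{\HFF}$. Since $\omvect^\Gamma$ lies in the simplex, this difference is bounded by $\max_{k,\ell} |G_{k\ell} - G^\Gamma_{k\ell}|$, uniformly in the data-dependent weights. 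So it suffices to control $\e{\max_{k,\ell}|G_{k\ell}-G^\Gamma_{k\ell}|}$.

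Each entry is an average of i.i.d.\ terms. Indeed, $G^\Gamma_{k\ell} = \frac1D\sum_{s=1}^D g_s^{(k)} g_s^{(\ell)}$, where $g_s^{(k)} = \e{\phi(\gamma_s,X_k)|\gamma_s} - \e{\phi(\gamma_s,Y)|\gamma_s}$ with $X_k\sim\mbp_k$ and $Y\sim\mbq$. The summands are bounded by an absolute constant, since the features are bounded by $\sqrt2$. The same computation as in the proof of Lemma~\ref{lem:approx_kme_rff}, applied to the cross term via polarization or directly, shows that their expectation equals $G_{k\ell}$. Hoeffding's inequality then gives $|G_{k\ell}-G^\Gamma_{k\ell}| \le C\sqrt{\log(\delta^{-1})/D}$ with probability at least $1-\delta$ for each pair.

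A union bound over the $B^2$ pairs, with $\delta$ replaced by $\delta/B^2$, gives a deviation of order $\sqrt{\log(B^2/\delta)/D}$. I then integrate the tail, or equivalently use the standard maximal inequality for sub-Gaussian variables, to bound $\e{\max_{k,\ell}|G_{k\ell}-G^\Gamma_{k\ell}|} \le C\sqrt{\log B/D}$. Since the entries are bounded, this holds for $B\ge2$ after adjusting the constant. Taking expectations in the first display yields the claim.

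The main obstacle is the dependence of $\omvect^\Gamma$ on $\Gamma$, which is handled by the uniform bound over the simplex. I use the linear, Gram-matrix formulation rather than Lemma~\ref{lem:approx_kme_rff} itself because the square-root form there would not combine over pairs as cleanly. One point to check is that the randomness in the lemma is only over $\Gamma$, with the distributions fixed. When the $\mbp_k$ are empirical distributions, as in the later application, I will apply the argument conditionally on the data and then take the outer expectation.
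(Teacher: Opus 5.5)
Your proposal is correct and follows essentially the same route as the paper: write both squared norms as a simplex-weighted double sum of Gram entries, then control all entries simultaneously by Hoeffding plus a union bound, which makes the dependence of the weights on $\Gamma$ harmless, and integrate the tail. The differences are cosmetic. You use the Gram matrix of the differences $\mu_{\mbp_k}-\mu_{\mbq}$ rather than of the raw embeddings, which makes explicit the two-sided deviation bound that the paper's one-sided inequality implicitly needs. The case $B=1$, which you set aside, follows directly from your unbiasedness identity $\mathbb{E}_\Gamma[G^\Gamma_{11}]=G_{11}$, since the weight is then deterministic.
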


\begin{proof}
	We reuse the notations of the proof of Lemma~\ref{lem:approx_kme_rff}.
	Let us first remark that with probability at least $1-\delta$, for any distributions $\mbp,\mbq$:
	\begin{equation}
		\inner{ \mu_\mbq^\Gamma,\mu_\mbp^\Gamma}_{\HFF} = \frac{1}{D} \sum_{i=1}^{D} \ee{X\sim \mbp,Y \sim \mbq}{\phi(\gamma_i,X)\phi(\gamma_i,Y)}  \geq \ee{X\sim \mbq, Y \sim \mbq}{\kappa(X,Y)} - C \sqrt{\frac{\log \delta^{-1}}{D}}\,.
	\end{equation}
	This is obtained by remarking that $\ee{\Gamma}{\ee{X\sim \mbp,Y \sim \mbq}{\phi(\gamma_i,X)\phi(\gamma_i,Y)}} = \ee{X\sim \mbp, Y \sim \mbq}{\kappa(X,Y)} = \inner{\mu_\mbp,\mu_\mbq}$ and are upper bounded by construction of the random Fourier features (see Section~\ref{sec:RFF}) and by applying Hoeffding's inequality. It follows, after an union bound over all pair of distributions, with probability at least $1-\delta$:
	\begin{align*}
		&\norm[2]{ \sum_{k=1}^{B} \omega_k^\Gamma \mu_{\mbp_k}^\Gamma - \mu_\mbq^\Gamma}_{\HFF}^2 = \sum_{k, \ell =1}^{B} \omega_k^\Gamma \omega _\ell ^\Gamma \inner{ \mu_{\mbp_k}^\Gamma - \mu_\mbq^\Gamma,  \mu_{\mbp_\ell}^\Gamma - \mu_\mbq^\Gamma }_{\HFF} \\
		&= \sum_{k, \ell =1}^{B} \omega_k^\Gamma \omega _\ell ^\Gamma \paren{\inner{ \mu_{\mbp_k}^\Gamma,\mu_{\mbp_\ell}^\Gamma}_{\HFF}  -\inner{ \mu_{\mbp_k}^\Gamma, \mu_\mbq^\Gamma}_{\HFF} -\inner{ \mu_{\mbp_\ell}^\Gamma, \mu_\mbq^\Gamma}_{\HFF} +\inner{ \mu_{\mbq}^\Gamma, \mu_\mbq^\Gamma}_{\HFF} }\\
		& \geq \sum_{k, \ell =1}^{B} \omega_k^\Gamma \omega _\ell ^\Gamma \paren{\inner{ \mu_{\mbp_k},\mu_{\mbp_\ell}}_{\cH}  -\inner{ \mu_{\mbp_k}, \mu_\mbq}_{\cH} -\inner{ \mu_{\mbp_\ell}, \mu_\mbq}_{\cH} +\inner{ \mu_{\mbq}, \mu_\mbq}_{\cH} }  - C \sqrt{\frac{\log ((B+1) \delta^{-1})}{D}}\\
		&= \MMD^2_\kappa \paren[2]{ \sum_{k=1}^{B} \omega_k^\Gamma \mbp_k,\mbq}  - C \sqrt{\frac{\log ((B+1) \delta^{-1})}{D}}\,.
	\end{align*}
	As the inequality is satisfied with probability $\delta$ for any $\delta>0$, we can integrate and obtain that for some constant $C$:
	\begin{equation*}
		\ee[3]{\Gamma}{ \MMD^2_\kappa \paren[2]{ \sum_{k=1}^{B} \omega_k^\Gamma \mbp_k,\mbq}} \leq \ee[3]{\Gamma}{ \norm[2]{ \sum_{k=1}^{B} \omega_k^\Gamma \mu_{\mbp_k}^\Gamma - \mu_\mbq^\Gamma}_{\HFF}^2 }+ C\sqrt{\frac{\log B}{D}}\,.
	\end{equation*}
	A careful reader can notice that this inequality is an equality when $B=1$, which allows to upper bound $\log(B+1)$ by $C\log B$ for $B\geq 2$ and some absolute constant $C>0$.
\end{proof}

\subsection{Covariance operators}\label{APPssec:covop}
\begin{definition}[Covariance operator]\label{def:covop}
	Let $\mbq$ a squared integrable distribution in an Hilbert space $\cH$. The covariance operator of $\mbq$ is defined as:
	\begin{equation}
		\begin{cases}
			\cH & \rightarrow \cH \\
			v & \mapsto  \ee[1]{\Phi\sim \mbq}{ \inner{v,\Phi}_\cH \Phi} - \inner[1]{ \ee{\Phi \sim \mbq}{\Phi},v}_\cH\ee{\Phi \sim \mbq}{\Phi} =  \ee[1]{\Phi\sim \mbq}{ \inner{v,\Phi-\ee{\Phi \sim \mbq}{\Phi}}_\cH \Phi} \\
		\end{cases}
	\end{equation}
	By abuse of language, we refer to the \textbf{covariance of a distribution $\mbp$ on $\cZ$} as the covariance operator of the pushforward $\phi_\kappa(\mbp)$ of $\mbp$ into the RKHS $\cH$, that is, of the distribution $\mbq$ on $\cH$ where $\Phi = \phi_\kappa(Z) \sim \mbq$.
\end{definition}
For sake of clarity, we recall briefly below the definition of the trace and operator norm of a Hilbert-Schmidt operator. which aligns with the definition in finite dimension.

\begin{definition}[Trace and operator norm]\label{def:trace_opnorm}
	Let $\Sigma : \cH \to \cH$ be a trace class \batodo{trace class? Why not simply operator over...} operator over a separable Hilbert space $(\cH,\inner{\cdot,\cdot}_\cH)$. Then:
	\begin{equation}
		\norm{\Sigma}_{op} := \sup_{h: \norm{h}_\cH =1} \norm{\Sigma h }_\cH\,, \quad \text{and} \quad \tr \Sigma := \sum_{k=1}^{\infty} \inner{e_k,\Sigma e_k}_\cH\,,
	\end{equation}
	where $(e_k)_k$ is a countable orthonormal basis.
\end{definition}

\begin{lemma}\label{lem:trace}
	Let $ \Sigma, S$ the respective covariance operators of centered distribution $\mbp$ and $\mbq$. Then $\tr \Sigma^2:= \ee{Z,Z' \sim \mbp}{ \inner{Z,Z'}^2}$ and $\tr \Sigma S = \ee{Z \sim \mbp, Z'\sim \mbq}{ \inner{Z,Z'}^2}$.
\end{lemma}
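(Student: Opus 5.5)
The plan is to write both covariance operators as expectations of rank-one operators and compute the trace in an orthonormal basis, using Parseval to recombine the sum. Since $\mbp$ and $\mbq$ are centered, Definition~\ref{def:covop} gives, for every $v\in\cH$,
\[
\Sigma v = \ee{Z\sim\mbp}{\inner{v,Z}_\cH Z}, \qquad S v = \ee{Z'\sim\mbq}{\inner{v,Z'}_\cH Z'} .
\]
Both are positive and trace class because $\tr\Sigma=\e{\norm{Z}^2_\cH}<\infty$ and $\tr S=\e{\norm{Z'}_\cH^2}<\infty$ by square integrability. Hence $\Sigma S$ is trace class, and its trace, computed as in Definition~\ref{def:trace_opnorm}, does not depend on the chosen basis. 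It suffices to prove the formula for $\tr \Sigma S$ with $Z\sim\mbp$ and $Z'\sim\mbq$ independent. The formula for $\tr\Sigma^2$ is the special case $\mbq=\mbp$, $S=\Sigma$, with $Z'$ an independent copy of $Z$.

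Fix an orthonormal basis $(e_k)_k$ of $\cH$. For each $k$, apply the formula for $S$ and then the one for $\Sigma$:
\[
\inner{e_k,\Sigma S e_k}_\cH
= \ee{Z}{\inner{Z,Se_k}_\cH\inner{e_k,Z}_\cH}
= \ee{Z,Z'}{\inner{e_k,Z}_\cH\,\inner{Z,Z'}_\cH\,\inner{Z',e_k}_\cH}.
\]
The second equality uses $\inner{Z,Se_k}=\ee{Z'}{\inner{e_k,Z'}\inner{Z,Z'}}$ together with independence, so that the iterated expectation is the joint one. Summing over $k$ and exchanging sum and expectation, Parseval's identity $\sum_k\inner{e_k,Z}\inner{Z',e_k}=\inner{Z,Z'}$ gives
\[
\tr\Sigma S=\e{\inner{Z,Z'}_\cH^2}.
\]

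The only delicate step is justifying the interchange of $\sum_k$ and $\mathbb{E}$ (Fubini). I will dominate the partial sums using Cauchy--Schwarz in $\ell^2$: $\sum_k\abs{\inner{e_k,Z}\inner{Z',e_k}}\le\norm{Z}_\cH\norm{Z'}_\cH$. Multiplying by $\abs{\inner{Z,Z'}}\le\norm{Z}\norm{Z'}$ gives the integrable dominating function $\norm{Z}^2\norm{Z'}^2$, whose expectation factorizes by independence into $\e{\norm{Z}^2}\,\e{\norm{Z'}^2}<\infty$. This same bound also justifies the first interchange of expectations above.
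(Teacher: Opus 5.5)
Your proof is correct and follows essentially the same route as the paper: expand $\tr \Sigma S$ in an orthonormal basis $(e_k)_k$, substitute the rank-one expectation formulas for the centered covariance operators, exchange the sum and the expectation, and recombine with Parseval's identity, with $\tr \Sigma^2$ obtained as the special case $S=\Sigma$. The differences are minor: you compose the operators directly where the paper writes $\inner{\Sigma e_k, S e_k}_\cH$, and you supply the domination bound $\norm{Z}_\cH^2\norm{Z'}_\cH^2$ that justifies the Fubini step, which the paper leaves implicit.
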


\begin{proof}
	Let us prove the two statements simultaneously. 
	As $\Sigma$ and $S$ are Hilbert Schmidt operators, for $(e_k)_{k \geq 1}$ an orthonormal basis of $\cH$:
	\begin{equation}
		\tr \Sigma S = \sum_{k=1}^{\infty} \inner{  e_k, \Sigma Se_k}_\cH = \sum_{k=1}^\infty \inner{\Sigma e_k,S e_k}_\cH
		= \ee{Z\sim \mbp, Z'\sim \mbq}{ \sum_{k=1}^{\infty} \inner{Z,e_k}\inner{Z',e_k} \inner{Z,Z'}}\,,	
	\end{equation}
	using the linearity of the expectation. Then using again Parseval's identity, we get:
	\begin{equation*}
		\tr \Sigma S = \ee{Z\sim \mbp, Z'\sim \mbq}{\inner{Z,Z'}^2}.
	\end{equation*}
	Taking $S = \Sigma$ gives the second statement.
\end{proof}

\begin{lemma}{Trace of covariance operator of RFF}\label{lem:trace_RFF}
	Let $\Sigma^\Gamma$ the covariance operator of the random Fourier features $\phi^\Gamma(Z)\in \HFF$ for $Z\sim \mbp$. Then 
	\begin{equation}
		\ee{\Gamma}{ \tr \Sigma^\Gamma} = \tr \Sigma\,,
	\end{equation} 
	where $\Sigma$ is the covariance operator of the mapping $\phi_\kappa(Z)=\kappa(Z,\cdot)$ in the original RKHS $\cH$.
\end{lemma}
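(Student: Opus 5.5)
We need to prove $\ee{\Gamma}{\tr \Sigma^\Gamma} = \tr \Sigma$. The plan is to write both traces as the same kernel expressions, one of them only in expectation over $\Gamma$, and then use the unbiasedness of random Fourier features.

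\textbf{Step 1: express $\tr\Sigma$ through the kernel.} By Definition~\ref{def:covop} and Definition~\ref{def:trace_opnorm}, for any square-integrable distribution with mean $m$, summing $\inner{e_k,\Sigma e_k}$ over an orthonormal basis and using Parseval gives
$\tr \Sigma = \e{\norm{\Phi - m}^2}$,
in both $\cH$ and $\HFF$. The interchange of sum and expectation is justified by monotone convergence, since all terms are nonnegative. In $\cH$, with $\Phi = \phi_\kappa(Z)$ and $m=\mu_\mbp$, we get
\[
\tr \Sigma = \e{\kappa(Z,Z)} - \norm{\mu_\mbp}_\cH^2 = \e{\kappa(Z,Z)} - \e{\kappa(Z,Z')},
\]
where $Z,Z'$ are i.i.d.\ with law $\mbp$. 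This is the same expansion already used in the proof of Lemma~\ref{lem:approx_kme_rff}.

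\textbf{Step 2: do the same in $\HFF$ for fixed $\Gamma$.} Conditionally on $\Gamma$,
\[
\tr\Sigma^\Gamma = \e{\inner{\phi_\Gamma(Z),\phi_\Gamma(Z)}_{\mbr^D}\mid\Gamma} - \e{\inner{\phi_\Gamma(Z),\phi_\Gamma(Z')}_{\mbr^D}\mid\Gamma},
\]
since $\norm{\ee{Z}{\phi_\Gamma(Z)}}^2 = \e{\inner{\phi_\Gamma(Z),\phi_\Gamma(Z')}\mid\Gamma}$ by independence of $Z$ and $Z'$.

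\textbf{Step 3: take the expectation over $\Gamma$.} The features are bounded ($\norm{\phi_\Gamma(z)}^2\le 2$), so Fubini applies and we may swap $\ee{\Gamma}{\cdot}$ with the expectation over $(Z,Z')$. Since $\Gamma$ is independent of the data, the RFF identity of Section~\ref{sec:RFF},
\[
\ee{\Gamma}{\inner{\phi_\Gamma(z),\phi_\Gamma(z')}} = \kappa(z,z'),
\]
can be applied pointwise, both at $(z,z)$ and at $(z,z')$. This yields
\[
\ee{\Gamma}{\tr\Sigma^\Gamma} = \e{\kappa(Z,Z)} - \e{\kappa(Z,Z')} = \tr\Sigma.
\]

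The main technical point is only the justification of the interchanges, namely the basis sum with the expectation and the expectation over $\Gamma$ with the expectation over the data. Both are handled by nonnegativity (monotone convergence) and boundedness of the kernel and features (Fubini). One should not square the expectation inside $\Gamma$ first. The cross term $\norm{\mu^\Gamma}^2$ must be written as a double integral over the independent pair $(Z,Z')$ before averaging over $\Gamma$, which is what makes the identity exact.
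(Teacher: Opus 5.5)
Your proof is correct and follows essentially the same route as the paper. The paper writes $\tr \Sigma = \tfrac{1}{2}\,\mathbb{E}_{Z,Z'}\|\phi(Z)-\phi(Z')\|^2$, expands the square, swaps $\mathbb{E}_\Gamma$ with the data expectation and applies the RFF unbiasedness pointwise, which is your $\mathbb{E}\,\kappa(Z,Z)-\mathbb{E}\,\kappa(Z,Z')$ computation written in symmetrized form; your explicit justification of the interchanges (monotone convergence for the basis sum, Fubini via bounded features) is a detail the paper leaves implicit.
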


\begin{proof}
	Let us just use that for any covariance operator $\Sigma$ of a distribution $\mbp$, $\tr \Sigma = \ee{Z,Z'\sim \mbp}{\norm{Z-Z'}^2}/2$. It follows:
	\begin{align*}
		2\ee{\Gamma}{ \tr \Sigma^\Gamma} &= \ee{\Gamma}{\ee{Z,Z'\sim \mbp}{ \norm{\phi^\Gamma(Z) - \phi^{\Gamma}(Z')}^2_{\HFF}}} \\
		&=  \ee{Z,Z'\sim \mbp}{ \ee{\Gamma}{ \norm{\phi^\Gamma(Z)}^2_{\HFF} -2\inner{ \phi^\Gamma(Z),\phi^\Gamma(Z')}_{\HFF} + \norm{ \phi^{\Gamma}(Z')}^2_{\HFF}}}\\
		&= \ee{Z,Z'\sim \mbp}{ \kappa(Z,Z) - 2 \kappa(Z,Z')+ \kappa(Z',Z')} = 2\ee{Z,Z'\sim \mbp}{ \norm{ \kappa(Z,\cdot) - \kappa(Z',\cdot)}^2_\cH} = 2\tr \Sigma\,,
	\end{align*}
	which concludes the proof.
\end{proof}

\begin{lemma}[Expectation of operator norm]\label{lem:expect_opnorm}
	Let $\phi_1,\ldots, \phi_D$ i.i.d. centered random vectors in an Hilbert space $ \cH$. Let 
	\begin{equation}
		\widehat{\Sigma}(\cdot) = \frac{1}{D} \sum_{i=1}^{D} \inner{\cdot,\phi_i}_\cH\phi_i,
	\end{equation} 
	be the empirical covariance operator. Then:
	\begin{equation*}
		\norm{\Sigma}_{op} \leq	\e{ \norm[1]{\widehat{\Sigma}}_{op}} \leq \norm{\Sigma}_{op}  + \sqrt{\frac{ \ee[1]{\phi \sim \mbp}{ \norm{\phi}^4_\cH}}{D}}
	\end{equation*}
\end{lemma}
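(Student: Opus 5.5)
The lower bound follows from convexity of the operator norm and Jensen's inequality; the upper bound from a Hilbert--Schmidt comparison with a direct second-moment computation. Here $\Sigma(\cdot)=\e{\inner{\cdot,\phi}_\cH\phi}$ denotes the (centered) covariance operator, so that $\e{\widehat{\Sigma}}=\Sigma$.

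\textbf{Lower bound.} Since $\Sigma\mapsto\norm{\Sigma}_{op}$ is convex, Jensen's inequality gives
\[
\norm{\Sigma}_{op}=\norm{\e{\widehat{\Sigma}}}_{op}\leq \e{\norm{\widehat{\Sigma}}_{op}}.
\]
To make this rigorous, fix $u,v$ in the unit ball. Then $\inner{u,\Sigma v}_\cH=\e{\inner{u,\widehat{\Sigma}v}_\cH}\leq \e{\norm{\widehat{\Sigma}}_{op}}$, and taking the supremum over $u,v$ yields the claim.

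\textbf{Upper bound.} By the triangle inequality, $\norm{\widehat{\Sigma}}_{op}\leq \norm{\Sigma}_{op}+\norm{\widehat{\Sigma}-\Sigma}_{op}$. Bound the operator norm by the Hilbert--Schmidt norm, $\norm{\cdot}_{op}\leq\norm{\cdot}_{HS}$, and apply Jensen (Cauchy--Schwarz) to get
\[
\e{\norm{\widehat{\Sigma}-\Sigma}_{op}}\leq \paren{\e{\norm{\widehat{\Sigma}-\Sigma}_{HS}^2}}^{1/2}.
\]
Now view the rank-one operators $T_i=\inner{\cdot,\phi_i}_\cH\phi_i$ as i.i.d.\ elements of the Hilbert space of Hilbert--Schmidt operators, with mean $\Sigma$. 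Then $\widehat{\Sigma}-\Sigma=\frac1D\sum_i (T_i-\Sigma)$ is an average of i.i.d.\ centered Hilbert-valued variables. The cross terms vanish, so
\[
\e{\norm{\widehat{\Sigma}-\Sigma}_{HS}^2}=\frac1D\,\e{\norm{T_1-\Sigma}_{HS}^2}\leq\frac1D\,\e{\norm{T_1}_{HS}^2}=\frac{\e{\norm{\phi}^4_\cH}}{D},
\]
using $\norm{T_1}_{HS}=\norm{\phi_1}_\cH^2$ for a rank-one operator. Taking square roots gives the stated bound.

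The only technical point is justifying the Hilbert-space-valued variance identity. This requires $\e{\norm{\phi}^4_\cH}<\infty$, which we may assume since otherwise the bound is trivial. Under that assumption $T_i$ is Bochner integrable in the Hilbert--Schmidt class with $\e{T_i}=\Sigma$, so orthogonality of the centered independent terms in $HS$ follows exactly as in the finite-dimensional case. One could also write this out via Lemma~\ref{lem:trace}-style computations, using $\norm{A}_{HS}^2=\tr A^*A$.
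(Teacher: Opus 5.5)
Your proof is correct and follows essentially the same route as the paper: triangle inequality, bounding $\|\widehat{\Sigma}-\Sigma\|_{op}$ by the Hilbert--Schmidt norm, showing $\mathbb{E}\|\widehat{\Sigma}-\Sigma\|_{HS}^2 \le \mathbb{E}\|\phi\|_\cH^4/D$, then Jensen. The paper obtains that second moment by expanding the trace explicitly with Lemma~\ref{lem:trace} rather than through the Hilbert-valued variance identity, and leaves the lower bound implicit, whereas you prove it explicitly.
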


\begin{proof}
	Using triangle inequality, we have the following inequality:
	\begin{equation}
		\norm[1]{\widehat{\Sigma}_{op}} \leq \norm{\Sigma}_{op} + \norm{\widehat{\Sigma} - \Sigma}_{op} \leq  \norm{\Sigma}_{op} + \sqrt{\tr \paren[1]{\widehat{\Sigma} - \Sigma}^2 }.
	\end{equation}
	Let us now bound the trace of the square of the operators. Let $\widehat{\mbp} = \frac{1}{D} \sum_{i=1}^{D} \delta_{\phi_i}$ the empirical distribution. Then $\widehat{\Sigma}$ is the covariance operator of $\widehat{\mbp}$. It follows using Lemma~\ref{lem:trace}:
	\begin{align*}
		\tr\paren{ (\widehat{\Sigma}- \Sigma)^2} &= \tr \paren{ \widehat{\Sigma}^2 - \widehat{\Sigma}\Sigma - \Sigma \widehat{\Sigma} + \Sigma^2} = \ee{\phi,\phi' \sim \widehat{\mbp}}{\inner{\phi,\phi'}_\cH^2} - 2\ee{\phi\sim \mbp ,\phi' \sim \widehat{\mbp}}{\inner{\phi,\phi'}_\cH^2} + \tr \Sigma^2\\
		&= \frac{1}{D^2}\sum_{i,j=1}^D \inner{\phi_i,\phi_j}_\cH^2 - \frac{2}{D} \sum_{i=1}^{D} \ee{\phi}{\inner{ \phi_i, \phi}^2}_\cH + \tr \Sigma^2\,.
	\end{align*}
	Let us now take the expectation over the $\phi_i$-s. We get:
	\begin{align*}
		\e{	\tr (\widehat{\Sigma}- \Sigma)^2) } = \frac{1}{D} \ee{\phi \sim \mbp}{ \norm{\phi}^4_\cH} + \tr \Sigma^2 \paren{ \frac{D(D-1)}{D^2} - 2 +1} \leq \frac{ \ee{\phi \sim \mbp}{ \norm{\phi}^4_\cH}}{D}
	\end{align*}
	We conclude by Jensen's inequality.
\end{proof}

\begin{lemma}\label{lem:approx_opnorm_rff}
	Let $G$ the covariance of $\phi_\Gamma(Z) \in \mbr^D$ defined by $G =  \ee{Z}{(\phi_\Gamma(Z) - \mu)^T(\phi_\Gamma(Z) - \mu)}$ with $\mu = \mu(\Gamma) = \ee{Z}{\phi_\Gamma(Z)}$. Then:
	\begin{equation}
		\norm{\Sigma}_{op} \leq	\ee{\Gamma}{ \norm{G}_{op}} \leq  \norm{\Sigma}_{op} + \sqrt{\frac{\tr \Sigma}{D}}\,,
	\end{equation}
	where $\Sigma : L^2(\mbp) \to L^2(\mbp)$ is the covariance operator of $\phi(Z) = k(Z,\cdot) \in \cH$.
\end{lemma}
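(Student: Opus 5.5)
The plan is to identify the spectrum of the $D\times D$ matrix $G$ with that of an empirical second-moment operator on $L^2(\mbp)$ built from $D$ i.i.d.\ random functions. Once this is done, the two-sided bound follows from Jensen's inequality (lower bound) and Lemma~\ref{lem:expect_opnorm} (upper bound).

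\textbf{Step 1 (reformulation).} For a feature $\gamma=(w,b)$, define the centered function $\psi_\gamma(z) = \sqrt{2}\cos(\inner{w,z}+b) - \ee{Z}{\sqrt{2}\cos(\inner{w,Z}+b)}$, viewed as an element of $L^2(\mbp)$. With $\psi_s := \psi_{\gamma_s}$, the entries of $G$ are $G_{st} = \frac{1}{D}\inner{\psi_s,\psi_t}_{L^2(\mbp)}$. Hence $G = AA^*$, where $A : L^2(\mbp)\to \mbr^D$ is $f\mapsto D^{-1/2}(\inner{\psi_s,f}_{L^2(\mbp)})_{s}$. Then $A^*A = \frac{1}{D}\sum_{s=1}^D \inner{\cdot,\psi_s}_{L^2(\mbp)}\psi_s =: \widehat{T}$, and $\norm{G}_{op} = \norm{\widehat{T}}_{op}$ because $AA^*$ and $A^*A$ share their nonzero spectrum.

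\textbf{Step 2 (identification of the population operator).} The $\psi_s$ are i.i.d.\ random elements of $L^2(\mbp)$. Their second-moment operator $T = \ee{\gamma}{\inner{\cdot,\psi_\gamma}\psi_\gamma}$ is the integral operator on $L^2(\mbp)$ with kernel $\ee{\gamma}{\psi_\gamma(z)\psi_\gamma(z')}$. Expanding the product and using $\ee{\gamma}{2\cos(\inner{w,z}+b)\cos(\inner{w,z'}+b)} = \kappa(z,z')$, this kernel is the centered kernel
\[
\bar\kappa(z,z') = \kappa(z,z') - \ee{Z}{\kappa(Z,z')} - \ee{Z'}{\kappa(z,Z')} + \ee{Z,Z'}{\kappa(Z,Z')} = \inner{\phi_\kappa(z)-\mu,\phi_\kappa(z')-\mu}_\cH .
\]
By the classical correspondence between integral operators and covariance operators (as in \citet{rosasco2010learning}, via the map $\cH\to L^2(\mbp)$, $h\mapsto \inner{h,\phi_\kappa(\cdot)-\mu}_\cH$), $T$ has the same nonzero spectrum as $\Sigma$. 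Consequently $\norm{T}_{op} = \norm{\Sigma}_{op}$, and $\tr T = \ee{\gamma}{\norm{\psi_\gamma}^2_{L^2(\mbp)}} = \ee{Z}{\bar\kappa(Z,Z)} = \tr\Sigma$. I expect this identification to be the main obstacle. The delicate points are the centering, which is with respect to $Z$ and not $\gamma$, and the fact that the spaces $\cH$ and $L^2(\mbp)$ differ. I would handle both by writing $T = \iota\iota^*$ and $\Sigma = \iota^*\iota$ for the operator $\iota:\cH\to L^2(\mbp)$ given above.

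\textbf{Step 3 (conclusion).} For the lower bound, $\widehat{T}$ is an average of i.i.d.\ terms with $\ee{\Gamma}{\widehat{T}} = T$, so convexity of $\norm{\cdot}_{op}$ and Jensen's inequality give $\ee{\Gamma}{\norm{G}_{op}} \ge \norm{T}_{op} = \norm{\Sigma}_{op}$.

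For the upper bound, I apply Lemma~\ref{lem:expect_opnorm} with $\phi_s = \psi_s$. Its proof only uses the second-moment identity $\tr(\widehat{T}T) = \frac{1}{D}\sum_s \ee{\psi}{\inner{\psi_s,\psi}^2}$ from Lemma~\ref{lem:trace}, which holds verbatim for second-moment operators of non-centered elements. This yields $\ee{\Gamma}{\norm{G}_{op}} \le \norm{\Sigma}_{op} + \sqrt{\ee{\gamma}{\norm{\psi_\gamma}^4}/D}$.

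It remains to bound the fourth moment. Since $\norm{\psi_\gamma}^2_{L^2(\mbp)} = \mathrm{Var}_Z\paren{\sqrt2\cos(\inner{w,Z}+b)}$ is bounded by an absolute constant, we get $\ee{\gamma}{\norm{\psi_\gamma}^4} \lesssim \ee{\gamma}{\norm{\psi_\gamma}^2} = \tr\Sigma$. This gives the claim up to an absolute constant. To obtain the stated constant exactly, I would try to sharpen the variance bound to $\norm{\psi_\gamma}^2 \le 1$ using $\kappa(z,z)\le 1$. Failing that, the constant would be absorbed in the $C$ of Theorem~\ref{cor:kmerff}, where this lemma is used.
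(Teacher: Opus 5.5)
Your argument is the paper's argument. You pass from the Gram matrix $G$ to the empirical second-moment operator of the $D$ i.i.d.\ random functions $\psi_s\in L^2(\mathbb{P})$, identify its expectation with $\Sigma$ through the integral-operator/covariance correspondence of \citet{rosasco2010learning}, and conclude with Lemma~\ref{lem:expect_opnorm}. Steps 1--3 are correct and somewhat cleaner than the paper's write-up:
\begin{itemize}
\item $G=AA^*$ versus $\widehat T=A^*A$ replaces the paper's exchange of suprema;
\item your computation of $\bar\kappa$ handles the $Z$-centring properly, which the paper's display for $\mathbb{E}_\Gamma\widehat\Sigma v$ does not;
\item $T=\iota\iota^*$, $\Sigma=\iota^*\iota$ makes the spectral identification explicit;
\item the lower bound is actually proved, by Jensen.
\end{itemize}
Incidentally, $\mathbb{E}_\gamma\psi_\gamma=0$ because $\mathbb{E}_b\cos(x+b)=0$, so Lemma~\ref{lem:expect_opnorm} applies verbatim.

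The one loose end is the constant, and the sharpening you propose would fail. The bound $\kappa(z,z)\le 1$ only controls the average over $\gamma$ of $2\cos^2(\langle w,z\rangle+b)$, not each feature. A single feature can have $\|\psi_\gamma\|^2_{L^2(\mathbb{P})}=2$: take $\mathbb{P}$ uniform on $\{z_0,z_1\}$ and $(w,b)$ with $\langle w,z_0\rangle+b=0$ and $\langle w,z_1\rangle+b=\pi$.

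Even the averaged inequality $\mathbb{E}_\gamma\|\psi_\gamma\|^4\le\tr\Sigma$ is false in general, and that is what Lemma~\ref{lem:expect_opnorm} would need to deliver the constant $1$. Take the same two-point $\mathbb{P}$, a Gaussian kernel, and let $\|z_0-z_1\|\to\infty$. The two phases become independent uniforms, so $\tr\Sigma\to 1/2$ while $\mathbb{E}_\gamma\|\psi_\gamma\|^4\to 9/16$.

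What your route actually proves uses only $\|\psi_\gamma\|^2_{L^2(\mathbb{P})}\le \mathbb{E}_Z[2\cos^2(\langle w,Z\rangle+b)]\le 2$. It gives $\mathbb{E}_\Gamma\|G\|_{op}\le\|\Sigma\|_{op}+\sqrt{2\tr\Sigma/D}$. The paper's proof has exactly the same gap: it concludes by asserting that $\sqrt{D}\bar\phi_1$ is bounded by $1$, which is false for the $\sqrt{2}\cos$ features. Neither argument establishes the constant $1$ as stated.

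Your fallback is the right resolution. The lemma is only used in the proof of Theorem~\ref{cor:kmerff}, where the extra factor $\sqrt{2}$ is absorbed into $C$.
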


\begin{proof}
	Let us first remark that $G$ is a Gram matrix of vectors of $L^2(\mbp)$. Let us first denote $\phi_{\Gamma}(z) = \paren{\phi(\gamma_i,Z) }_{i=1}^D$, $\mu_i = \ee{Z\sim \mbp}{\phi(\gamma_i,z) }$ and $\bar{\phi}_i(\cdot) = \bar{\phi}(\gamma_i,\cdot)-\mu_i$. Then for any $i,j \in \intr{D}$:
	\begin{equation*}
		G_{ij} = \ee{Z}{(\phi(\gamma_i,Z) -m_i)(\phi(\gamma_j,Z) - m_j) } := \ee{Z}{\bar{\phi}_i(Z) \bar{\phi}_j(Z) } = \inner{\bar{\phi}_i,\bar{\phi}_j }_{L^2(\mbp)} 
	\end{equation*}
	Then:
	\begin{align*}
		\norm{G}_{op} = \sup_{ \norm{u}_D=1} u^TGu = \sup_{ \norm{u}_D=1} \norm[3]{ \sum_{i=1}^{D} u_i \bar{\phi}_i }^2_{L^2(\mbp)} = \sup_{ \norm{u}_D=1} \sup_{ \norm{v}_{L^2(\mbp)}=1}\inner[3]{ v,  \sum_{i=1}^{D} u_i \bar{\phi}_i}^2_{L^2(\mbp)} \,,
	\end{align*}
	where the second supremum is taken over the unit ball of $L^2(\mbp)$. We can invert the supremums and use the symmetry: 
	\begin{align*}
		\norm{G}_{op} &=  \sup_{ \norm{v}_{L^2(\mbp)}=1}  \sup_{ \norm{u}_D=1} \paren[3]{ \sum_{i=1}^{D} u_i\inner{ v,   \bar{\phi}_i}_{L^2(\mbp)}}^2 =  \sup_{ \norm{v}_{L^2(\mbp)}=1} \paren[3]{ \sup_{ \norm{u}_D=1} \sum_{i=1}^{D} u_i\inner{ v,   \bar{\phi}_i}_{L^2(\mbp)}}^2 \\
		&= \sup_{ \norm{v}_{L^2(\mbp)}=1} \sum_{i=1}^{D} \inner{v ,\bar{\phi}_i }^2_{L^2(\mbp)} = \sup_{ \norm{v}_{L^2(\mbp)}=1} \inner{v, \widehat{\Sigma} v }_{L^2(\mbp)}= \norm[1]{\widehat{\Sigma}}_{op},
	\end{align*}
	where $\widehat{\Sigma}$ is the empirical covariance operator defined by:
	\begin{equation*}
		\widehat{\Sigma} :
		\begin{cases}
			L^2(\mbp) & \rightarrow  L^2(\mbp) \\
			v & \mapsto \frac{1}{D}\sum_{i=1}^{D} \sqrt{D}\bar{\phi}_i \inner{v,\sqrt{D}\bar{\phi}_i}_{L^2(\mbp)} \,. \\
		\end{cases}
	\end{equation*}
	Effectively, we have that for any $v \in \cH$ and $z\in \cZ$:
	\begin{align*}
		\ee{\Gamma}{ \widehat{\Sigma}v(z)}& = D\ee{ \Gamma, Z}{ \phi_1(z) v(Z)(\phi_1(Z)-\mu_1)} \\
		&= \ee{Z}{ v(Z) \ee{\Gamma}{ D\phi_1(z)\phi_1(Z)}} - \ee{Z,Z'}{ v(Z) \ee{\Gamma}{ D\phi_1(z)\phi_1(Z')}} \\
		&= \ee{Z}{v(Z)\kappa(Z,z)} - \ee{Z,Z'}{ v(Z) \kappa(z,Z')} \\
		& = \ee{Z}{\inner{ \kappa(Z,\cdot),v}_\cH \kappa(Z,z)} - \inner{v , \mu_\mbp } \mu_\mbp(z)\,,
	\end{align*}
	where we recognize the covariance operator. %As $\cH$ is dense in $L^2(\mbp)$ \citep[Theorem 4.26]{steinwart2008support}, the operator norm coincides. \todo{Attention y'a peut-être des subtilités}
	We then use Lemma~\ref{lem:expect_opnorm}:
	\begin{equation*}
		\e{ \norm{G}_{op}} = \e{ \norm[1]{\widehat{\Sigma}}_{op}} \leq \e{ \norm[1]{\Sigma}_{op}}+ \sqrt{\frac{\ee{\Gamma}{ \norm[1]{ \sqrt{D} \bar{\phi}_1}^4_{L^2(\mbp)}}}{D}}\,.
	\end{equation*}
	We conclude using that $\sqrt{D} \bar{\phi}_1$ is upper bounded by $1$ and that $\ee{\Gamma}{ \norm[1]{ \sqrt{D} \bar{\phi}_1}^2_{L^2(\mbp)}}= \tr \Sigma$.
	\begin{remark}
		We have used that the spectrum of the covariance operator $\Sigma$ in $L^2(\mbp)$ and $\cH$ are the same \citep[Proposition 8]{rosasco2010learning}.
	\end{remark}
	
\end{proof}

\section{Proofs}\label{APPsec:proofs}

This section provides detailed proofs for the theoretical results outlined in the main paper.

\subsection{Proof of Example~\ref{ex:linear_regression_rkhs}}
Let us recall that $\ell_{(\alpha,\beta)}((x,y)) = \paren{ \inner{x,\alpha}+\beta -y }^2$. If $\beta \neq 0$, then $\ell_{(\alpha,\beta)}(\cdot) = \beta^2 \kappa\paren{ (\alpha/\beta,-1/\beta),\cdot}$. For $\beta = 0$:
\begin{equation*}
	\ell_{(\alpha,\beta)}((x,y)) = \paren{ \inner{x,\alpha} -y }^2 = \frac{1}{2}\kappa\paren{ (\alpha,-1), (x,y)} + \frac{1}{2}\kappa\paren{ (-\alpha,1), (x,y)} - \kappa\paren{0,(x,y)}
\end{equation*}
\qed
\subsection{Proof of Lemma~\ref{lem:excessrisk_mmd}}

\textbf{Upper bound.}
As $\ell_\theta(\cdot) = c_\theta + h_\theta(\cdot)$ by assumption, we get, for any $\theta \in \Theta$:
\begin{equation*}
	\sum_{k=1}^{B}\widehat{\omega}_k \widehat{\cR}_k(\theta) - \cR_1(\theta) = 	\sum_{k=1}^{B}\widehat{\omega}_k \ee{Z\sim \widehat{\mbp}_k}{h_\theta(Z)} - \ee{Z\sim \mbp_1}{h_\theta(Z)} =  \ee{Z\sim \widehat{\mbp}(\widehat{\omega})}{h_\theta(Z)} - \ee{Z\sim\mbp_1}{h_\theta(Z)}\,,
\end{equation*}
where $\widehat{\mbp}(\widehat{\omega}) = \sum_{k=1}^B \widehat{\omega}_k \widehat{\mbp}_k$. It follows, by definition of the MMD (Eq.~\eqref{eq:MMD_def}):
\begin{equation*}
	\sum_{k=1}^{B}\widehat{\omega}_k \widehat{\cR}_k(\theta) - \cR_1(\theta) \leq \norm{h_\theta}_\cH \MMD \paren{\widehat{\mbp}(\widehat{\omega}), \mbp_1}\,,
\end{equation*}
which leads to \eqref{eq:generror_to_mmd}.

\textbf{Lower bound.}
Similarly as above, for any $\theta \in \Theta$:
\begin{align*}
	\widehat{\cR}_1(\theta) - \cR_1(\theta) =  \ee{Z\sim \widehat{\mbp}_1}{h_\theta(Z)} - \ee{Z\sim\mbp_1}{h_\theta(Z)} = \inner{ h_\theta , \widehat{\mu}_1 - \mu_1}_\cH\,.
\end{align*}
As $\set{ h \in \cH : \norm{h}_\cH =r}  \subset \set{h_\theta}_{\theta \in \Theta}$, there exists $\theta \in \Theta$ such that $h_\theta = r\frac{\widehat{\mu}_1 - \mu_1}{\norm{\wh{\mu}_1 - \mu_1}_\cH}$. It follows:
\begin{align*}
	\e{ \sup_{\theta \in \Theta} 	\paren{\widehat{\cR}_1(\theta) - \cR_1(\theta)}^2 } \geq r^2 \e{ \norm{ \widehat{\mu}_1 - \mu_1}^2_\cH} = r^2\frac{\tr \Sigma_1}{n_1}\,,
\end{align*}
which concludes the proof. \qed
\subsection{Proof of Theorem~\ref{prop:kme_aggregation}}

We begin by restated a general result of \citet{blanchard2024estimation} on the Q-aggregation method (Algorithm~\ref{alg:Qaggregation}). It is adapted from Eq. (90) p.62 of the proof of Theorem~3 of this work.

\begin{theorem}[\citealp{blanchard2024estimation}, restated]\label{thm:oracle_ineq_BS}
	Let $u_0 \geq 2\log BN_1$, $\wh{\omvect}$ be the output of Algorithm~\ref{alg:Qaggregation} for $\{Z_i^{(1)}\}_{i=1}^{n_1}$ a sample of $\mbp_1$, $C_Q^2,C_p \geq C_0u_0$, $\widehat{\mu}_k = n_k^{-1}\sum_{i=1}^{n_k} Z_i^{(k)}$ the empirical means of i.i.d. samples of distribution $\mbp_k$. Assume that all the distributions are bounded by $M$, then
	\begin{align}
		\e[3]{\norm[2]{ \sum_{k=1}^{B} \widehat{\omega}_k \widehat{\mu}_k - \mu_1}_\cH^2} \leq& \min_{\omvect \in \Spx_B} \brac[3]{
			R_1(\omvect) +C\sqrt{u_0} Q_1(\omvect) + \frac{C M u_0}{n_1} \sum_{k=2}^{B} \omega_k \paren[2]{ \|\mu_k-\mu_1\|+\sqrt{\frac{\tr \Sigma_k}{n_k} }} }\notag\\
		&+ C \frac{\sqrt{\tr \Sigma_1^2}}{n_1}u_0 + CM\frac{\sqrt{\tr \Sigma_1}}{N_1^{\nicefrac{3}{2}}}u_0+  C\frac{M^2}{N^2_1}u_0 ^2\,,  \label{al:thmbnd_oracle}
	\end{align}
	where $C>0$ is an absolute constant depending on $C_0$, $\Sigma_k$ is the covariance of $\mbp_k$, $n_k$ the sample size of sample $k$ and
	\begin{gather}
		R_1(\omvect) = \norm[3]{ \sum_{k=1}^{B}\omega_k(\mu_k - \mu_1)}^2_\cH + \sum_{k=1}^{B}\omega_k^2 \frac{\tr \Sigma_k}{n_k}\,, \label{eq:R1_def} \\
		Q(\omvect) = \frac{1}{\sqrt{n_1}}\sum_{k=2}^{B} \omega_k \sqrt{\inner[1]{\mu_1-\mu_k, \Sigma_1(\mu_1-\mu_k)}_\cH + \frac{\tr \Sigma_1\Sigma_k}{n_k}} \label{eq:Q1_def}
	\end{gather}
\end{theorem}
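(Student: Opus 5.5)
Since this is a restatement of the oracle inequality of \citet{blanchard2024estimation}, the most economical route is to re-derive their bound (Eq.~(90) in the proof of their Theorem~3) in our notation. The skeleton is the standard Q-aggregation argument. I would write $\widehat{\mu}(\omvect)=\sum_k\omega_k\widehat{\mu}_k$ and split the target error as
\[
\norm{\widehat{\mu}(\omvect)-\mu_1}^2_\cH=\widehat{L}_1(\omvect)-\frac{2\omega_1\tr\widehat{\Sigma}_1}{n_1}+2\inner{\widehat{\mu}(\omvect)-\widehat{\mu}_1,\widehat{\mu}_1-\mu_1}_\cH+\norm{\widehat{\mu}_1-\mu_1}^2_\cH .
\]
Since $\omvect\mapsto\norm{\widehat{\mu}(\omvect)-\mu_1}^2$ is quadratic, the fluctuation $\norm{\widehat{\mu}(\omvect)-\mu_1}^2-\widehat{L}_1(\omvect)$ is, up to the constant $\norm{\widehat{\mu}_1-\mu_1}^2$ and the trace-estimation term, affine in $\omvect$. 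It is a weighted sum over $k\ge2$ of cross terms $2\inner{\widehat{\mu}_k-\widehat{\mu}_1,\widehat{\mu}_1-\mu_1}_\cH$, corrected by $2\tr\widehat{\Sigma}_1/n_1$, which is exactly what makes $\widehat{L}_1$ unbiased. Controlling this quantity uniformly over the simplex therefore reduces to controlling it at the $B$ vertices. This is where the penalties $\widehat{Q}_1$ and $\widehat{P}_1$ enter, and the main work is to show that, with high probability, they dominate the vertex deviations.

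For each fixed $k$, I condition on sample $k$. The term $\inner{\widehat{\mu}_k-\mu_1,\widehat{\mu}_1-\mu_1}$ is then an average of $n_1$ i.i.d.\ centered bounded real variables. Their variance is $\inner{v,\Sigma_1 v}/n_1$ with $v=\widehat{\mu}_k-\mu_1$, and they are bounded by $M\norm{v}$. Bernstein's inequality at level $e^{-u}$ gives a deviation of order $\sqrt{u\inner{v,\Sigma_1v}/n_1}+uM\norm{v}/n_1$. Averaging over sample $k$, $\e{\inner{v,\Sigma_1v}}=\inner{\mu_k-\mu_1,\Sigma_1(\mu_k-\mu_1)}+\tr(\Sigma_1\Sigma_k)/n_k$; this is the source of the $Q_1$ and $M u_0/n_1$ terms. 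The pure-$\widehat{\mu}_1$ part $\norm{\widehat{\mu}_1-\mu_1}^2-\tr\widehat{\Sigma}_1/n_1$ is a degenerate U-statistic. I would bound it with a Bernstein inequality for Hilbert-valued U-statistics, or decoupling plus Hanson--Wright, which yields the terms $\sqrt{\tr\Sigma_1^2}\,u/n_1+M\sqrt{\tr\Sigma_1}u/n_1^{3/2}+M^2u^2/n_1^2$.

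Next I need the empirical penalties to upper bound the true deviation scales while being of the same order as their population versions. Concretely, $\widehat{Q}_1(e_k)^2=\inner{\widehat{\mu}_1-\widehat{\mu}_k,\widehat{\Sigma}_1(\widehat{\mu}_1-\widehat{\mu}_k)}/n_1$ must satisfy $\sqrt{u}\,\widehat{Q}_1(e_k)+\frac{Mu}{n_1}\norm{\widehat{\mu}_k-\widehat{\mu}_1}\gtrsim$ the deviation at $e_k$, and also $\lesssim Q_1(e_k)+\frac{M u}{n_1}(\norm{\mu_k-\mu_1}+\sqrt{\tr\Sigma_k/n_k})$ up to the lower-order terms above. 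This requires concentrating the quadratic form $\inner{v,\widehat{\Sigma}_1 v}$ around $\inner{v,\Sigma_1 v}$ from both sides, with an additive slack $M^2\norm{v}^2u/n_1$, which is then absorbed by the $\widehat{P}_1$ term. I expect this two-sided control to be the main obstacle, since $\widehat{\Sigma}_1$ and $\widehat{\mu}_1$ share the same sample. My first attempt would be leave-one-out or sample-splitting style decoupling, combined with a Bernstein bound on the $n_1$ terms $\inner{\Phi_i-\mu_1,v}^2$, which are bounded by $4M^2\norm{v}^2$.

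Finally, I take a union bound over the $B$ vertices with $u=u_0\ge2\log(Bn_1)$, so that the failure probability is at most $1/n_1^2$. On the good event, the penalties are nonnegative and the choice $C_Q^2,C_P\ge C_0u_0$ ensures they absorb the deviations. The minimizing property of $\widehat{\omvect}$ then gives, for any fixed $\omvect$,
\[
\norm{\widehat{\mu}(\widehat{\omvect})-\mu_1}^2\le \widehat{L}_1(\omvect)+C_Q\widehat{Q}_1(\omvect)+C_P\widehat{P}_1(\omvect)+\text{(fluctuation at }\omvect\text{)}+\text{lower-order terms}.
\]
Taking expectations, using $\e{\widehat{L}_1(\omvect)}=R_1(\omvect)$ up to the $\omega_1$ bookkeeping, and using the upper comparison of the empirical to population penalties (with Jensen to bring the expectation inside the square root) yields the bracket in \eqref{al:thmbnd_oracle}. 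On the complementary event, the error is at most $4M^2$ and is multiplied by a probability of at most $n_1^{-2}$, so it is absorbed into $CM^2u_0^2/n_1^2$. Minimizing over $\omvect$ concludes.
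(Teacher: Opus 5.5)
The paper does not prove this statement: it imports the bound from Eq.~(90) in the proof of Theorem~3 of \citet{blanchard2024estimation}. Its own work starts only afterwards, with Lemma~\ref{lem:bound_trace} and the explicit choice of $\omega$. Your proposal is therefore a self-contained reconstruction, and its skeleton is the standard, correct one:
\begin{itemize}
\item $\|\widehat{\mu}(\omega)-\mu_1\|^2-\widehat{L}_1(\omega)$ is affine in $\omega$, and the penalties are linear, so vertex control suffices;
\item Bernstein's inequality conditionally on sample $k$ produces exactly the $Q_1$ and $Mu_0/n_1$ scales;
\item the union bound with $u_0\ge 2\log(Bn_1)$ pushes the bad event into $M^2u_0^2/n_1^2$;
\item Jensen handles the passage from empirical to population penalties.
\end{itemize}
Compared with the citation, this buys an explicit account of where each term comes from. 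One caveat concerns the statement itself. Since $C_Q\,\mathbb{E}\widehat{Q}_1(\omega)$ and $C_P\,\mathbb{E}\widehat{P}_1(\omega)$ appear on the right, an absolute $C$ also requires $C_Q^2,C_P\lesssim u_0$. So ``$\ge C_0u_0$'' must be read as ``$\asymp C_0u_0$''.

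The step you leave open is easier than you expect. Sample splitting is not an option: Algorithm~\ref{alg:Qaggregation} computes $\widehat{\mu}_1$ and $\widehat{\Sigma}_1$ from the same sample, and that is the estimator you must analyse. Write $w=\widehat{\mu}_k-\mu_1$ and $\delta=\widehat{\mu}_1-\mu_1$. Taken literally, your domination claim is false: if $\widehat{\mu}_k$ equalled $\widehat{\mu}_1$, both penalties at $e_k$ would vanish while the deviation $2\langle w,\delta\rangle$ would equal $2\|\delta\|^2$. What holds is domination up to a remainder that does not depend on $k$. It is obtained as follows.
\begin{itemize}
\item Let $\widetilde{\Sigma}_1$ be the covariance estimator centred at $\mu_1$, so that $\widehat{\Sigma}_1=\widetilde{\Sigma}_1-\frac{n_1}{n_1-1}\delta\otimes\delta$.
\item Conditionally on sample $k$, Bernstein on the bounded nonnegative variables $\langle X_i-\mu_1,w\rangle^2$ gives $\langle w,\Sigma_1w\rangle\le 2\langle w,\widetilde{\Sigma}_1w\rangle+CM^2\|w\|^2u_0/n_1$.
\item The rank-one correction $\langle w,\delta\rangle^2$ is controlled by the Bernstein bound you already have, and is absorbed once $n_1\gtrsim u_0$. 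Otherwise the theorem is trivial, since then $M^2u_0^2/n_1^2\gtrsim M^2$.
\item Pass from $w$ to $\widehat{\mu}_k-\widehat{\mu}_1=w-\delta$ by the triangle inequality for the seminorm $h\mapsto\langle h,\widehat{\Sigma}_1h\rangle^{1/2}$ and for $\|\cdot\|$.
\end{itemize}
This yields
\[
2\langle w,\delta\rangle\le C_Q\widehat{Q}_1(e_k)+C_P\widehat{P}_1(e_k)+R,\qquad R=C\sqrt{u_0/n_1}\,\langle\delta,\widehat{\Sigma}_1\delta\rangle^{1/2}+Cu_0M\|\delta\|/n_1 .
\]
Because $R$ does not depend on $k$, it survives the $\widehat{\omega}$-weighted sum. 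Since $\mathbb{E}\langle\delta,\widehat{\Sigma}_1\delta\rangle\le\tr\Sigma_1^2/n_1+CM^2\tr\Sigma_1/n_1^2$, $\mathbb{E}R$ falls inside the lower-order terms.

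For the upper comparison no concentration is needed. With $\widehat{v}=\widehat{\mu}_k-\widehat{\mu}_1$, you have $\langle\widehat{v},\widehat{\Sigma}_1\widehat{v}\rangle\le 2\langle w,\widehat{\Sigma}_1w\rangle+2\langle\delta,\widehat{\Sigma}_1\delta\rangle$. By unbiasedness and independence, $\mathbb{E}[\langle w,\widehat{\Sigma}_1w\rangle\mid\text{sample }k]=\langle w,\Sigma_1w\rangle$.

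Finally, the degenerate U-statistic $U=\|\delta\|^2-\tr\widehat{\Sigma}_1/n_1$ enters the comparison of $\widehat{\omega}$ with $\omega$ only through the term $2U(\widehat{\omega}_1-\omega_1)$. Hence the second-moment bound $\mathbb{E}|U|\le\big(2\tr\Sigma_1^2/(n_1(n_1-1))\big)^{1/2}$ suffices, and no Bernstein inequality for U-statistics is required.
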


The objective is to follow the proof of Theorem~3 of \citet{blanchard2024estimation} and adapt it using the context that the estimated quantities are KMEs, by using following Lemma~\ref{lem:bound_trace} to simplify the bound and the assumptions of the Q-aggregation method.
\begin{lemma}\label{lem:bound_trace}
	Let $\kappa$ a kernel constant over the diagonal ($k(x,x) = M^2, \forall x$). Let $\mbp$ and $\mbq$ two distributions and $\Sigma_\mbp$ and $\Sigma_\mbq$ their respective covariance operators in the RKHS. Then:
	\begin{equation}
		\abs{ \tr \Sigma_\mbp - \tr \Sigma_\mbq } \leq 2M\norm{ \mu_\mbp - \mu_\mbq}_\cH.
	\end{equation}
\end{lemma}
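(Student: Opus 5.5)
We will prove the bound by writing both traces through the kernel and then comparing the resulting expressions via the KMEs. The starting point is the identity used in Lemma~\ref{lem:trace_RFF}: for the pushforward by $\phi_\kappa$,
\[
\tr \Sigma_\mbp = \ee{Z\sim\mbp}{\norm{\phi_\kappa(Z)}_\cH^2} - \norm{\mu_\mbp}_\cH^2 = \ee{Z\sim\mbp}{\kappa(Z,Z)} - \norm{\mu_\mbp}_\cH^2 .
\]
Because $\kappa$ is constant on the diagonal, the first term equals $M^2$ for every distribution. We therefore get $\tr\Sigma_\mbp = M^2 - \norm{\mu_\mbp}^2_\cH$, and likewise for $\mbq$. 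This is the crucial simplification: in the KME setting, all dependence of the trace on the distribution goes through the mean embedding.

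Subtracting the two identities gives
\[
\abs{\tr\Sigma_\mbp - \tr\Sigma_\mbq} = \abs[1]{\norm{\mu_\mbq}_\cH^2 - \norm{\mu_\mbp}_\cH^2} = \abs[1]{\inner{\mu_\mbq - \mu_\mbp, \mu_\mbq + \mu_\mbp}_\cH} \le \norm{\mu_\mbp-\mu_\mbq}_\cH\,\paren{\norm{\mu_\mbp}_\cH + \norm{\mu_\mbq}_\cH},
\]
by Cauchy--Schwarz. It remains to bound each $\norm{\mu_\mbp}_\cH$ by $M$. By Jensen's inequality (Bochner integral), $\norm{\mu_\mbp}_\cH \le \ee{Z\sim\mbp}{\norm{\phi_\kappa(Z)}_\cH} = \ee{}{\sqrt{\kappa(Z,Z)}} = M$. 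The same holds for $\mbq$, so the product is at most $2M\norm{\mu_\mbp-\mu_\mbq}_\cH$, which is the claim.

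No step is a real obstacle. The only points to check are measurability and Bochner integrability of $\phi_\kappa(Z)$ and the trace identity for the covariance operator in infinite dimension. Both follow from boundedness of the kernel, which makes $\Sigma_\mbp$ trace class, together with Parseval applied to an orthonormal basis, exactly as in the proof of Lemma~\ref{lem:trace}. The constant-diagonal hypothesis is used only to cancel the $\e{\kappa(Z,Z)}$ terms; it holds for translation-invariant kernels such as the Gaussian kernel.
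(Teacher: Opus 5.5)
Your proof is correct and follows the paper's argument essentially step for step: you use the identity $\operatorname{tr}\Sigma_{\mathbb{P}} = M^2 - \|\mu_{\mathbb{P}}\|^2$, factor the difference of squared norms, apply Cauchy--Schwarz, and bound $\|\mu_{\mathbb{P}}\| \le M$. The only cosmetic difference is in that last bound: you get it from Jensen's inequality for the Bochner integral, whereas the paper expands $\|\mu_{\mathbb{P}}\|^2 = \mathbb{E}\,\kappa(X,X')$ over independent copies and uses $\kappa(X,X') \le \sqrt{\kappa(X,X)\kappa(X',X')}$.
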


\begin{proof}
	Let us first remark that:
	\begin{equation*}
		\tr \Sigma_\mbp = \ee{X \sim \mbp}{\norm{X-\mu_{\mbp}}^2_\cH}= \e{k(X,X)} - \norm{\mu_\mbp}^2_\cH = M^2 -  \norm{\mu_\mbp}^2_\cH\,.
	\end{equation*}
	Then 
	\begin{align*}
		\abs{ \tr \Sigma_\mbp - \tr \Sigma_\mbq } \leq \abs{ \norm{\mu_\mbp}^2_\cH-  \norm{\mu_\mbq}^2_\cH } &= \abs{ \inner{ \mu_\mbp, \mu_\mbp - \mu_\mbq}_\cH + \inner{ \mu_\mbp - \mu_\mbq, \mu_\mbq}_\cH} \\
		&\leq \paren{ \norm{\mu_\mbp}_\cH + \norm{\mu_\mbq}_\cH} \norm{ \mu_\mbq - \mu_\mbq}_\cH\,.
	\end{align*}
	We conclude using that the norms of the KMEs are bounded by M:
	\begin{align*}
		\norm{\mu_\mbp}^2_\cH = \e{\inner{ k(X,\cdot),k(X',\cdot)}_\cH} \leq \e{\norm{ k(X,\cdot)}_\cH \norm{k(X',\cdot)}_\cH} = \e{\sqrt{k(X,X)k(X',X')}} \leq M^2\,.
	\end{align*}
\end{proof}

\paragraph{Proof of Theorem~\ref{prop:kme_aggregation}}

According to Theorem~\ref{thm:oracle_ineq_BS}, we have for any $\omvect \in \Spx_B$:

\begin{align}
	\e{ \norm{\wh{\mu}_{\wh{\omvect}} - \mu_1}^2_\cH} &\leq R_1(\omvect) + C \sqrt{u_0} Q(\omvect) +\frac{Cu_0}{n_1} \sum_{k=2}^{B} \omega_k \paren{ \|\mu_k-\mu_1\|_\cH+ \sqrt{\frac{\tr \Sigma_k}{n_k}}} \notag\\
	&+ C\frac{\sqrt{\tr \Sigma_1^2}}{n_1}u_0 + C\frac{\sqrt{\tr \Sigma_1}}{n_1^{3/2}}u_0+  C\frac{u_0 ^2}{n_1^2}\,,  \label{al:eqaux1}
\end{align}
where $R_1$ and $Q_1$ are defined respectively in \eqref{eq:R1_def} and \eqref{eq:Q1_def}.
%	\begin{gather*}
	%		R_1(\omvect) = \norm{ \sum_{k=1}^{B}\omega_k(\mu_k - \mu_1)}^2_\cH + \sum_{k=1}^{B}\omega_k^2 \frac{\tr \Sigma_k}{n_k}\,, \\
	%		Q(\omvect) = \frac{1}{\sqrt{n_1}}\sum_{k=2}^{B} \omega_k \sqrt{\inner{\mu_1-\mu_k, \Sigma_1(\mu_1-\mu_k)}_\cH + \frac{\tr \Sigma_1\Sigma_k}{n_k}}
	%	\end{gather*}
By remarking that:
\begin{equation}\label{eq:boundQproof}
	Q(\omvect) \leq \sqrt{\frac{\norm{\Sigma_1}_{op}}{n_1}} \sum_{k=2}^{B}\omega_k\paren[2]{ \|\mu_k-\mu_1\|_\cH+\sqrt{\tr \Sigma_k/n_k} }\,,
\end{equation}
it remains to choose a weight $\omega$ to bound effectively this quantity and $R_1(\omvect)$.

Let $V$ a subset of agents. We first fix $\omega_k = 0$ for $k \notin V$. Then using Lemma~\ref{lem:bound_trace}:
\begin{align*}
	%	R_1(\omvect) \leq (1-\omega_1)^2\Delta^2 + \omega_1^2 \frac{\tr \Sigma_1}{n_1} + \sum_{k\in V_\Delta, k\neq 1} \frac{\omega_k^2}{n_k}\paren{\tr \Sigma_1+ 2 \Delta}\,.
	R_1(\omvect) \leq (1-\omega_1)^2\Delta^2 + \omega_1^2 \frac{\tr \Sigma_1}{n_1} + \sum_{k\in V, k\neq 1} \frac{\omega_k^2}{n_k}\paren{\tr \Sigma_1+ 2 \Delta}\,.
\end{align*}
where $\Delta = \Delta_V = \max_{k \in V} \norm{\mu_1-\mu_k}_\cH$.
Let us then choose:
\begin{equation*}
	\omega_1 = \frac{\Delta^2 + \frac{\tr \Sigma_1 + 2\Delta}{n_V -n_1}}{\Delta^2 + \frac{\tr \Sigma_1 + 2\Delta}{n_V - n_1} + \frac{\tr \Sigma_1}{n_1}}
	\,, \quad \omega_k = (1-\omega_1) \frac{n_k}{n_V - n_1}\,,
\end{equation*}
Then:
\begin{align*}
	R_1(\omvect) &\leq (1-\omega_1)^2\brac{\Delta^2 + \frac{\tr \Sigma_1 + 2\Delta}{n_V - n_1} } + \omega_1^2 \frac{\tr \Sigma_1}{n_1} \\
	& = \frac{\tr \Sigma_1}{n_1}\frac{\Delta^2 + \frac{\tr \Sigma_1 + 2\Delta}{n_V -n_1}}{\Delta^2 + \frac{\tr \Sigma_1 + 2\Delta}{n_V - n_1} + \frac{\tr \Sigma_1}{n_1}} \\
	& \leq \Delta^2 + \frac{\tr \Sigma_1}{n_1} \frac{\frac{\tr \Sigma_1 + 2\Delta}{n_V -n_1}}{ \frac{\tr\Sigma_1 n_V}{n_1(n_V - n_1)}}\\
	& \leq \Delta^2 + \frac{\tr \Sigma_1}{n_V} + \frac{2\Delta}{n_V}\,.
\end{align*}
For the same weights $\omvect$ we now bound \eqref{eq:boundQproof}. Firstly
\begin{align*}
	\sum_{k=2}^{B}\omega_k \norm{\mu_k - \mu_1}_\cH \leq \Delta (1-\omega_1) \leq \Delta\frac{\frac{\tr\Sigma_1}{n_1}}{\Delta^2 + 0 + \frac{\tr \Sigma_1}{n_1}} \leq \min \paren{ \Delta, \sqrt{\frac{\tr \Sigma_1}{n_1}}}\,.
\end{align*}
For the second part we get:
\begin{align*}
	\sum_{k=2}^{B} \omega_k \sqrt{\frac{\tr \Sigma_k}{n_k}} \leq \sqrt{\tr \Sigma_1 + 2\Delta} \sum_{k=2}^{B} \frac{\omega_k}{\sqrt{n_k}} = \sqrt{\tr \Sigma_1 + 2\Delta} (1-\omega_1) \frac{\sum_{k\in V, k\neq 1} \sqrt{n_k}}{n_V - n_1}.
\end{align*}
Let us first remark that:
\begin{equation}
	\sqrt{\tr \Sigma_1 + 2\Delta} \leq \sqrt{\tr \Sigma_1} + \frac{\Delta}{\sqrt{\tr \Sigma_1}}
	%\min \paren{\sqrt{2\Delta}, \frac{\Delta}{\sqrt{\tr \Sigma_1}}}
\end{equation}
and that, 
%Using that $(1-\omega_1) \leq \frac{n_\Delta-n_1}{n_\Delta}$ and
by concavity
\begin{equation*}
	\sum_{k\in V, k\neq 1} \sqrt{n_k} \leq \sqrt{(|V| -1)(n_V - n_1)}.
\end{equation*} 
As $(1-\omega_1) \leq \frac{n_V-n_1}{n_V}$, we get:
\begin{equation*}
	\sqrt{\tr \Sigma_1}(1-\omega_1) \frac{\sqrt{(|V| -1)(n_V - n_1)}}{n_V-n_1} \leq \sqrt{\tr \Sigma_1} \sqrt{\frac{(|V| -1)}{n_V}}\,.
\end{equation*}
%	%Let us now bound $(1-\omega_1)\sqrt{\Delta}$:
Using that 
\begin{align*}
	(1-\omega_1)\frac{\Delta}{\sqrt{\tr \Sigma_1}} &= \frac{\sqrt{\tr \Sigma_1}}{n_1} \frac{\Delta}{\Delta^2 + \frac{2\Delta}{n_V - n_1} + \frac{\tr \Sigma_1n_V}{n_1(n_V-n_1)}}
	%		&= \frac{n_V - n_1}{n_V} \frac{\sqrt{\Delta}}{\Delta^2 \frac{n_1(n_V - n_1)}{\tr \Sigma_1 n_V} + 2\Delta\frac{2n_1}{n_V \tr \Sigma_1} +1} \\
	%		& \leq  \frac{n_V - n_1}{n_V} \paren{\frac{\tr \Sigma_1 n_V}{n_1(n_V - n_1)} }^{1/4}\,,
	\leq \frac{\sqrt{\tr \Sigma_1}}{n_1} \sqrt{\frac{n_1(n_V - n_1)}{\tr \Sigma_1 n_V}} = \sqrt{\frac{n_V - n_1}{n_1 n_V}}
\end{align*}
%using Lemma~\ref{lem:technical1} for the last inequality. Then combining these two bounds leads to:\todo{missing argument here}
and again the concavity, we obtain:
\begin{equation}
	\sum_{k=2}^{B}\omega_k \sqrt{\frac{\tr \Sigma_k}{n_k}} \leq \sqrt{\frac{|V| -1}{n_V}}  \brac{ \sqrt{\tr \Sigma_1} + \frac{1}{\sqrt{n_1}}} .%\leq \sqrt{\frac{\tr \Sigma_1}{n_1}} + \frac{1}{n_1}\,,
\end{equation}
%where we have used that $\frac{|V_\Delta| -1}{n_V}\leq n_1^{-1}$\todo{wrong :(}.
We can now plug all the bound into \eqref{al:eqaux1}:
\begin{align}
	\e{ \norm{\wh{\mu}_{\wh{\omvect}} - \mu_1}^2_\cH} &\leq   \brac{\Delta^2 + \frac{\tr \Sigma_1}{n_V} + \frac{2\Delta}{n_V}}+ C \brac{ \sqrt{\frac{\norm{\Sigma_1}_{op}u_0}{n_1}} + \frac{u_0}{n_1} } \sqrt{\frac{\tr \Sigma_1}{n_1}} \\
	&+C \brac{ \sqrt{\frac{\norm{\Sigma_1}_{op}u_0}{n_1}} + \frac{u_0}{n_1} }\sqrt{\frac{|V| -1}{n_V}}  \brac{ \sqrt{\tr \Sigma_1} + \frac{1}{\sqrt{n_1}}}  \notag\\
	% &+C \brac{ \sqrt{\frac{\norm{\Sigma_1}_{op}u_0}{n_1}} + \frac{u_0}{n_1} } \frac{1}{n_1}  \notag\\
	&+ C\frac{\sqrt{ \norm{\Sigma_1}_{op}\tr \Sigma_1}}{n_1}u_0 + C\frac{\sqrt{\tr \Sigma_1}}{n_1^{3/2}}u_0+  C\frac{u_0 ^2}{n_1^2}\,.  \label{al:eqaux2}
\end{align}
After combining the terms and upper bounding $\norm{\Sigma_1}_{op} \leq \tr \Sigma_1 \leq 1$ we get:
\begin{align*}
	\e{ \norm{\wh{\mu}_{\wh{\omvect}} - \mu_1}^2_\cH} &\leq   \brac{\Delta^2 + \frac{\tr \Sigma_1}{n_V} + \frac{2\Delta}{n_V}} 
	+ C\frac{\sqrt{ \norm{\Sigma_1}_{op}\tr \Sigma_1}}{n_1}u_0 +  C\frac{u_0}{n_1^{\nicefrac{3}{2}}}+  C\frac{u_0 ^2}{n_1^2} \\
	% C\frac{\sqrt{\tr \Sigma_1}}{n_1}u_0 \max \paren[3]{\frac{1}{\sqrt{n_1}}, \sqrt{\frac{|V_\Delta|-1}{n_V}}}+  C\frac{u_0 ^2}{n_1^2} \\
	& \leq \brac{\Delta^2 + \frac{\tr \Sigma_1}{n_V} + \frac{2\Delta}{n_V}}
	+ \frac{Cu_0}{\sqrt{n_1}} \max\paren[3]{ \sqrt{\frac{|V|-1}{n_V}}, \frac{1}{\sqrt{n_1}}} \max \paren[3]{ \frac{\tr \Sigma_1}{\sqrt{d^e_1}},\frac{u_0}{\sqrt{n_1} }}\,.
\end{align*}
\qed
\subsection{Proof of Corollary~\ref{thm:kernel}}

Assume $\widehat{\theta} \in \argmin_{\theta \in \Theta} \sum_{k=1}^{B} \widehat{\omega}_k \frac{1}{n_k} \sum_{i=1}^{n_k} \ell_\theta(Z_i^{(k)})$, we neglect the optimization error. Then:
\begin{align*}
	\e[1]{\cR^{(1)}_{\widehat{\theta}}} = \e{\inner{ \ell_{\widehat{\theta}}, \mu_1}_\cH} 
	=  \e{\inner{ \ell_{\widehat{\theta}}, \mu_1 - \widehat{\mu}}_\cH + \inner{ \ell_{\widehat{\theta}}, \widehat{\mu}}_\cH }\,.  
\end{align*}
As for any $\theta$, $\inner{ \ell_{\theta}, \widehat{\mu}}_\cH  =\sum_{k=1}^{B} \widehat{\omega}_k \frac{1}{n_k} \sum_{i=1}^{n_k} \ell_{\theta}(Z_i^{(k)})$, we get by definition of $\widehat{\theta}$ that $\inner{ \ell_{\widehat{\theta}}, \widehat{\mu}}_\cH \leq \inner{ \ell_{\theta^*}, \widehat{\mu}}_\cH$. Using again that $\cR^{(1)}_{\theta^*} =  \inner{ \ell_{\theta^*}, \mu_1}_\cH$, it follows:
\begin{align*}
	\e[1]{\cR^{(1)}_{\widehat{\theta}}} - \cR^{(1)}_{\theta^*} &\leq \e{ \inner{ \ell_{\widehat{\theta}}, \mu_1 - \widehat{\mu}}_\cH + \inner{ \ell_{\theta^*}, \widehat{\mu} - \mu_1}_\cH} \\
	& \leq \e{ \paren{ \norm{\ell_{\hat{\theta}} }_\cH + \norm{\ell_{\theta^*} }_\cH }\norm{\widehat{\mu}- \mu_1}_\cH} \\
	&\leq 	2\sup_{\theta \in \Theta} \norm{\ell_\theta}_{\cH}  \e{ \norm{ \mu_1 - \widehat{\mu}}_{\cH}}\,,
\end{align*}
thanks to Cauchy-Schwartz inequality. \qed

\subsection{Proof of Theorem~\ref{cor:kmerff}}

By combining Lemma~\ref{lem:excessrisk_mmd} and Equation~\eqref{eq:approx_err_to_excess_risk}, we know that controlling the MMD distance between the empirical mixture to $\mbp_1$ leads to a control of the excess risk. Let us control this quantity.

From Lemma~\ref{lem:approx_mmd_rff}, applied conditionally to the datasets $\cD$ to $\mbp_k  \leftarrow \widehat{\mbp}_k$ and $\mbq \leftarrow \mbp_1$, we first have that:
\begin{equation}
	\ee[3]{\Gamma,\cD}{ \MMD^2_\kappa \paren[2]{ \sum_{k=1}^{B} \omega_k^\Gamma \widehat{\mbp}_k,\mbp_1}} \leq \ee[3]{\Gamma,\cD}{ \norm[2]{ \sum_{k=1}^{B} \omega_k^\Gamma \wh{\mu}_{k}^\Gamma - \mu_1^\Gamma}_{\HFF}^2 }+ C\sqrt{\frac{\log B}{D}}\,.
\end{equation}
We can then apply Theorem~\ref{thm:kernel} in the random Fourier features RKHS $\HFF$, so conditionally to $\Gamma$. For any subset $V \subset \intr{B}$:
\begin{align}\label{al:corkmerff_aux1}
	\ee[3]{\cD}{ \norm[2]{ \sum_{k=1}^B \wh{\omega}^\Gamma_k \wh{\mu}^\Gamma_k - \mu^\Gamma_1}^2_\cH |\Gamma } &\leq \brac{\Delta^2_{\Gamma,V} + \frac{\tr \Sigma^\Gamma_1+2 \Delta_{\Gamma,V}}{n_V}} \notag\\
	&+ \frac{Cu_0}{\sqrt{n_1}} \max\paren[3]{\sqrt{\frac{|V|-1}{n_V}}, \frac{1}{\sqrt{n_1}}} \max \paren{ \frac{\tr \Sigma^\Gamma_1}{\sqrt{d^{e,\Gamma}_1}},\frac{u_0}{\sqrt{n_1} }}\,,
\end{align}
where $\Sigma_1^\Gamma$ is the covariance operator of the random Fourier features $\phi_\Gamma(Z)$ for $Z\sim \mbp_1$, $d^{e,\Gamma}_1$ is its effective dimension and $\Delta_{\Gamma,V} = \max_{k \in V} \norm{\mu^\Gamma_{\mbp_k} - \mu^\Gamma_{\mbp_1}}_{\HFF}$.

Using Lemma~\ref{lem:trace_RFF}, we have $\ee{\Gamma}{\tr \Sigma_1^\Gamma} = \tr \Sigma_1$, and, using Jensen's inequality and Lemma~\ref{lem:expect_opnorm}:
\begin{align*}
	\ee{\Gamma}{ \frac{\tr \Sigma^\Gamma_1}{\sqrt{d^{e,\Gamma}_1}} } = \ee{\Gamma}{ \sqrt{\tr \Sigma^\Gamma_1 \norm{\Sigma^\Gamma_1}_{op}}} \leq \sqrt{\tr \Sigma_1\ee{\Gamma}{\norm{\Sigma^\Gamma_1}_{op} }} \leq \sqrt{\tr \Sigma_1}\sqrt{ \norm{\Sigma_1}_{op} + \sqrt{\frac{\tr \Sigma_1}{D}}}\,.
\end{align*} 
Using that $\sqrt{a^2+b} \leq a + b/(2a)$ for $a,b\geq 0$, we obtain that:
\begin{equation}
	\ee{\Gamma}{ \frac{\tr \Sigma^\Gamma_1}{\sqrt{d^{e,\Gamma}_1}} } \leq \sqrt{\tr \Sigma_1}\paren{ \norm{\Sigma_1}_{op}^{\nicefrac{1}{2}} + C\sqrt{\frac{d^e_1}{D}}} \leq \frac{\tr \Sigma_1}{\sqrt{d_1^e}} + C\sqrt{\frac{d^e_1}{D}}\,,
\end{equation}
using that $\tr \Sigma_1 $ is upper bounded by $1$ since the kernel is upper bounded by $1$.

It remains to control $\Delta_{\Gamma,V}$. Using Lemma~\ref{lem:approx_kme_rff} combined with an union bound over the $B$ agents, with probability at least $1-e^{-u}$, for $u\geq 0$:
\begin{equation}
	\Delta_{\Gamma,V} \leq \max_{k \in V_\Delta} \norm{\mu_{\mbp_k} - \mu_{\mbp_1}}_{\cH} + C \sqrt{\frac{u\log B}{D}} =\Delta_V + C \sqrt{\frac{u\log B}{D}}\,,
\end{equation}
where $\Delta_V = \sup_{k\in V} \MMD(\mbp_1,\mbp_k)$. It follows that
\begin{align*}
	\ee{\Gamma}{\Delta_{\Gamma,V}^2 + \frac{\tr \Sigma^\Gamma_1+2 \Delta^\Gamma}{n_V}}\leq \Delta_V^2 + \frac{\tr \Sigma_1+2 \Delta_V}{n_V} + C \sqrt{\frac{\log B}{D}}\,.
\end{align*}
using that $\Delta_V \leq 1$. Combining the two upper bounds leads to the result.
\qed

\end{document}